\documentclass{article}

\PassOptionsToPackage{numbers,sort&compress}{natbib}
\usepackage[preprint]{neurips_2026}

\usepackage[utf8]{inputenc}
\usepackage[T1]{fontenc}
\usepackage{hyperref}
\usepackage{url}
\usepackage{booktabs}
\usepackage{amsfonts}
\usepackage{nicefrac}
\usepackage[activate={true,nocompatibility},final,tracking=true,kerning=true,spacing=true,factor=1100,stretch=10,shrink=10,auto=false,protrusion=false,expansion=false]{microtype}
\usepackage{xcolor}
\usepackage{amsmath,amssymb,amsthm,mathtools}
\usepackage{graphicx}
\usepackage{wrapfig}
\usepackage{bm}
\usepackage{enumitem}
\usepackage{physics}
\usepackage{cleveref}
\usepackage{aliascnt} 

\newtheorem{theorem}{Theorem}

\newaliascnt{proposition}{theorem}
\newtheorem{proposition}[proposition]{Proposition}
\aliascntresetthe{proposition}

\newaliascnt{lemma}{theorem}
\newtheorem{lemma}[lemma]{Lemma}
\aliascntresetthe{lemma}

\newaliascnt{corollary}{theorem}
\newtheorem{corollary}[corollary]{Corollary}
\aliascntresetthe{corollary}

\theoremstyle{definition}

\newaliascnt{definition}{theorem}
\newtheorem{definition}[definition]{Definition}
\aliascntresetthe{definition}

\newaliascnt{conjecture}{theorem}

\aliascntresetthe{conjecture}

\theoremstyle{remark}
\newtheorem*{remark}{Remark}

\crefname{theorem}{theorem}{theorems}
\Crefname{theorem}{Theorem}{Theorems}

\crefname{proposition}{proposition}{propositions}
\Crefname{proposition}{Proposition}{Propositions}

\crefname{lemma}{lemma}{lemmas}
\Crefname{lemma}{Lemma}{Lemmas}

\crefname{corollary}{corollary}{corollaries}
\Crefname{corollary}{Corollary}{Corollaries}

\crefname{definition}{definition}{definitions}
\Crefname{definition}{Definition}{Definitions}

\crefname{conjecture}{conjecture}{conjectures}
\Crefname{conjecture}{Conjecture}{Conjectures}

\newcommand{\R}{\mathbb{R}}
\newcommand{\E}{\mathbb{E}}

\usepackage{xspace}
\newcommand{\F}[1]{\|#1\|_F^2}

\DeclareMathOperator{\diag}{diag}

\title{A Theory of Saddle Escape in Deep Nonlinear Networks}

\author{%
  Divit Rawal\\
  Department of Physics\\
  University of California, Berkeley\\
  Berkeley, CA 94720\\
  \texttt{divit.rawal@berkeley.edu}
  \And
  Michael R. DeWeese\\
  Department of Physics\\
  Department of Neuroscience\\
  Redwood Center\\
  University of California, Berkeley\\
  Berkeley, CA 94720\\
  \texttt{deweese@berkeley.edu}
}

\begin{document}
\maketitle

\begin{abstract}
In deep networks with small initialization, training exhibits long plateaus separated by sharp feature-acquisition transitions. Whereas shallow nonlinear networks and deep linear networks are well studied, extending these analyses to deep nonlinear networks remains challenging. We derive an exact identity for the imbalance of Frobenius norms of layer weight matrices that holds for any smooth activation and any differentiable loss and use this to classify activation functions into four universality classes. On the permutation-symmetric submanifold, the identity combines with an approximate balance law to reduce the full matrix flow to a scalar ODE, giving a critical-depth escape time law $\tau_\star = \Theta(\varepsilon^{-(r-2)})$ governed by the number $r$ of layers at the bottleneck scale rather than the total depth $L$. We find that this same $r-2$ exponent is recovered under He-normal initialization with $r$ bottleneck layers rescaled by $\varepsilon$, where the symmetry manifold is preserved by the flow but not attracting. 
We find close agreement between our theory and numerical simulations. 
\end{abstract}

\section{Introduction}

In deep nonlinear networks with all or some layers initialized at small scale, training exhibits saddle-structured dynamics characterized by long plateaus and sharp transitions corresponding to successive feature activation \cite{atanasov2025optimizationlandscapesgdfeature, simon2023stepwisenatureselfsupervisedlearning}. Recent work has formalized this picture in several ways, including saddle-to-saddle descriptions of training dynamics \cite{Abbe2023SGDLO}, staircase-style complexity acquisition \cite{abbe2024mergedstaircasepropertynecessarynearly}, neural-race reductions for gated networks \cite{saxe2022neuralracereductiondynamics}, alternating feature-learning dynamics in two-layer networks \cite{kunin2025alternatinggradientflowstheory}, and frequency-ordered learning phenomena \cite{rahaman2019spectralbiasneuralnetworks}. These perspectives are largely descriptive or confined to shallow settings, and do not give a sharp dynamical mechanism for the transitions between stages in deep networks with smooth nonlinear activations. General saddle-escape results \cite{jin2017escapesaddlepointsefficiently} bound worst-case escape times under generic perturbations but do not use the symmetries that govern stagewise learning in overparameterized deep networks.

Exact theories of learning dynamics fall into three lines of work: the deep-linear line, giving closed-form mode-wise solutions with depth-dependent plateaus \cite{saxe2014exactsolutionsnonlineardynamics, arora2018optimizationdeepnetworksimplicit, arora2019implicitregularizationdeepmatrix}; the saddle-to-saddle line, ordering low-rank saddles in deep linear and diagonal-linear networks via balancing and small initialization, with ReLU extensions \cite{jacot2022saddletosaddledynamicsdeeplinear, pesme2023saddletosaddledynamicsdiagonallinear, kumar2024directionalconvergencenearsmall, bantzis2026saddletosaddledynamicsdeeprelu}; and the teacher-student line, reducing high-dimensional dynamics to low-dimensional order parameters under SGD and in mean-field or lazy limits \cite{Fukumizu1998EffectOB, biehl95, saad95, advani2017highdimensionaldynamicsgeneralizationerror, Goldt_2020, Mei_2018, arous2021onlinestochasticgradientdescent, arous2023highdimensionallimittheoremssgd, arnaboldi2024escapingmediocritytwolayernetworks, chizat2020lazytrainingdifferentiableprogramming}. Much of this work relies on linear or $1$-homogeneous structure, shallow architectures, or infinite-width limits.

\paragraph{This work.} We identify a single scalar quantity --- the number $r$ of layers at the small initialization scale, not the total depth $L$ --- that sets the plateau escape time $\tau_\star = \Theta(\varepsilon^{-(r-2)})$ in deep networks with smooth activations for a single-mode teacher at initialization scale $\varepsilon$. We derive an exact identity for the drift of the layer imbalance, valid for any loss function and activation. A single functional $\varphi_\sigma(z) = z \sigma'(z) - \sigma(z)$ (the pointwise failure of Euler's identity for $\sigma$) groups activations by the order $q$ of the first nonlinear term in their Taylor expansion (recovering the linear case of \cite{2ef2de6a2b3348659c562afdd7b46eac, saxe2014exactsolutionsnonlineardynamics}).

Two theoretical arguments predict the same $r{-}2$ exponent. On the permutation-symmetric submanifold, the normalized-metric identity of \Cref{app:preconditioned-identity} combines with an approximate balance law to reduce the matrix flow to a one-dimensional integral. Off the manifold at He-normal initialization \cite{he2015delvingdeeprectifierssurpassing}, we work directly with the coordinate-free signal energy $\gamma(W) = \E[f\,g]$; an AM-GM bound on the layerwise gradient tensors gives the same exponent. The agreement is evidence that $r$ is a property of the flow, not the reduction.

\paragraph{Contributions.}
\begin{itemize}[nosep]
	\item We derive a nonlinear analog of the layer-imbalance conservation law of deep-linear networks, valid for any smooth activation and any differentiable loss, and show that a single functional of the activation classifies deep networks into four dynamical regimes.
	\item We reduce gradient flow on the permutation-symmetric submanifold to a one-dimensional integral, and derive a critical-depth escape-time law $\tau_\star = \Theta(\varepsilon^{-(r-2)})$ set by the number of bottleneck layers rather than the total depth.
	\item We lift the $r{-}2$ exponent off the manifold at He-normal initialization with $r$ bottleneck layers rescaled by $\varepsilon$ via a signal-energy argument on the scalar $\gamma(W) = \E[f g]$, which is independent of any ansatz; this gives the scaling in a regime where the manifold is not attracting.
\end{itemize}

\paragraph{Setup.} We study $L$-layer fully connected feedforward networks $f(x) = W_L \sigma(W_{L-1} \sigma(\cdots \sigma(W_1 x)))$ with a smooth activation $\sigma: \R \to \R$, weight matrices $W_l \in \R^{n_l \times n_{l-1}}$ of width $n_0 = d$, $n_1 = \cdots = n_{L-1} = N$, and scalar output $n_L = 1$. We write $z_l \doteq W_l h_{l-1}$ for the pre-activations and $h_l \doteq \sigma(z_l)$ for the post-activations with $h_0 = x$. Inputs are Gaussian $x \sim \mathcal N(0, I_d)$ and the loss is $\mathcal L(W) = \tfrac{1}{2} \E_{x,y}[(f(x) - y)^2]$. \Cref{thm:identity,thm:offmanifold} use uniform Euclidean flow $\dot W_l=-\nabla_{W_l}\mathcal L$; the scalar-manifold results and corresponding experiments use the normalized flow $\dot W_1=-\nabla_{W_1}\mathcal L$, $\dot W_l=-N^{-1}\nabla_{W_l}\mathcal L$ ($l\ge2$). The target is a single-hidden-layer teacher with $r^*$ orthonormal modes $y = \sum_{k=1}^{r^*} \beta_k \sigma(v_k^\top x)$, $\beta_1 > \beta_2 > \cdots > \beta_{r^*} > 0$, producing the characteristic sequence of $r^*$ saddles separating the stagewise modes. \Cref{sec:scalar,sec:escape} specialize to the single-mode case $r^* = 1$; the multi-mode teacher returns in \Cref{sec:cascade}.

\section{Imbalance Identity and Activation Classes}
Before proceeding to the scalar reduction of the gradient flow ODE, we present an exact identity that holds for the full network, for any smooth activation, and for any differentiable loss. It controls the evolution of the layer imbalance $\Delta_l \doteq \F{W_{l+1}} - \F{W_l}$ and identifies a functional of the activation that dictates whether layer norms are conserved, drift at cubic order, or drift at quadratic order.

\begin{theorem}[Imbalance identity]\label{thm:identity}
	Let $\sigma\in C^1(\R)$, let $\mathcal L = \E_{(x,y)}[\ell(f(x;W),y)]$ for any differentiable per-example loss $\ell$, and consider the gradient flow $\dot W_l = -\nabla_{W_l}\mathcal L$. Define
	\begin{equation}\label{eq:euler-deficit}
		\varphi_\sigma(z)    \doteq  z  \sigma'(z) - \sigma(z),
	\end{equation}
	extended entrywise to vectors. Then for each $l\in\{1,\dots,L-1\}$,
	\begin{equation}\label{eq:identity}
		\dv{\Delta_l}{t}  =  2 \E\left[\bigl\langle W_{l+1}^\top \nabla_{z_{l+1}}\ell,  \varphi_\sigma(z_l)\bigr\rangle\right],
	\end{equation}
	where $\nabla_{z_{l+1}}\ell$ denotes the per-example gradient of $\ell(f(x; W), y)$ with respect to the pre-activation $z_{l+1}(x; W)$, and the expectation is over $(x, y)$.
\end{theorem}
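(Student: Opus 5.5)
Under the flow, $\tfrac{d}{dt}\F{W_l} = 2\langle W_l,\dot W_l\rangle = -2\langle W_l,\nabla_{W_l}\mathcal L\rangle$. So the plan is to compute the two "radial derivatives" $\langle W_{l+1},\nabla_{W_{l+1}}\mathcal L\rangle$ and $\langle W_l,\nabla_{W_l}\mathcal L\rangle$ per example, using backpropagation, and compare them. Write $g_{k}\doteq\nabla_{z_k}\ell$ for the per-example backpropagated gradient. The chain rule gives $\nabla_{W_k}\ell = g_k h_{k-1}^\top$, so
\begin{equation*}
\langle W_k,\nabla_{W_k}\ell\rangle=\operatorname{tr}(W_k^\top g_k h_{k-1}^\top)=\langle g_k, W_k h_{k-1}\rangle=\langle g_k,z_k\rangle .
\end{equation*}
Differentiability of $\ell$ and $\sigma\in C^1$ justify the chain rule. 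I will assume enough integrability (e.g.\ dominated convergence) to exchange $\nabla_W$ with $\E$. This step is the only mildly technical point, and I would simply state it as a standing assumption. With that exchange, $\dv{\Delta_l}{t}=2\E[\langle g_l,z_l\rangle-\langle g_{l+1},z_{l+1}\rangle]$.

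Next I relate the two layers through one backprop step: $g_l=\sigma'(z_l)\odot(W_{l+1}^\top g_{l+1})$. Hence
\begin{equation*}
\langle g_l,z_l\rangle=\bigl\langle W_{l+1}^\top g_{l+1},\, z_l\odot\sigma'(z_l)\bigr\rangle ,
\end{equation*}
while
\begin{equation*}
\langle g_{l+1},z_{l+1}\rangle=\langle g_{l+1},W_{l+1}\sigma(z_l)\rangle=\bigl\langle W_{l+1}^\top g_{l+1},\,\sigma(z_l)\bigr\rangle .
\end{equation*}
Subtracting the second from the first gives $\langle W_{l+1}^\top g_{l+1},\,z_l\sigma'(z_l)-\sigma(z_l)\rangle=\langle W_{l+1}^\top g_{l+1},\varphi_\sigma(z_l)\rangle$ entrywise. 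Taking expectations yields \eqref{eq:identity}.

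There is no real obstacle here. The only care needed is with signs: $\Delta_l=\F{W_{l+1}}-\F{W_l}$, and the gradient-flow minus sign flips the order of the two terms, as the computation above shows. I would also remark as a sanity check that the identity reduces to exact conservation when $\varphi_\sigma\equiv0$, i.e.\ for linear (or $1$-homogeneous, a.e.) $\sigma$. The case $l=L-1$ is covered by the same computation with $z_L=f$.
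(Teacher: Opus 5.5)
Your proposal is correct and follows the paper's proof essentially step for step. Both use the rank-one chain rule with trace cyclicity to get $\langle W_k,\nabla_{W_k}\ell\rangle=\langle \nabla_{z_k}\ell, z_k\rangle$, then one backprop step and the Hadamard rearrangement to isolate $\varphi_\sigma(z_l)$ (your explicit integrability assumption for exchanging $\nabla_W$ and $\E$ is a point the paper uses only implicitly).
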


A full proof of \Cref{thm:identity} is deferred to  \Cref{app:identity-proof}. The identity also lifts to a matrix-valued statement for $W_{l+1}^\top W_{l+1} - W_l W_l^\top$ (\Cref{app:matrix-refinement}), recovering the deep linear conservation $W_{l+1}^\top W_{l+1} - W_l W_l^\top = \mathrm{const}$ \cite{min2022convergenceimplicitbiasgradient, ji2019gradientdescentalignslayers, arora2018optimizationdeepnetworksimplicit} as the special case $\varphi_\sigma\equiv 0$; a Noether-style view of how such symmetry-induced conservation is broken by discretization and weight decay appears in \cite{kunin2021neuralmechanicssymmetrybroken, tanaka2021noetherslearningdynamicsrole}.

Using the identity, we may now classify smooth activations $\sigma$ by two indices: whether $\sigma(0) = 0$, and the order $q \ge 2$ of the first nonlinear term in the Taylor expansion $\sigma(z) = a_0 + \alpha z + a_q z^q + O(z^{q+1})$ with $a_q \ne 0$. The parameter $q$ controls both the leading drift order (\Cref{prop:classification}) and the size of the first universality correction (\Cref{cor:universality}).

\begin{proposition}[Classification of smooth activations]\label{prop:classification}
	Let $\sigma \in C^\infty(\R)$ with its derivative $\sigma'(0) \ne 0$, write $q \in \{2, 3, \ldots\} \cup \{\infty\}$ for the order of the first nonlinear term of $\sigma$ at the origin, and assume $\sigma$ is odd, satisfies $\sigma(0) \ne 0$, or satisfies $\sigma(0) = 0$ with $\sigma''(0) \ne 0$. Then exactly one of the following holds:
	\begin{itemize}
		\itemsep0em
		\item[\textbf{(A)}] $q = \infty$ (linear): $\varphi_\sigma \equiv 0$ and $\Delta_l$ is exactly conserved.
		\item[\textbf{(B$_q$)}] $\sigma$ is odd and nonlinear, $q \in \{3, 5, 7, \ldots\}$: $\varphi_\sigma(z) = (q - 1)  a_q  z^q + O(z^{q+2})$. The generic subclass is $q = 3$.
		\item[\textbf{(C)}] $\sigma(0) = 0$, $q = 2$: $\varphi_\sigma(z) = a_2  z^2 + O(z^3)$.
		\item[\textbf{(D)}] $\sigma(0) \ne 0$: $\varphi_\sigma(0) = -\sigma(0) \ne 0$ (constant drift, formally $q = 0$).
	\end{itemize}
\end{proposition}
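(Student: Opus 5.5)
The plan is a direct Taylor computation of $\varphi_\sigma(z) = z\sigma'(z)-\sigma(z)$ at the origin, plus a case split on the hypotheses. Write the formal expansion $\sigma(z)=\sum_{k\ge0} a_k z^k$ with $a_1=\alpha=\sigma'(0)\neq0$. Then $z\sigma'(z)=\sum_k k a_k z^k$, so termwise
\begin{equation*}
\varphi_\sigma(z) \sim \sum_{k\ge 0} (k-1)\,a_k z^k = -a_0 + a_2 z^2 + 2a_3 z^3 + \cdots + (k-1)a_k z^k + \cdots .
\end{equation*}
The linear coefficient vanishes identically. This is the Euler-identity cancellation, and it is why only the constant term and the terms of order at least two survive. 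Since $\sigma\in C^\infty$, Taylor's theorem with remainder makes each truncation rigorous: if the first nonzero $a_k$ with $k\neq1$ has index $m$, then $\varphi_\sigma(z)=(m-1)a_m z^m+O(z^{m+1})$.

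\textbf{Case split.} The cases are separated by the value of $\sigma(0)$ and, when it vanishes, by parity and by $q$.

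(D) If $\sigma(0)\ne0$, then $\varphi_\sigma(0)=-a_0=-\sigma(0)\neq0$, and the identity of \Cref{thm:identity} has a constant-order drift.

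(C) If $\sigma(0)=0$ and $\sigma''(0)\ne0$, then $a_2=\sigma''(0)/2\neq0$, so $q=2$ and $\varphi_\sigma(z)=a_2z^2+O(z^3)$.

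(B$_q$) If $\sigma$ is odd, then $a_0=0$ and every even coefficient vanishes. So $q$ is odd, $q\ge3$, and the next possible term after $z^q$ is $z^{q+2}$. This gives $\varphi_\sigma(z)=(q-1)a_qz^q+O(z^{q+2})$, because $\varphi_\sigma$ is itself odd: $z\sigma'(z)$ is odd when $\sigma'$ is even.

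(A) If $q=\infty$ in the sense that $\sigma$ is affine, i.e.\ $\sigma''\equiv0$, then $\sigma(z)=a_0+\alpha z$, and the standing hypotheses force $a_0=0$ or class (D). In the linear case $\varphi_\sigma\equiv0$, so the right-hand side of \eqref{eq:identity} vanishes and $\Delta_l$ is exactly conserved.

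\textbf{Exclusivity.} The hypotheses are set up to make the cases exhaustive and mutually exclusive. Class (D) is the only case with $\sigma(0)\neq0$. Among the cases with $\sigma(0)=0$, an odd $\sigma$ has $\sigma''(0)=0$, so (B) and (C) are disjoint. Class (A) is distinguished from (B) and (C) by whether some $a_q$ with $q\ge2$ is nonzero.

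\textbf{Main obstacle.} The delicate step is the meaning of $q=\infty$ for a $C^\infty$ but non-analytic $\sigma$. All Taylor coefficients beyond the linear one can vanish without $\sigma$ being linear; for example, $z+e^{-1/z^2}$ is odd with $q=\infty$. I would handle this by interpreting (A) as ``$\sigma$ is linear,'' giving the exact conservation law, and noting that a flat but nonlinear $\sigma$ gives $\varphi_\sigma=O(z^k)$ for every $k$ without being identically zero. Such a $\sigma$ then has to be excluded, for instance by assuming analyticity or by treating it as a degenerate limit of (B$_q$). In the write-up I would state this explicitly rather than let it pass silently.

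Everything else in the argument is the termwise expansion above. The one point to flag is that the coefficient $(q-1)$ in (B$_q$) and the unit coefficient $(2-1)=1$ in (C) both come from the same factor $(k-1)$.
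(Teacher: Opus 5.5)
Your proposal follows the paper's proof essentially step for step: you compute the Taylor coefficients $(k-1)a_k$ of $\varphi_\sigma$, split on $\sigma(0)$, $\sigma''(0)$ and parity, and use oddness of $\varphi_\sigma$ for the $O(z^{q+2})$ remainder. Your flat-function caveat is legitimate, since the paper's exhaustiveness step simply asserts that an odd nonlinear $\sigma$ has a smallest odd $q\ge3$ with $a_q\neq0$, which tacitly assumes analyticity.

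Your example is wrong, though. Since $e^{-1/z^2}$ is even, $z+e^{-1/z^2}$ is not odd; it is the excluded case $\sigma(0)=\sigma''(0)=0$. The right example is $\sigma(z)=z+z\,e^{-1/z^2}$, which is odd, flat beyond the linear term, and has $\varphi_\sigma(z)=(2/z)\,e^{-1/z^2}\not\equiv0$. It therefore lies in none of (A)--(D).
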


Representative activations include (A) linear (and a.e.\ ReLU); (B$_3$) $\tanh$, $\mathrm{erf}$, $\sin$; (C) GELU, Swish; (D) sigmoid, softplus. We state all downstream results in terms of $q$ for generality, but most activations of interest are $q=3$. Proofs follow by Taylor expansion (\Cref{app:classification-proof}). We note that \Cref{thm:identity} couples $\varphi_\sigma(z_l)$ to a backprop factor, so global control of $\varphi_\sigma$ alone does not bound the drift; in the small-initialization regime this factor stays $\Theta(1)$, making $q$ the main structural prediction.

For the remainder of this work we specialize to Class B activations under a symmetric balanced ansatz where the restriction of \cref{eq:identity} to the ansatz manifold is sharper: the drift scales as $\varepsilon^{L+2}$ in the layer initialization scale $\varepsilon$, higher order than the layer scales themselves.

\section{Scalar Reduction on the Symmetric Manifold}\label{sec:scalar}

\Cref{thm:identity} is sharpest when evaluated on a flow-invariant manifold: the same type of reduction performed in deep-linear theory via $W_{l+1}^\top W_{l+1} - W_l W_l^\top \equiv 0$. Our Class B analog is the following.

\begin{definition}[Symmetric balanced ansatz]\label{def:ansatz}
	There exist scalars $X_1, \ldots, X_L \geq 0$ and a unit vector $\hat w \in \R^d$, the \emph{input direction}, such that every neuron in each layer $l$ shares the same row of the weight matrix, with scale $X_l$:
	\begin{itemize}
		\item Layer $1$: each row of $W_1$ equals $X_1 \hat w \in \R^d$.
		\item Layers $2 \le l \le L-1$: every entry of $W_l$ equals $X_l / N$.
		\item Output layer $L$: every entry of $W_L$ equals $X_L / \sqrt{N}$.
	\end{itemize}
\end{definition}

The ansatz is the $S_N$-fixed-point manifold of hidden-neuron permutation symmetry \cite{brea2019weightspacesymmetrydeepnetworks, simsek2021geometrylosslandscapeoverparameterized}: for a single-mode teacher with Gaussian inputs and population squared loss, the data, loss, and teacher are all invariant under permutation of the $N$ neurons in any hidden layer, so the gradient flow preserves the submanifold on which every neuron in a given layer shares one row. Flow-invariance (\Cref{app:sn-fixed-point}) and the scalar readout structure it forces make this the nonlinear counterpart of exact balance in deep-linear theory. The normalization absorbs the layer-$1$ width factor ($\F{W_1} = N X_1^2$, $\F{W_l} = X_l^2$ for $l \ge 2$), so the $X_\ell$ live on a common $\Theta(1)$ scale. On the manifold, the \emph{normalized scalar imbalance} $\widetilde\Delta_\ell \doteq \F{W_\ell}/c_\ell - \F{W_1}/c_1$ with $c_1 = N$, $c_\ell = 1$ ($\ell \ge 2$) reduces exactly to $X_\ell^2 - X_1^2$; this is the ansatz analog of the matrix imbalance of \cref{eq:identity}.

Throughout this section we fix a single-mode teacher $y = \beta_1 \sigma(v_1^\top x)$ with $\hat w = v_1$ and an activation $\sigma(u) = \alpha u + a_q u^q + O(u^{q+1})$, with $q \ge 2$ the order of the first nonlinear correction ($q = 3$ for Class B, $q = 2$ for Class C). The escape statements assume $K^{(\sigma)} = \beta_1 h_\sigma\alpha^{L-1}/\sqrt N>0$. Proofs of $S_N$-fixed-point and flow-invariance, and the descent to a scalar flow, are given in  \Cref{app:ansatz-invariance}.

\subsection{The Scalar Chain and the Exact Reduced ODE}

Under the ansatz, induction on $l$ shows that the forward pass collapses to a scalar composition (\Cref{app:ansatz-invariance}):
\begin{equation}\label{eq:chain}
	f(x) = \sqrt{N}  X_L  \sigma \bigl(X_{L-1}  \sigma(\cdots \sigma(X_1  g))\bigr), \qquad g \doteq \hat w^\top x.
\end{equation}

Because the ansatz is flow-invariant (\Cref{app:sn-fixed-point}), the normalized population flow on $W$ descends to the $L$ scalars $(X_1, \ldots, X_L)$. On the ansatz, $dX_\ell/ds=-N^{-1}\partial_{X_\ell}\mathcal L$ for every $\ell$; the reduced time is $t=s/N$. Thus the Euclidean scalar flow below is not induced by uniform-rate matrix flow.

\begin{theorem}[Exact ansatz-reduced ODE]\label{thm:exact-ode}
	Under \Cref{def:ansatz}, Gaussian inputs, aligned single-mode teacher, population squared loss, and the normalized-metric matrix flow above, the flow reduces exactly to the scalar-coordinate Euclidean flow on $(X_1,\ldots,X_L)$:
	\begin{equation}\label{eq:exact-ode}
		\dv{X_\ell}{t} = -\E_g\bigl[(f(g) - \beta_1  \sigma(g))  \partial_{X_\ell} f(g)\bigr], \qquad \ell = 1,\ldots,L,
	\end{equation}
	with $f$ as in  \cref{eq:chain} and the expectation over the scalar $g \sim \mathcal N(0,1)$.
\end{theorem}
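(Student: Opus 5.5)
The plan is to verify directly that, at any point on the ansatz manifold of \Cref{def:ansatz}, the normalized-metric matrix velocity is tangent to the manifold. I will then read off the induced velocities of the scalars $X_\ell$ and show they coincide with $-N^{-1}\partial_{X_\ell}\mathcal L$ in flow time $s$. The rescaling $t=s/N$ then turns this into \cref{eq:exact-ode}. Tangency plus uniqueness of solutions to the (locally Lipschitz, since $\sigma$ is smooth) matrix ODE gives exact invariance, so the reduction is exact rather than approximate.

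\textbf{Step 1 (forward and backward pass on the ansatz).} I take the scalar chain \cref{eq:chain} as given. Every hidden unit in layer $l$ has the same pre-activation $u_l(g)$, with $u_1=X_1 g$ and $u_{l}=X_l\sigma(u_{l-1})$. For the backward pass, the per-example error $e(g)=f-\beta_1\sigma(g)$ depends on $x$ only through $g=\hat w^\top x$. By backward induction from the output layer (whose entries are all equal), the backprop vector $\nabla_{z_l}\ell$ has identical entries $\delta_l(g)$ across the $N$ units. Consequently:
\begin{itemize}
\item for $2\le l\le L$, $\nabla_{W_l}\mathcal L=\E[\nabla_{z_l}\ell\, h_{l-1}^\top]$ has all entries equal, to a common value $c_l$;
\item for layer $1$, every row of $\nabla_{W_1}\mathcal L$ equals $\E[\delta_1(g)\,x]$.
\end{itemize}

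\textbf{Step 2 (direction preservation in layer $1$).} This is the one step needing more than bookkeeping, and I expect it to be the main obstacle. Decompose $x=g\hat w+x_\perp$, where $x_\perp$ is independent of $g$ and has mean zero. Then
\[
\E[\delta_1(g)\,x]=\E[\delta_1(g)\,g]\,\hat w,
\]
so each row's velocity is parallel to $\hat w$. Hence $\hat w$ stays fixed and only $X_1$ moves. The same independence argument lets every expectation collapse to one over $g\sim\mathcal N(0,1)$. The teacher is aligned ($\hat w=v_1$), so $y$ is also a function of $g$ alone. Tangency in the hidden and output layers is immediate from the equal-entries structure of Step 1.

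\textbf{Step 3 (matching scalar and matrix velocities).} I use the chain rule $\partial_{X_\ell}\mathcal L=\langle\nabla_{W_\ell}\mathcal L,\partial W_\ell/\partial X_\ell\rangle$, with $\partial W_\ell/\partial X_\ell$ equal to $\mathbf 1\hat w^\top$, $N^{-1}\mathbf 1\mathbf 1^\top$, and $N^{-1/2}\mathbf 1$ for $\ell=1$, $2\le\ell\le L-1$, and $\ell=L$ respectively.
\begin{itemize}
\item Layer $1$: write the common row coefficient as $c_1=\E[\delta_1 g]$. Then $\partial_{X_1}\mathcal L=Nc_1$, while $\dot X_1=-c_1=-N^{-1}\partial_{X_1}\mathcal L$.
\item Middle layers: $\partial_{X_\ell}\mathcal L=N^2c_\ell/N=Nc_\ell$. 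The entries evolve at rate $-N^{-1}c_\ell$, so $\dot X_\ell=N\cdot(-N^{-1}c_\ell)=-N^{-1}\partial_{X_\ell}\mathcal L$.
\item Output layer: $\partial_{X_L}\mathcal L=\sqrt N c_L$, and $\dot X_L=\sqrt N(-N^{-1}c_L)=-N^{-1}\partial_{X_L}\mathcal L$.
\end{itemize}
Thus $dX_\ell/ds=-N^{-1}\partial_{X_\ell}\mathcal L$ uniformly in $\ell$. This uniformity is exactly why the normalized metric, and not the uniform one, is required. Setting $t=s/N$ gives $\dot X_\ell=-\partial_{X_\ell}\mathcal L$. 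Differentiating $\mathcal L=\tfrac12\E_g[(f(g)-\beta_1\sigma(g))^2]$ under the expectation, which is justified by smoothness and Gaussian moments under polynomial growth of $\sigma$ and $\sigma'$, yields \cref{eq:exact-ode}.
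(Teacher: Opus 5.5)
Your proposal is correct and follows essentially the same route as the paper: the ansatz-invariance lemma (equal-entry gradients by backward induction, and the decomposition $x = g v_1 + x_\perp$ to keep layer-$1$ rows parallel to $v_1$), followed by layerwise matching of the velocities to get $dX_\ell/ds = -N^{-1}\partial_{X_\ell}\mathcal L$ and the rescaling $t = s/N$. Your appeal to uniqueness of solutions, to upgrade tangency to exact invariance, spells out a step the paper only asserts; it is not a different method.
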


Writing $c_0 \doteq g$, $c_1 \doteq X_1 g$, and $c_\ell \doteq X_\ell  \sigma(c_{\ell-1})$ for the scalar pre-activations of the chain, the chain rule along the composition gives the explicit closed form
\begin{equation}\label{eq:dxf-explicit}
	\partial_{X_\ell} f = \sqrt{N} \Bigl(\prod_{m > \ell} \sigma'(c_{m-1})  X_m\Bigr) \sigma(c_{\ell-1}), \qquad 1 \le \ell \le L,
\end{equation}
with the empty-product convention $\partial_{X_L} f = \sqrt{N}  \sigma(c_{L-1})$, so  \cref{eq:exact-ode} is fully explicit in the scalar parameters $X$ alone. The full derivation is given in  \Cref{app:ansatz-invariance}.

\begin{figure}[t]
\centering
\includegraphics[width=0.48\textwidth]{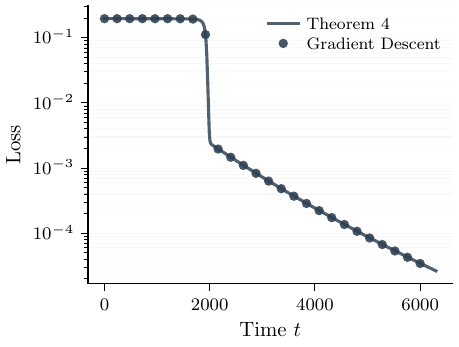}\hfill
\includegraphics[width=0.48\textwidth]{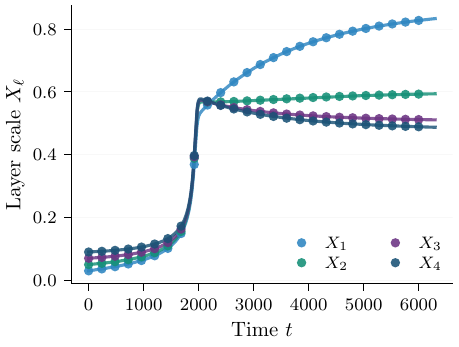}
\caption{\textbf{Empirical confirmation of ansatz reduction.} \textbf{(a)} Loss vs. time: reduced ODE (solid line) overlaid with empirical $NL$-parameter gradient descent (circles). \textbf{(b)} Layer scales $X_\ell$ versus time $t$: scalar ODE trajectories match the full-parameter dynamics.}
\label{fig:exactness}
\end{figure}

\subsection{Leading-Order Origin Expansion}

The scalar chain  \cref{eq:chain} is a composition of multiplications by $X_\ell$ and activations $\alpha u + O(u^q)$. Expanding around the origin, every monomial in $f(X, g)$ arises from choosing, at each of the $L - 1$ nonlinear activations, either the linear branch $\alpha u$ or a nonlinear insertion of order $\ge q$. To assist coordinate-wise expansion we shall use later, we record a useful fact: assigning weight $1$ to each $X_\ell$ and weight $0$ to $g$, a monomial whose \emph{first} nonlinear insertion occurs at the $j$-th activation carries weighted $X$-degree at least $L + (q - 1)  j$, minimized at $j = 1$. The cheapest nonlinear configuration is therefore a single insertion at the earliest activation, with all other activations taken linear.

The first nonlinear insertion contributes at $X$-degree 
\begin{equation}\label{eq:si-f}
	f(X, g) = \sqrt{N}  \alpha^{L-1} \Bigl(\prod_{m=1}^{L} X_m\Bigr) g + O(\|X\|^{L + q - 1}),
\end{equation}
and $\partial_{X_\ell} f = \sqrt{N}  \alpha^{L-1} (\prod_{m \ne \ell} X_m)  g + O(\|X\|^{L + q - 2})$. Substituting into $\dot X_\ell = \beta_1  \E[\sigma(g)  \partial_{X_\ell} f] - \E[f  \partial_{X_\ell} f]$, the teacher term contributes at order $L - 1$ via Stein's identity $h_\sigma \doteq \E_g[g  \sigma(g)] = \E_g[\sigma'(g)]$, while the self-interaction is subleading at $2L - 1$. The result is the leading-order expansion about the origin
\begin{equation}\label{eq:origin-ode}
	\dv{X_\ell}{t} = K^{(\sigma)} \prod_{m \ne \ell} X_m + R_\ell(X), \qquad K^{(\sigma)} \doteq \frac{\beta_1  h_\sigma  \alpha^{L-1}}{\sqrt{N}},
\end{equation}
with $|R_\ell(X)| = O(\|X\|^{\min(L + q - 2,  2L - 1)})$, simplifying to $O(\|X\|^{L+1})$ for Class~B activations. The leading drive $K^{(\sigma)} \prod_{m \ne \ell} X_m$ is identical across layers; the content of the reduction lies in the differences $X_\ell^2 - X_1^2$, where the $\prod_{m \ne \ell}$ structure survives into the first nonlinear correction and drives a further cancellation.

\paragraph{Approximate imbalance invariant.} The matrix imbalance of \cref{eq:identity} restricts to the scalar differences $D_\ell \doteq X_\ell^2 - X_1^2$, which along \cref{eq:exact-ode} drift at rate $O(\|X\|^{L+q-1})$ --- one order higher than the coordinatewise remainder of \cref{eq:origin-ode}: the degree-$(L{+}q{-}2)$ piece of $R_\ell$ inherits the $\prod_{m\ne\ell}$ structure of $\partial_{X_\ell} f$ and the $\ell$-independent scalar in front cancels in $2 X_\ell R_\ell - 2 X_1 R_1$ (for Class B the drift is thus $O(\|X\|^{L+2})$). This near-balance is what reduces the escape analysis to one dimension: the imbalance stays small over the escape window, so $Y \doteq X_1^2$ together with the fixed gaps $D_\ell$ closes a separable scalar ODE. The formal statement with proof and the normal-form extension are deferred to \Cref{app:approx-balance,app:normal-form}.

\section{Escape Laws and the Critical-Depth Exponent}
\label{sec:escape}

The near-balance above allows us to reduce the escape-time asymptotics of the $L$-dimensional reduced flow of \cref{eq:exact-ode} to a one-dimensional quadrature. Two distinct facts contribute. First, one-dimensional quadrature is generic: wherever the analytic reduced field is nonvanishing, the flow-box theorem \cite{calcaterra2006lipschitzflowboxtheorem} yields coordinates $(U, I_1, \ldots, I_{L-1})$ in which the flow transverses only $U$ while the $I_i$ are invariants, so the escape time is a one-dimensional integral. Second, near the small balanced ray the change of variables takes the specific near-identity shape $U_\ell = X_\ell + O(s^q)$, giving the balanced-product flow $\dot U_\ell = \Phi(U) \prod_{m \ne \ell} U_m$ whose invariants preserve the deep-linear balance law at leading order. Along any orbit the squared differences $U_\ell^2 - U_1^2$ are exactly constant in the normal-form coordinates, so with $U \doteq U_1$ and gaps $D_\ell \doteq U_\ell(0)^2 - U_1(0)^2$ the dynamics collapse to
\begin{equation}
\label{eq:1d-ode}
\frac{d U}{d t} = \Phi(U, D) \sqrt{\prod_{\ell \ge 2}(U^2 + D_\ell)}, \qquad \Phi(U,D) = K^{(\sigma)}\bigl(1 + \lambda U^{q-1} + \rho(U, D)\bigr),
\end{equation}
where $\Phi$ is the one-dimensional drive, $\rho(U, D) = O(U^q)$, and $\lambda$ is the analytic obstruction to full $1$-homogeneity of the reduced flow; the same coefficient that, against the $U^{-(L-1)}$ weight in the escape integrand, produces a resonance at $L = q+1$ where the first correction integrates to $\log(1/\varepsilon)$. The formal normal form, its construction, and the resonance analysis are developed in \Cref{app:normal-form}. Every closed form below is an evaluation of \cref{eq:1d-ode}.

\begin{theorem}[Asymptotic escape-time quadrature near the balanced ray]
\label{thm:escape}
Fix an initialization with $\|X\| \le \varepsilon$ and $X_1^0 > 0$, and let $t_\mathrm{esc}$ be the first time $X_1$ reaches $1$. Then
\begin{equation}
\label{eq:escape-integral}
t_\mathrm{esc} = \int_{U_0}^{1} \frac{d U}{\Phi(U, D) \sqrt{\prod_{\ell \ge 2}(U^2 + D_\ell)}}, \qquad U_0 \doteq U_1(0).
\end{equation}
Dropping the $\rho$ remainder and setting $\Phi \equiv K^{(\sigma)}$ (equivalently, $U_\ell = X_\ell$) gives the leading-order quadrature
\begin{equation}
\label{eq:escape-leading}
t_\mathrm{esc}^\mathrm{lead} = \int_{Y_0}^{1} \frac{d Y}{2 K^{(\sigma)} \sqrt{Y \prod_{\ell \ge 2}(Y + D_\ell)}}, \qquad Y_0 \doteq (X_1^0)^2,
\end{equation}
and $t_\mathrm{esc} = t_\mathrm{esc}^\mathrm{lead}(1 + o(1))$ as $\varepsilon \to 0$ on the natural time scale $\varepsilon^{L-2} K^{(\sigma)} t_\mathrm{esc}^\mathrm{lead} = \Theta(1)$; equivalently $t_\mathrm{esc}^\mathrm{lead} \asymp \varepsilon^{-(L-2)}/K^{(\sigma)}$, measured in units of $1/K^{(\sigma)}$. The resonance $L = q+1$ produces an additional $(\lambda/K^{(\sigma)}) \log(1/\varepsilon)$ correction.
\end{theorem}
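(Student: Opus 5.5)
The proof has three stages: construct the normal form, integrate the resulting one-dimensional ODE, and compare it with the leading-order quadrature.

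\emph{Step 1 (normal form).} Start from the origin expansion \cref{eq:origin-ode}. Look for a near-identity change of variables $U_\ell = X_\ell + P_\ell(X)$, with $P_\ell$ homogeneous of degree $q$ (plus higher-order terms), that turns the field into the balanced-product form $\dot U_\ell = \Phi(U)\prod_{m\ne\ell}U_m$ with a common scalar factor $\Phi$. The approximate balance law (\Cref{app:approx-balance}) is what makes this possible. The degree-$(L+q-2)$ part of $R_\ell$ already carries the $\prod_{m\ne\ell}X_m$ structure, multiplied by an $\ell$-independent scalar. That scalar can therefore be absorbed into $\Phi$ as the term $K^{(\sigma)}\lambda U^{q-1}$, leaving only non-resonant terms, which are removed by solving homological equations order by order.

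Given this form, $\frac{d}{dt}(U_\ell^2-U_1^2) = 2\Phi\prod_m U_m - 2\Phi\prod_m U_m = 0$, so the gaps $D_\ell$ are exact invariants. Substituting $U_\ell=\sqrt{U^2+D_\ell}$ gives \cref{eq:1d-ode}. Since $U_1>0$ and $\Phi\approx K^{(\sigma)}>0$, $U$ is strictly increasing, and separating variables gives \cref{eq:escape-integral}. Two points need care. First, the flow must stay in the region where the normal form is valid; positivity of $X_1$ and boundedness of the gaps ensure this. Second, the endpoint is defined by $X_1=1$, not $U_1=1$. Because the two differ by $O(1)$ corrections at the end of the window, this changes $t_{\rm esc}$ only by an $O(1)$ amount in units of $1/K^{(\sigma)}$, which is negligible against the $\varepsilon^{-(L-2)}$ scale.

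\emph{Step 2 (leading term).} Set $\Phi\equiv K^{(\sigma)}$ and substitute $Y=U^2$, so $dU = dY/(2\sqrt Y)$; this gives \cref{eq:escape-leading} directly. To get the scaling, rescale $Y=\varepsilon^2 y$ and $D_\ell=\varepsilon^2 d_\ell$. The integrand then becomes $\varepsilon^{-(L-2)}\,\frac{dy}{2K^{(\sigma)}\sqrt{y\prod(y+d_\ell)}}$. The integral over $y\in[y_0,\varepsilon^{-2}]$ converges at the upper end whenever $L\ge 3$, since the integrand decays like $y^{-L/2}$. Hence $t^{\rm lead}_{\rm esc}\asymp \varepsilon^{-(L-2)}/K^{(\sigma)}$, with the constant determined by the $O(1)$ initialization ratios.

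\emph{Step 3 (relative error).} Write $1/\Phi = K^{-1}\bigl(1-\lambda U^{q-1}+O(U^{\min(q,2q-2)})\bigr)$. The correction to the time is then $\lambda K^{-1}\int U^{q-1}\,dU/\prod_\ell\sqrt{U^2+D_\ell}$. After rescaling, this integral scales like $\varepsilon^{q-L+1}$ when $L>q+1$, which is $o(\varepsilon^{-(L-2)})$. It is logarithmic when $L=q+1$, producing the stated $(\lambda/K)\log(1/\varepsilon)$ term. When $L<q+1$ it is $O(1)$, and still $o(\varepsilon^{-(L-2)})$ provided $L\ge3$.

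Dropping $\rho$ and using $U_\ell=X_\ell$ at the lower limit costs only a relative error of $O(\varepsilon^{q-1})$ in $U_0$ and in $D_\ell$. This perturbs the rescaled integral by $o(1)$ by continuity in $(y_0,d_\ell)$, using dominated convergence on the convergent rescaled integral.

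\emph{Main obstacle.} I expect Step 1 to be the hardest part: proving that the normal-form transformation exists with a uniform domain along the whole orbit from scale $\varepsilon$ up to $X_1=1$. Formal normal forms are only guaranteed near the origin, yet the escape window reaches $O(1)$ scale. I would first establish the transformation on a cone around the balanced ray, for $\|X\|\le\delta$, using the flow-box theorem with analyticity. The final stretch from $\delta$ to $1$ then costs only $O(1/K^{(\sigma)})$ time, by continuity of the analytic flow away from the origin, and this is absorbed into the $o(1)$ relative error.
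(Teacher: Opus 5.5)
Your overall architecture is the paper's: a balanced normal form $U_\ell = X_\ell + O(s^q)$ with exact gap invariants, separation of variables, an $O(1)$ cost for the $X_1=1$ versus $U_1=1$ endpoint, and $\Phi=K^{(\sigma)}(1+O(U^{q-1}))$ integrated against $U^{-(L-1)}$ with a logarithm at $L=q+1$. Two points differ.

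First, Step~1 contains a slip. The degree-$(L+q-2)$ part of $R_\ell$ has the $\prod_{m\ne\ell}X_m$ structure, but its scalar is \emph{not} $\ell$-independent. By \Cref{prop:conservation}, $R_1$ carries a factor $q$ where the $R_{\ell\ge2}$ carry $1$, which is exactly why $X_\ell^2-X_1^2$ drifts at order $\|X\|^{L+q-1}$. That piece is therefore removed by the degree-$q$ part of $P_\ell$ rather than absorbed into $\Phi$; this is possible because the homological multiplier $nK^{(\sigma)}\Omega(y)$ never vanishes on the cone. The slip is harmless, since you already allow degree-$q$ terms in $P_\ell$.

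Second, and more substantively, your $(1+o(1))$ argument runs in normal-form coordinates. This gives you the explicit first correction in all three regimes $L\gtrless q+1$ and a clean continuity treatment of the $U$-versus-$X$ shift in $U_0$ and $D_\ell$. The cost is that the normal form must be a genuine change of variables with $U_\ell-X_\ell=O(s^q)$ uniformly along the orbit from scale $\varepsilon$ up to $\delta$. The flow-box theorem does not give this: it produces local rectifying invariants of no particular shape, with no control as $\varepsilon\to0$. The paper's \Cref{thm:formal-balance} is also only formal.

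The paper's appendix avoids your ``main obstacle'' altogether. It stays in $X$-coordinates and treats $2X_1R_1$ and the accumulated gap drift as a higher-order perturbation $\widetilde R$ of the frozen-gap equation for $Y=X_1^2$. It then splits at $U_\varepsilon=1/\log(1/\varepsilon)$, so the relative integrand error on $[Y_0,U_\varepsilon^2]$ is $o(1)$; a continuity bootstrap carries this to the true trajectory. The remaining pieces are bounded by $O((\log(1/\varepsilon))^{L-2})+O(1)=o(\varepsilon^{-(L-2)})$.

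That route needs only the approximate balance law, but it yields only the $1+o(1)$ factor and leaves the resonant coefficient to the formal quadrature of \Cref{cor:renorm-escape}. Your route gives the finer expansion, but only conditionally on a convergent normal form. That is the same assumption both you and the paper use for the exact identity \eqref{eq:escape-integral}.
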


\begin{proof}[Proof sketch.]
Separability of  \cref{eq:1d-ode} gives  \cref{eq:escape-integral}. For the leading-order statement, $\Phi(U, D) = K^{(\sigma)}(1 + O(U^{q-1}))$, so on $U \in [U_0, 1]$ with $U_0 = O(\varepsilon)$ the multiplicative error $\Phi / K^{(\sigma)} - 1$ is pointwise $O(U^{q-1})$. Integrating against the dominant $(L-2)$-form gives a relative correction that is $o(1)$ except when the $U^{q-1}$ factor in $\Phi$ exactly cancels the $U^{L-1}$ in the denominator, i.e.\ at $L = q+1$. The normal-form change of variables $U = X + O(s^q)$ shifts the upper integration limit by $O(1)$; since the integrand at $U = \Theta(1)$ contributes $O(1)$ time, this is negligible compared to the $\varepsilon^{-(L-2)}$ total. Quantitative bounds and the resonance analysis are in  \Cref{app:normal-form}.
\end{proof}

We refer to $t_\mathrm{esc}^\mathrm{lead}$ as the \emph{closed-form} or \emph{asymptotic} escape time. Setting $X_\ell^0 = \varepsilon$ for all $\ell$ gives $D_\ell = 0$ and reduces  \cref{eq:1d-ode} to $\dot Y = 2 K^{(\sigma)} Y^{L/2}$. Direct integration from $Y_0 = \varepsilon^2$ to $Y = 1$ gives
\begin{equation}
\label{eq:phase-transition}
t_\mathrm{esc}^\mathrm{lead, bal}(L, \varepsilon) = \begin{cases}
[K^{(\sigma)}]^{-1} \log(1/\varepsilon), & L = 2,\\[3pt]
\dfrac{1 - \varepsilon^{L-2}}{(L-2) K^{(\sigma)} \varepsilon^{L-2}}, & L \ge 3.
\end{cases}
\end{equation}
The small-$\varepsilon$ escape time is therefore logarithmic in two-layer networks and polynomial $\varepsilon^{-(L-2)}$ in networks of depth three or more (\Cref{app:normal-form} and \Cref{fig:phase-critical}(a)). The jump at $L = 3$ separates the shallow (logarithmic) and deep (polynomial) scaling regimes.

\paragraph{Critical-depth law.} The polynomial exponent $L - 2$ is not a property of the nominal depth but of the \emph{critical depth}, defined as the number of layers at the bottleneck scale. Sort initial scales $0 < s_1 \le \cdots \le s_L$ and suppose $s_1 = \cdots = s_r = \varepsilon$ while $s_{r+1}, \ldots, s_L = \Theta(1)$, with a strict hierarchy $s_{j+1}/s_j \to \infty$ as $\varepsilon \to 0$ across the boundary $j = r$. A shell-by-shell evaluation of  \cref{eq:escape-integral} on intervals $[s_j^2, s_{j+1}^2]$ where the integrand behaves as if the network had effective depth $j$ yields the following law.

\begin{theorem}[Critical-depth escape law]
\label{thm:getrich}
Sort initial scales $0 < s_1 \le \cdots \le s_L$ with $s_1 = \cdots = s_r = \varepsilon$, $s_{r+1}, \ldots, s_L = \Theta(1)$, and $s_{r+1}/\varepsilon \to \infty$ as $\varepsilon \to 0$ (strict hierarchy). Then
\begin{equation}
\label{eq:getrich}
t_\mathrm{esc}^\mathrm{lead} \sim \begin{cases}
O(1), & r = 1,\\[2pt]
\log(1/\varepsilon) / [K^{(\sigma)} \prod_{i > 2} s_i], & r = 2,\\[2pt]
\varepsilon^{-(r-2)} / [(r-2) K^{(\sigma)} \prod_{i > r} s_i], & r \ge 3.
\end{cases}
\end{equation}
Balanced initialization is the maximal case $r = L$, recovering  \cref{eq:phase-transition}. A single isolated bottleneck layer ($r = 1$) removes $\varepsilon$-dependence entirely: the smooth-activation analog of the get-rich-quick phenomenon \citep{10.5555/3737916.3740496}, in which one narrow layer in an otherwise unconstrained network is enough to trigger fast feature learning.  \Cref{fig:phase-critical}(b) shows the predicted slopes across $r \in \{3, \ldots, 6\}$ at fixed $L = 6$.
\end{theorem}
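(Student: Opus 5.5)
The plan is to evaluate the leading-order quadrature \cref{eq:escape-leading} of \Cref{thm:escape} directly, with the ordering of the layers fixed to make the gaps explicit. Relabel the layers (the leading drive $K^{(\sigma)}\prod_{m\ne\ell}X_m$ is symmetric in $\ell$) so that the tracked coordinate is a bottleneck layer, $X_1^0=\varepsilon$. Then $Y_0=\varepsilon^2$ and the gaps are $D_\ell=s_\ell^2-\varepsilon^2$. These vanish for $2\le\ell\le r$ and satisfy $D_\ell=s_\ell^2(1+o(1))$ for $\ell>r$ by the strict hierarchy. The integrand becomes
\[
\frac{1}{2K^{(\sigma)}}\,Y^{-(r-1)/2}\prod_{\ell>r}(Y+D_\ell)^{-1/2}.
\]

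The key step is to split $[\varepsilon^2,1]$ into the bottleneck shell $[\varepsilon^2, s_{r+1}^2\delta]$ and the remainder, where $\delta\to0$ slowly (say $\delta=(\varepsilon/s_{r+1})^{1/2}$, so that still $\varepsilon^2\ll s_{r+1}^2\delta$).

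On the bottleneck shell every factor with $\ell>r$ satisfies $Y+D_\ell=D_\ell(1+O(\delta))$. So the integrand equals $(2K^{(\sigma)}\prod_{i>r}s_i)^{-1}Y^{-(r-1)/2}(1+o(1))$, which integrates to the following values:
\begin{itemize}
\item $O(s_{r+1}\delta^{1/2})$ for $r=1$;
\item $\log(s_{r+1}^2\delta/\varepsilon^2)/2$ for $r=2$, which gives $\log(1/\varepsilon)(1+o(1))$ when the $s_i$ are $\Theta(1)$;
\item $2\varepsilon^{-(r-2)}(1+o(1))/(r-2)$ for $r\ge3$, since the upper-limit contribution $(s_{r+1}^2\delta)^{-(r-3)/2}$ is negligible against $\varepsilon^{-(r-3)}$ for $r\ge4$, and $r=3$ is handled by the same comparison.
\end{itemize}
Multiplying by the prefactor reproduces the three cases of \cref{eq:getrich}.

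On the outer part $[s_{r+1}^2\delta,1]$, bound the integrand by $Y^{-(r-1)/2}\prod_{\ell>r}(Y+D_\ell)^{-1/2}\le Y^{-(r-1)/2}\prod_{\ell>r}D_\ell^{-1/2}$. This part contributes at most $O(\delta^{-(r-2)/2})$ times $\Theta(1)$ constants for $r\ge3$, $O(\log(1/\delta))=o(\log(1/\varepsilon))$ for $r=2$, and $O(1)$ for $r=1$. Each is $o$ of the shell term by the choice of $\delta$. For $r=1$ the whole integrand is integrable at $0$ uniformly in $\varepsilon$, giving the $O(1)$ claim directly.

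The main obstacle is the hierarchy among the $\Theta(1)$ scales. If some $s_{r+1}$ is not of order one, intermediate shells $[s_j^2,s_{j+1}^2]$ with $j>r$ behave like effective depth $j$. I would check that each such shell contributes $O(s_j^{-(j-2)}/\prod_{i>j}s_i)$, which is dominated by the $r$-shell because $\varepsilon\ll s_j$. With all $s_{i>r}=\Theta(1)$, as stated, only the single split above is needed.

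A second subtlety is the relabeling to $X_1$. If layer $1$ is not a bottleneck, the escape time of $X_1$ to $1$ differs from that of a bottleneck coordinate only by an $O(1)$ time, because all coordinates move together once $Y=\Theta(1)$. So the asymptotics are unchanged. Balanced initialization ($r=L$, no factors with $i>r$) then reduces to \cref{eq:phase-transition}.
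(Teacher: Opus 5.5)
Your route is the paper's: evaluate the leading quadrature of \Cref{thm:escape} shell by shell, note that only the bottleneck shell $[\varepsilon^2,s_{r+1}^2]$ can diverge, and antidifferentiate there. Your inner/outer cutoff at $s_{r+1}^2\delta$ is a real gain in rigor. The paper's \Cref{lem:shell} claims uniform $1+O(L\eta^2)$ accuracy up to $Y=s_{r+1}^2$, but its own bound on the $i=j+1$ factor is $\eta^{0}=1$; indeed at $Y=s_{r+1}^2$ that factor is off by $\sqrt2$. Your explicit relabeling likewise fixes a convention the paper leaves tacit.

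Two steps are wrong as written, though. The first is the exponent. Since $D_\ell=0$ for $2\le\ell\le r$, you have $Y\prod_{\ell=2}^{r}(Y+D_\ell)=Y^{r}$, so the integrand is $\frac{1}{2K^{(\sigma)}}Y^{-r/2}\prod_{\ell>r}(Y+D_\ell)^{-1/2}$. Your $Y^{-(r-1)/2}$ is the integrand in $dU$ with $U=\sqrt Y$. Integrated in $dY$ from $\varepsilon^2$, it would shift every case down by one layer: $\varepsilon^{-(r-3)}$, a logarithm at $r=3$, and $O(1)$ at $r=2$. Your listed case values are those of $Y^{-r/2}$, so this is a slip. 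It does garble your upper-limit remark, however. The correct comparison is $\tfrac{2}{r-2}(s_{r+1}^2\delta)^{-(r-2)/2}$ against $\tfrac{2}{r-2}\varepsilon^{-(r-2)}$, uniformly for all $r\ge3$, with no separate treatment of $r=3$.

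The second is a genuine failure at $r=2$, caused by choosing $\delta=(\varepsilon/s_{r+1})^{1/2}$, a power of $\varepsilon$. Then $s_3^2\delta/\varepsilon^2=(s_3/\varepsilon)^{3/2}$, so the inner shell yields $\int_{\varepsilon^2}^{s_3^2\delta}Y^{-1}\,dY=\tfrac32\log(s_3/\varepsilon)$. That is only three quarters of the required $2\log(1/\varepsilon)$. Your claim that the outer piece is $O(\log(1/\delta))=o(\log(1/\varepsilon))$ is also false, since $\log(1/\delta)=\tfrac12\log(s_3/\varepsilon)$. On most of $[s_3^2\delta,s_3^2]$ the integrand is still $(1+o(1))\,Y^{-1}/(2K^{(\sigma)}\prod_{i>2}s_i)$, so the outer region carries exactly the missing quarter. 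Your $r=2$ formula comes out right only because these two errors cancel.

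The repair is to let $\delta\to0$ slower than any power of $\varepsilon$, e.g.\ $\delta=1/\log(1/\varepsilon)$. The inner shell then gives $2\log(1/\varepsilon)-O(\log\log(1/\varepsilon))$ with multiplicative error $O(\delta)$, and the outer piece is $O(\log\log(1/\varepsilon))$. For $r\ge3$ the outer and upper-limit terms become $O((\log(1/\varepsilon))^{(r-2)/2})=o(\varepsilon^{-(r-2)})$.

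Simpler still, drop $\delta$ altogether. Because $D_\ell\ge c>0$ for $\ell>r$, you have $\bigl|\prod_{\ell>r}(Y+D_\ell)^{-1/2}-\prod_{\ell>r}D_\ell^{-1/2}\bigr|\le CY$ on $[0,1]$. Freezing the non-bottleneck factors at $Y=0$ therefore costs at most $C\int_{\varepsilon^2}^{1}Y^{1-r/2}\,dY$. This is $O(1)$ for $r\le3$, $O(\log(1/\varepsilon))$ for $r=4$, and $O(\varepsilon^{-(r-4)})$ for $r\ge5$. In every case with $r\ge2$ it is lower order than the exactly computable main term $\int_{\varepsilon^2}^{1}Y^{-r/2}\,dY\big/\bigl(2K^{(\sigma)}\prod_{\ell>r}D_\ell^{1/2}\bigr)$, and at $r=1$ it is harmless for the $O(1)$ claim.
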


\begin{figure}[t]
\centering
\includegraphics[width=0.48\textwidth]{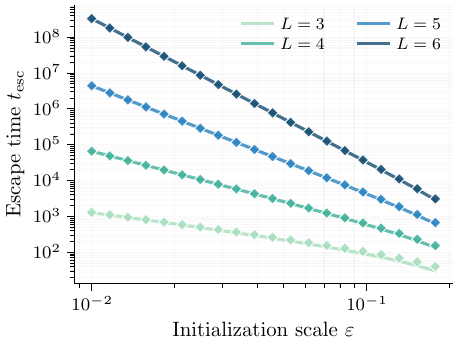}\hfill
\includegraphics[width=0.48\textwidth]{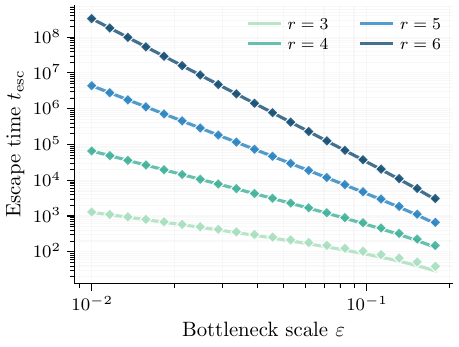}
\caption{\textbf{Escape time obeys critical-depth law on the manifold.} \textbf{(a)} Escape time $t_\mathrm{esc}$ vs initialization scale $\varepsilon$ for balanced init: closed form (solid) and reduced ODE (diamonds), polynomial scaling $\varepsilon^{-(L-2)}$ steepens with depth. \textbf{(b)} Same at fixed $L = 6$ with $r$ layers at bottleneck scale: diamonds track the $\varepsilon^{-(r-2)}$ law of \Cref{thm:getrich}. Theoretical prediction and experiment diverge at large $\varepsilon$.}
\label{fig:phase-critical}
\end{figure}

\begin{proof}[Proof sketch.]
Partition $[s_1^2, 1]$ into shells $[s_j^2, s_{j+1}^2]$. Under the strict-hierarchy assumption the shells with $j < r$ have zero width (all $s_j = \varepsilon$). On shell $r$, the bottleneck-shell asymptotic gives $1/\sqrt{Y \prod_{\ell \ge 2}(Y + D_\ell)} = (1 + o(1))  Y^{-r/2}/\prod_{i > r} s_i$; antidifferentiation on $[\varepsilon^2, s_{r+1}^2]$ yields the three cases ($r = 1$: $\Theta(1)$, $r = 2$: $\log(1/\varepsilon)$, $r \ge 3$: $\varepsilon^{-(r-2)}/(r-2)$). Later shells contribute $O(1)$ bounded integrals. Full derivation in \Cref{app:normal-form}.
\end{proof}

Each layer removed from the bottleneck reduces the polynomial exponent by one: the dominant shell shifts from $r$ to $r-1$, changing the antiderivative exponent from $-(r-2)$ to $-(r-3)$. In the extreme case $r = 1$, only one layer lies at the bottleneck scale and the escape integral is $\Theta(1)$.

\paragraph{Universality across activations.} \Cref{thm:escape} has a single activation-dependent scalar: the prefactor $K^{(\sigma)} = \beta_1 h_\sigma \alpha^{L-1} / \sqrt N$. The shape of the escape integral \cref{eq:escape-integral} is activation-agnostic, and the first correction is controlled by the single parameter $q$, the order of the first nonlinear Taylor term of $\sigma$ defined in \Cref{prop:classification}.

\begin{corollary}[Universality across Classes A--C]
\label{cor:universality}
Let $\sigma_1, \sigma_2$ be two Class~A, B, or C activations with $\sigma_i'(0) = \alpha_i \ne 0$, $\sigma_i(0) = 0$, and first nonlinear orders $q_1, q_2 \ge 2$. For any shared initialization, on the natural time scale the leading-order escape times are related by the single scaling
\[
\frac{t_\mathrm{esc}^{(\sigma_1)}}{t_\mathrm{esc}^{(\sigma_2)}} = \frac{h_{\sigma_2} \alpha_2^{L-1}}{h_{\sigma_1} \alpha_1^{L-1}} (1 + o(1)),
\]
independently of the orders $q_1, q_2$. Equivalently, the rescaled trajectory $t \mapsto X(t / K^{(\sigma)})$ is independent of $\sigma$ at leading order. The first subleading correction is universal in form: $O(\varepsilon^{q-1})$ for all $L \ne q + 1$, and $O(\varepsilon^{q-1} \log(1/\varepsilon))$ at the single resonance $L = q+1$ ($L = 4$ for Class B, $L = 3$ for Class C).
\end{corollary}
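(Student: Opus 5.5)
The plan is to read the corollary off \Cref{thm:escape}, exploiting the fact that the activation enters the leading-order quadrature \cref{eq:escape-leading} only through the prefactor $K^{(\sigma)}$.

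\textbf{Step 1: the leading-order ratio.} Fix a shared initialization $X^0$ with $\|X^0\|\le\varepsilon$. The gaps $D_\ell = (X_\ell^0)^2-(X_1^0)^2$ and $Y_0=(X_1^0)^2$ depend only on $X^0$. Hence
\[
t_\mathrm{esc}^{\mathrm{lead},(\sigma_i)} = \frac{J(X^0)}{K^{(\sigma_i)}}, \qquad J(X^0)\doteq\int_{Y_0}^1 \frac{dY}{2\sqrt{Y\prod_{\ell\ge2}(Y+D_\ell)}},
\]
with the same $J$ for both activations. \Cref{thm:escape} gives $t_\mathrm{esc}=t_\mathrm{esc}^\mathrm{lead}(1+o(1))$ for each $\sigma_i$. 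The activation-dependent factors are $K^{(\sigma_i)}=\beta_1 h_{\sigma_i}\alpha_i^{L-1}/\sqrt N$, and $\beta_1,N$ cancel in the ratio. This yields the displayed ratio. The two $o(1)$ terms combine into a single $o(1)$ because each is uniform in $X^0$ on the natural time scale.

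\textbf{Step 2: trajectory universality.} At leading order the reduced field is $K^{(\sigma)}\prod_{m\neq\ell}X_m$, from \cref{eq:origin-ode}. Under the time change $\tau = K^{(\sigma)}t$ it becomes the $\sigma$-free system $dX_\ell/d\tau=\prod_{m\ne\ell}X_m$. So $X(\tau/K^{(\sigma)})$ is independent of $\sigma$ up to the remainder $R_\ell$. On the escape window the near-balance of \Cref{app:approx-balance} keeps this remainder a relative $O(U^{q-1})$ perturbation.

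\textbf{Step 3: the subleading correction.} I would use the exact quadrature \cref{eq:escape-integral} with $\Phi=K^{(\sigma)}(1+\lambda U^{q-1}+\rho)$, where $\rho=O(U^q)$. Expanding $1/\Phi = K^{-1}(1-\lambda U^{q-1}+O(U^q))$ shows that the first correction to $t_\mathrm{esc}$ is
\[
-\frac{\lambda}{K^{(\sigma)}}\int_{U_0}^1 U^{q-1}\,w(U)\,dU,
\]
with weight $w(U)=\bigl(\prod_{\ell\ge2}(U^2+D_\ell)\bigr)^{-1/2}$. The leading term comes from the region $U\sim\varepsilon$, where $w\asymp U^{-(L-1)}$, and is of order $\varepsilon^{-(L-2)}$. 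The correction integral splits into three regimes.
\begin{itemize}
\item If $q-1<L-2$, the correction scales as $\varepsilon^{q-1-(L-2)}$, which is a relative $O(\varepsilon^{q-1})$.
\item If $q-1>L-2$, the correction is $O(1)$. Relative to $\varepsilon^{-(L-2)}$ this is $O(\varepsilon^{L-2})$, which is even smaller. It is at least bounded by $\varepsilon^{q-1}$-type terms once we use that $t^\mathrm{lead}$ itself carries the dominant term. This needs some care: I would state the correction as relative $O(\varepsilon^{\min(q-1,L-2)})$ and check that this agrees with the claim.
\item If $q-1=L-2$, i.e.\ $L=q+1$, the integrand is $\asymp U^{-1}$ and produces $\log(1/\varepsilon)$. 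This gives the resonant $O(\varepsilon^{q-1}\log(1/\varepsilon))$.
\end{itemize}
The $\rho$ term and the $O(s^q)$ shift of the normal-form coordinates are one order smaller by the same power counting. The $\rho$ term contributes one extra power of $U$. The normal-form shift moves the limits of integration by $O(\varepsilon^{q})$ at the lower end and by $O(1)$-time at the upper end.

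\textbf{Main obstacle.} The hardest step is Step 3 in the non-resonant regime $q-1>L-2$. There the $U^{q-1}$ correction is dominated by $U=\Theta(1)$, so the claimed relative order $\varepsilon^{q-1}$ has to be reconciled with an absolute $O(1)$ contribution. I would first try to express the correction relative to $t^\mathrm{lead}$ and invoke the quantitative bounds of \Cref{app:normal-form}. The aim is to confirm the stated uniform form, or to refine it to $O(\varepsilon^{\min(q-1,L-2)})$.
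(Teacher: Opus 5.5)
Your Steps 1 and 2 match the paper's reasoning: the activation enters \cref{eq:escape-leading} only through $K^{(\sigma)}$, so the ratio and the rescaled trajectory are read off \Cref{thm:escape}. Your resonance bullet is the $U^{-1}$ mechanism of \Cref{cor:renorm-escape}. The gap is the branch $q-1>L-2$ of Step 3, and it cannot be closed for the corollary as stated. You call the resulting relative correction $O(\varepsilon^{L-2})$ ``even smaller'', but $L-2<q-1$ means $\varepsilon^{L-2}\gg\varepsilon^{q-1}$.

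Nothing in \Cref{app:normal-form} rescues the claim. \Cref{eq:first-resummed-error} only compares the normal-form quadrature with its first-resummed truncation on $[\varepsilon,U_*]$. \Cref{app:escape-asymp} bounds the time from $U_*$ to the escape threshold only as $O(1)$, and that time has $\sigma$-dependent content (the $\lambda^{(\sigma)}$ integral, the self-interaction, the full nonlinearity near escape) that does not cancel between $\sigma_1$ and $\sigma_2$. Concretely, for Class~B with $q=3$ (e.g.\ $\tanh$) at $L=3$ on the balanced leaf,
\[
K^{(\sigma)}t_\mathrm{esc}=\int_{U_0}^{U_*}U^{-2}\bigl(1-\lambda^{(\sigma)}U^{2}+O(U^{3})\bigr)\,dU+O(1)=\varepsilon^{-1}+c_\sigma+O(\varepsilon),
\]
with $c_\sigma=O(1)$ and generically $\sigma$-dependent. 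So both the deviation from the master curve and the deviation of the ratio from $h_{\sigma_2}\alpha_2^{L-1}/(h_{\sigma_1}\alpha_1^{L-1})$ are of relative order $\varepsilon$, not $\varepsilon^{2}$. At $L=2$ they are of relative order $1/\log(1/\varepsilon)$.

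The paper's own justification (the $\lambda^{(\sigma)}U^{q-1}$ term in \cref{eq:1d-ode}, \Cref{cor:renorm-escape}, and the $\gamma_C$ computation of \Cref{app:classCD}) has the same blind spot. It implicitly assumes the correction integral $\int U^{q-L}\,dU$ is controlled by its lower endpoint, which holds only for $L\ge q+1$. For $L\le q$, the $\varepsilon$-dependent part of that integral is indeed relative $O(\varepsilon^{q-1})$, but it sits under an $\varepsilon$-independent $O(1)$ piece.

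The honest endpoint of your Step 3 is therefore your fallback, stated as a correction to the corollary rather than reconciled with it:
\begin{itemize}
\item relative $O(\varepsilon^{q-1})$ for $L\ge q+2$;
\item $O(\varepsilon^{q-1}\log(1/\varepsilon))$ at $L=q+1$;
\item $O(\varepsilon^{L-2})$ for $3\le L\le q$.
\end{itemize}

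One smaller slip: the normal-form shift of the lower limit is not one order smaller than the $\lambda^{(\sigma)}$ term. Since $U_0=X_1^0+O(\varepsilon^{q})$ and the leading time is $U_0^{-(L-2)}/[(L-2)K^{(\sigma)}]$, the shift moves $t_\mathrm{esc}$ by a relative $O(\varepsilon^{q-1})$. That is the same order as the $\lambda^{(\sigma)}$ contribution when $L\ge q+2$. The $O$-bounds survive, but the coefficient of the first correction is not fixed by $\lambda^{(\sigma)}$ alone.
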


The parameter $q$ organizes the two nonlinear classes: Class B ($q = 3$: tanh, erf, sin) has the smallest correction $O(\varepsilon^2)$ away from the resonance $L = 4$; Class C ($q = 2$: GELU, Swish) has a next-to-leading factor $1 + \gamma_C\, U$ with $\gamma_C \doteq 3 \sigma''(0)^2 / [2 \alpha h_\sigma]$, producing an $O(\gamma_C\, \varepsilon)$ deviation from the master curve (\Cref{app:normal-form}). In both cases the correction is the $\lambda U^{q-1}$ term in the renormalized quadrature of \cref{eq:1d-ode}. \Cref{fig:universality} plots $K^{(\sigma)} t_\mathrm{esc}$ for five activations: after rescaling, Class B curves collapse onto the master curve with residuals consistent with the $O(\varepsilon^{q-1})$ correction. We also note that Class~D ($\sigma(0) \ne 0$) reduces to the centered case via $\tilde\sigma = \sigma - \sigma(0)$ (treated in \Cref{app:classCD}).

\begin{figure}[t]
\centering
\includegraphics[width=0.48\textwidth]{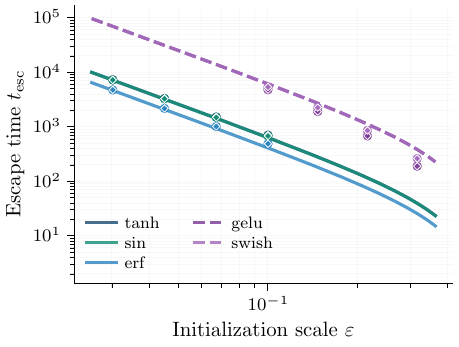}\hfill
\includegraphics[width=0.48\textwidth]{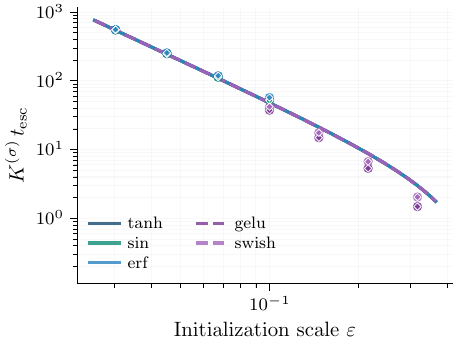}
\caption{\textbf{Universality across activations.} \textbf{(a)} Raw escape time $t_\mathrm{esc}$ vs $\varepsilon$ for three Class~B activations (tanh, erf, sin; solid line) and two Class~C (GELU, Swish; dashed). \textbf{(b)} After rescaling by $K^{(\sigma)}$: Class B curves collapse onto the master curve of \Cref{cor:universality}; Class C deviates by $O(\gamma_C\, \varepsilon)$ per \Cref{app:normal-form}.}
\label{fig:universality}
\end{figure}

\section{Multi-Mode Dynamics and Off-Manifold Corrections}
\label{sec:cascade}

The single-mode theory of \Cref{sec:escape} extends to a multi-mode teacher $y = \sum_{k = 1}^{r^*} \beta_k \sigma(v_k^\top x)$ with $\beta_1 > \cdots > \beta_{r^*} > 0$: each mode $k$ breaks out along a stage-$k$ saddle after modes $1, \ldots, k-1$ have escaped. At generic Gaussian init, off-manifold corrections are not dependent on the on-manifold dynamics, raising two questions: (i) Does the critical-depth law of \Cref{thm:getrich} depend on the ansatz, or is it intrinsic to the single-mode observable? (ii) Is the block-aligned ansatz itself an attractor, and what captures the escape-time shift when it is not? Below we resolve (i) affirmatively via a teacher-signal scalar that does not depend on any reduction (\Cref{thm:offmanifold}); answering (ii) is more involved and is the subject of \Cref{app:cascade}.

Fix an aligned single-mode teacher $y = \beta_1 \sigma(v_1^\top x)$ and set $g \doteq v_1^\top x \sim \mathcal N(0, 1)$, $h_\sigma \doteq \E_g[g \sigma(g)]$ (the first Hermite coefficient; the escape result assumes $h_\sigma>0$). With $\delta_\ell \doteq \nabla_{z_\ell} f$ the layer-$\ell$ backprop and $h_{\ell - 1}$ the forward activation, define the mode-$1$ signal energy $\gamma$ and teacher-signal gradient tensor $G_\ell$:
\begin{equation}\label{eq:gamma-Gell}
	\gamma(W) \doteq \E[f(x)  g], \qquad G_\ell(W) \doteq \E\bigl[g  \delta_\ell  h_{\ell - 1}^\top\bigr] \in \R^{n_\ell \times n_{\ell - 1}},
\end{equation}
together with $T(W) \doteq \sum_\ell \|G_\ell\|_F^2$ (aggregate layerwise gradient mass) and $M \doteq \max_\ell \|W_\ell\|_\mathrm{op}$ (bottleneck operator scale). Since $\nabla_{W_\ell} \gamma = G_\ell$ by the chain rule, $\gamma$ is an observable whose dynamics do not depend on any reduction.

\begin{proposition}[Signal-energy identity]\label{prop:signal-energy}
	On uniform Euclidean gradient flow of the squared loss with Gaussian inputs and an aligned single-mode teacher,
	\begin{equation}\label{eq:signal-energy}
		\dv{\gamma}{t} = \beta_1 h_\sigma  T(W)  \bigl(1 + O(M^2)\bigr) - S(W), \qquad S(W) \doteq \sum_{\ell = 1}^L \bigl\langle G_\ell,  \E[f  \delta_\ell h_{\ell - 1}^\top]\bigr\rangle_F.
	\end{equation}
\end{proposition}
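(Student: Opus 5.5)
We compute $\dot\gamma$ directly from the chain rule and the gradient-flow equation. Since $\gamma$ is a smooth function of $W$ with $\nabla_{W_\ell}\gamma = G_\ell$, uniform Euclidean flow gives
\[
\dv{\gamma}{t} = \sum_{\ell=1}^L \bigl\langle G_\ell, \dot W_\ell\bigr\rangle_F = -\sum_{\ell=1}^L \bigl\langle G_\ell, \nabla_{W_\ell}\mathcal L\bigr\rangle_F .
\]
For the squared loss, $\nabla_{W_\ell}\mathcal L = \E[(f-y)\,\delta_\ell h_{\ell-1}^\top]$. We split this into the self-interaction part $\E[f\,\delta_\ell h_{\ell-1}^\top]$, whose pairing with $G_\ell$ summed over $\ell$ is exactly $S(W)$, and the teacher part $-\beta_1\E[\sigma(g)\,\delta_\ell h_{\ell-1}^\top]$. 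The identity $\dot\gamma = \beta_1\sum_\ell\langle G_\ell, \E[\sigma(g)\delta_\ell h_{\ell-1}^\top]\rangle_F - S(W)$ is therefore exact, and everything reduces to showing
\[
\E[\sigma(g)\,\delta_\ell h_{\ell-1}^\top] = h_\sigma\, G_\ell + E_\ell, \qquad \sum_\ell \langle G_\ell, E_\ell\rangle_F = O(M^2)\, T(W).
\]

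For the teacher term, expand $\sigma(g)$ in Hermite polynomials: $\sigma(g) = \sum_k c_k \mathrm{He}_k(g)$ with $c_1 = h_\sigma$. The $k=1$ component gives exactly $h_\sigma\,\E[g\,\delta_\ell h_{\ell-1}^\top] = h_\sigma G_\ell$. The remaining components project $\delta_\ell h_{\ell-1}^\top$, viewed as a function of $x$, onto degree-$k$ Hermite chaos in the direction $v_1$ with $k\neq 1$. (If $\sigma$ is not odd there is also a $k=0$ piece $c_0\E[\delta_\ell h_{\ell-1}^\top]$; for Classes A–C we assume $\sigma(0)=0$, and Class D is handled by centering as in \Cref{app:classCD}.) The key observation is that $\delta_\ell h_{\ell-1}^\top$ is a composition of the linear maps $W_m$ and $\sigma,\sigma'$. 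Near small weights, its expansion in $x$ has leading part linear in $x$, coming from $h_{\ell-1}\approx \alpha^{\ell-1}W_{\ell-1}\cdots W_1 x$ and $\delta_\ell \approx$ a constant vector $\alpha^{L-\ell}W_L\cdots W_{\ell+1}$. Chaos components of order $k\ge 2$ arise only through nonlinear insertions $\sigma(u)-\alpha u$ or $\sigma'(u)-\alpha$, and each such insertion costs at least two extra powers of the pre-activation scale, hence a factor $O(M^2)$ relative to the linear part. We then bound $E_\ell$ via Gaussian hypercontractivity or Stein's lemma (integration by parts $\E[\mathrm{He}_k(g)F(x)] = \E[\partial_{v_1}^k F]$). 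This gives $\|E_\ell\|_F \lesssim M^2 \|G_\ell\|_F$ up to constants. For this we need the linear part of $\delta_\ell h_{\ell-1}^\top$ to be comparable to $G_\ell$ itself, which holds because $G_\ell$ is precisely its degree-one projection onto $g$. Cauchy–Schwarz then gives $\sum_\ell\langle G_\ell,E_\ell\rangle_F \le O(M^2)\sum_\ell\|G_\ell\|_F^2 = O(M^2)T$, which is the claimed factor $(1+O(M^2))$.

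The main obstacle is this relative bound $\|E_\ell\|_F \lesssim M^2\|G_\ell\|_F$. A naive expansion controls $E_\ell$ only in absolute terms, by $M^2$ times the size of the linear part of $\delta_\ell h_{\ell-1}^\top$ over all input directions, not just along $v_1$. I would first try to work on the component along $v_1$ only: apply Stein's lemma so that $E_\ell$ becomes expectations of derivatives along $v_1$ of $\delta_\ell h_{\ell-1}^\top$, each carrying a factor $W_1 v_1$, and factor out the same product of weight matrices that appears in $G_\ell$. If this factorization fails for general $W$, the fallback is to state the $O(M^2)$ error with the implied constant depending on $\sigma$ and $L$, and accept a remainder bounded by $M^2$ times the product of layer norms, which dominates $T$ in the small-initialization regime where the theorem is applied.
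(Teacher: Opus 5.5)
Your route is the same as the paper's: the exact chain-rule split into the teacher pairing and $S(W)$, the first Hermite mode of $\sigma(g)$ giving exactly $\beta_1 h_\sigma T(W)$, and an $O(M^2)$ relative bound on the $\sigma_\perp$ part closed by Cauchy--Schwarz. The step you flag, however, is a genuine gap, and neither of your two routes closes it.

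First, the justification that the linear part is comparable to $G_\ell$ ``because $G_\ell$ is its degree-one projection onto $g$'' is circular. $G_\ell$ projects the \emph{full} tensor $\delta_\ell h_{\ell-1}^\top$, so its comparability with the linear-path part is exactly the relative bound in question.

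Second, your Stein route cannot factor out $W_{\ell-1}\cdots W_1 v_1$. The derivatives $\partial_{v_1}^k$ produce Hadamard powers such as $(W_1 v_1)^{\odot k}$ and diagonal factors $\sigma^{(j)}(z_m)$ interleaved with the $W_m$. In fact the per-layer bound $\|E_\ell\|_F \lesssim M^2\|G_\ell\|_F$ is false in general. Take $L=3$, $W_1 = u v_1^\top$ with $u^{\odot 3}$ not parallel to $u$, and $W_2$ with $W_2 u = 0$ but $W_2 u^{\odot 3} \neq 0$. Then $z_2 = a_3 g^3 W_2 u^{\odot 3} + \cdots$, and up to relative $O(M^2)$ corrections
\[
G_3 = 3\alpha a_3\,(W_2 u^{\odot 3})^\top + \cdots, \qquad E_3 = \alpha a_3\,\E[\sigma_\perp(g)\,g^3]\,(W_2 u^{\odot 3})^\top + \cdots .
\]
By Stein, $\E[\sigma_\perp(g)g^3] = \E[\sigma'''(g)]$, which is nonzero for $\tanh$. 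So $\|E_3\|_F/\|G_3\|_F$ tends to a nonzero constant as $u \to 0$, not to $O(M^2)$.

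Third, your fallback does give the correct \emph{absolute} estimate $\|E_\ell\|_F \lesssim M^2 \prod_{m\neq\ell}\|W_m\|_{\mathrm{op}}$. The linear-in-$x$ part is annihilated exactly: along $v_1$ because $\sigma_\perp \perp g$, and along $x_\perp$ by independence. But converting this into $O(M^2)\,T(W)$ needs the \emph{reverse} inequality $\|G_\ell\|_F \gtrsim \prod_{m\neq\ell}\|W_m\|_{\mathrm{op}}$. Knowing that a product of norms dominates $T$ points the useless way.

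The missing ingredient is a nondegeneracy hypothesis on the realized teacher-aligned chains, and this is how the paper closes the step. It works through \Cref{lem:lp-L2}, on the bootstrap interval and on the event of \Cref{def:good-event}(i). That event comes from anti-concentration at He-normal init and is propagated in time by \Cref{lem:bootstrap-continuity}. On it one has two-sided bounds $\|A_{\ell-1}\|_2 \asymp \prod_{m<\ell} y_m$ and $\|B_\ell\|_2 \asymp \prod_{m>\ell} y_m$.

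These bounds do two things. The residual in $\delta_\ell h_{\ell-1}^\top = \alpha^{L-1} B_\ell A_{\ell-1}^\top g + \Psi_\ell$ then satisfies $\|\Psi_\ell\|_{L^2(F)} \le C M^2 \|B_\ell\|_2\|A_{\ell-1}\|_2$. And the rank-one Frobenius identity gives $\|G_\ell\|_F \asymp \|B_\ell\|_2\|A_{\ell-1}\|_2$. Writing $\E[\sigma_\perp(g)\,\delta_\ell h_{\ell-1}^\top] = \E[\sigma_\perp(g)\Psi_\ell]$ and applying Cauchy--Schwarz yields $|\langle G_\ell, E_\ell\rangle_F| \le C\|\sigma_\perp\|_{L^2} M^2 \|G_\ell\|_F^2$. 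So your fallback plus this lower bound is the proof. The proposition should be read on $[0,\tau_{m_0}]$ under that event, not for arbitrary $W$.

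Two smaller points. First, $\sigma(0)=0$ does not make $c_0 = \E[\sigma(g)]$ vanish; GELU has $c_0 \neq 0$. No separate treatment is needed, though, since the linear-path part has mean zero. Second, ``two extra powers of $M$ per insertion'' requires $q\ge 3$: for $q=2$, a quadratic insertion paired with $\mathrm{He}_2(g)$ costs only one power. This is consistent with \Cref{thm:offmanifold} being stated for Class~B.
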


\begin{definition}[Small-signal bootstrap interval]\label{def:good-event}
	Fix $m_0 > 0$ small and define the bottleneck operator scale and its first exit time
	\begin{equation}\label{eq:bootstrap}
		M(t) \doteq \max_{1 \le \ell \le r} \|W_\ell(t)\|_\mathrm{op}, \qquad \tau_{m_0} \doteq \inf\{t \ge 0 : M(t) \ge m_0\}.
	\end{equation}
	We work on $[0, \tau_{m_0}]$, on which
	\begin{enumerate}[label=(\roman*), leftmargin=2em, itemsep=0.2em, topsep=0.2em]
	\item the non-bottleneck stack is operator-norm bounded, and its realized teacher-aligned chains satisfy the nondegeneracy bounds of \Cref{app:offmanifold-proof};
	\item the filtered-composition linear-path expansions of \Cref{app:filtered-composition} hold with constants in $(m_0, L, \sigma)$.
	\end{enumerate}
	All constants are independent of $\varepsilon$, $r$. At He-normal initialization these conditions hold jointly with probability at least $1-\delta$ for any fixed $\delta>0$ (\Cref{app:offmanifold-proof}).
\end{definition}

\begin{lemma}[Self-interaction bound]\label{lem:S-bound}
	On the small-signal interval $[0, \tau_{m_0}]$, there is a constant $C = C(m_0, L, \sigma)$, independent of $\varepsilon$, such that $|S(W)| \le C  \gamma(W)  T(W)$.
\end{lemma}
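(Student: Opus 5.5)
Bound $S(W)=\sum_\ell\langle G_\ell,\E[f\,\delta_\ell h_{\ell-1}^\top]\rangle_F$ by Cauchy--Schwarz in Frobenius inner product:
\[
|S|\le \sum_\ell \|G_\ell\|_F\,\bigl\|\E[f\,\delta_\ell h_{\ell-1}^\top]\bigr\|_F .
\]
The target is then a comparison
\[
\bigl\|\E[f\,\delta_\ell h_{\ell-1}^\top]\bigr\|_F\le C_1\,\gamma\,\|G_\ell\|_F ,
\]
holding up to terms absorbed into the same form. Summing over $\ell$ gives $|S|\le C_1\gamma\sum_\ell\|G_\ell\|_F^2=C_1\gamma T$. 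So the problem reduces to showing that, on $[0,\tau_{m_0}]$, the network output $f$ is, to leading order, proportional to the teacher direction $g$ with coefficient $\gamma$. Then $\E[f\,\delta_\ell h_{\ell-1}^\top]\approx\gamma\,\E[g\,\delta_\ell h_{\ell-1}^\top]=\gamma G_\ell$.

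\textbf{Step 1: linear-path decomposition of $f$.} Use the filtered-composition expansions of \Cref{app:filtered-composition} (condition (ii) of \Cref{def:good-event}). Write
\[
f(x)=a(W)^\top x+\rho(x),
\]
where $a(W)=\alpha^{L-1}W_L\cdots W_1$ (adjusted by the bounded non-bottleneck stack) is the linear path and $\rho$ collects nonlinear insertions. Since $x$ is Gaussian, $\gamma=\E[fg]=a^\top v_1+\E[\rho g]$. Decompose $a^\top x=(a^\top v_1)g+a_\perp^\top x$. Two facts are needed.

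First, the component $a_\perp$ orthogonal to $v_1$ does not grow at leading order. The teacher only drives the $v_1$ direction, and the off-direction part is controlled by the nondegeneracy bounds in (i). I would argue $\|a_\perp\|\lesssim m_0\,|a^\top v_1|$, or otherwise keep it as a perturbative term.

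Second, $\rho$ is higher order in $M\le m_0$, with $\|\rho\|_{L^2}\le C m_0^{q-1}\|a\|$.

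\textbf{Step 2: the backprop-weighted expectation.} Substitute the decomposition into $\E[f\,\delta_\ell h_{\ell-1}^\top]$. The main term is $\gamma G_\ell$ plus $(a^\top v_1-\gamma)G_\ell$, and the latter is small since $|a^\top v_1-\gamma|\lesssim m_0^{q-1}\gamma$. Bound the remainder terms $\E[(a_\perp^\top x+\rho)\,\delta_\ell h_{\ell-1}^\top]$ by $C\gamma\|G_\ell\|_F$. This is where the nondegeneracy bound from (i) enters: along teacher-aligned chains, $\|\E[u\,\delta_\ell h_{\ell-1}^\top]\|_F\le C\|u\|_{L^2}\cdot\|G_\ell\|_F$ uniformly for the relevant $u$. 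In other words, $G_\ell$ is not degenerately small compared with other test functions. I would prove this with Gaussian integration by parts (Stein) applied to $g$, which turns $G_\ell$ into an expectation of derivatives of $\delta_\ell h_{\ell-1}^\top$. The same manipulation applied to the other components is then controlled by the operator bounds from $M\le m_0$ and the bounded upper stack.

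\textbf{Main obstacle.} The hard step is this comparability of $\E[u\,\delta_\ell h_{\ell-1}^\top]$ with $G_\ell$ for $u\perp g$, uniformly in $\varepsilon$. Near the origin $G_\ell$ is itself tiny, of order $\varepsilon^{r-1}$, so any additive error independent of $G_\ell$ would ruin the bound. My plan is to factor every term through the same layerwise product $\prod_{m\ne\ell}W_m$ via the linear-path expansion. Both the $G_\ell$ term and the error term then carry identical weight factors, and the ratio is bounded by constants depending only on $(m_0,L,\sigma)$. If this factorization fails for the $a_\perp$ part, the fallback is to use the Gaussian-input structure. The perpendicular directions enter only quadratically, through $\|a_\perp\|^2$, which is subleading, and the resulting extra term is absorbed into the constant $C$.
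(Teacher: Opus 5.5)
Your outer structure matches the paper: Cauchy--Schwarz on the Frobenius pairing, reduce to $\|\E[f\,\delta_\ell h_{\ell-1}^\top]\|_F\lesssim\gamma\|G_\ell\|_F$, then sum over $\ell$. The step you use to handle the off-teacher part of $f$, however, would fail. The reduction to ``$f$ is to leading order $\gamma g$'' is false at He-normal initialization. $W_1(0)$ has i.i.d.\ Gaussian entries independent of the other layers, so the linear-path vector $a$ is rotation-invariant in law on $\R^d$. Hence $|a^\top v_1|\approx\|a\|/\sqrt d$ while $\|a_\perp\|\approx\|a\|$, and your claim $\|a_\perp\|\lesssim m_0|a^\top v_1|$ already fails at $t=0$, which is part of the interval on which the lemma must hold.

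Your fallback fails too. At linear-path order, with $P\doteq W_{\ell-1}\cdots W_1$, the term $\E[(a_\perp^\top x)\,\delta_\ell h_{\ell-1}^\top]$ is proportional to $B_\ell(Pa_\perp)^\top$. Its pairing with $G_\ell\propto B_\ell(Pv_1)^\top$ is proportional to $\|B_\ell\|_2^2\,v_1^\top P^\top P a_\perp$. This is linear in $a_\perp$, not quadratic, and generically nonzero for $\ell\ge2$. So $\E[f\,\delta_\ell h_{\ell-1}^\top]$ is not close to $\gamma G_\ell$; only a norm bound is available.

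The missing ingredient is the lower bound $|\gamma|\gtrsim\prod_{m\le r}y_m$. This is the initial-nondegeneracy hypothesis $|\gamma(W(0))|\ge c_\mathrm{nd}\varepsilon^r$ of \Cref{thm:offmanifold}, propagated along $[0,\tau_{m_0}]$. It controls $a_\perp$ not by making it small relative to $a^\top v_1$, but by making $\gamma$ comparable to the full operator-norm product: $\|a_\perp\|\le\|a\|\lesssim\prod_m\|W_m\|_\mathrm{op}\lesssim|\gamma|$. You also need it silently for $\|\rho\|_{L^2}\lesssim m_0^{q-1}\|a\|$ and $|a^\top v_1-\gamma|\lesssim m_0^{q-1}\gamma$. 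Those remainders are bounded by operator-norm products, not by $\|a\|$ or $\gamma$ themselves.

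Once you have this bound, your decomposition is unnecessary, and the paper's proof is one line:
\[
\|\E[f\,\delta_\ell h_{\ell-1}^\top]\|_F\le\|f\|_{L^2}\,\|\delta_\ell h_{\ell-1}^\top\|_{L^2(F)} .
\]
Here $\|f\|_{L^2}\lesssim\prod_{m\le r}y_m\asymp|\gamma|$ and $\|\delta_\ell h_{\ell-1}^\top\|_{L^2(F)}\lesssim\prod_{m\ne\ell}y_m\asymp\|G_\ell\|_F$ come from \Cref{lem:lp-L2}.

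Likewise, your ``main obstacle'' needs no Stein manipulation. The bound $\|\E[u\,\delta_\ell h_{\ell-1}^\top]\|_F\lesssim\|u\|_{L^2}\|G_\ell\|_F$ holds for every $u$ by Cauchy--Schwarz. It uses the operator-norm upper bound $\|\delta_\ell h_{\ell-1}^\top\|_{L^2(F)}\lesssim\prod_{m\ne\ell}y_m$ together with the chain-nondegeneracy lower bound $\|G_\ell\|_F\gtrsim\prod_{m\ne\ell}y_m$ of \Cref{def:good-event}(i).
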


The proof of  \Cref{prop:signal-energy} is chain rule plus Stein's identity applied to the Hermite decomposition $\sigma(g) = h_\sigma g + \sigma_\perp(g)$; the proof of  \Cref{lem:S-bound} is Cauchy--Schwarz on the Frobenius pairing together with the filtered-composition estimates $\|f\|_{L^2} \lesssim \prod_{m = 1}^r y_m \asymp |\gamma|$ (the lower bound $|\gamma| \gtrsim \prod_m y_m$ using the initial-nondegeneracy hypothesis of \Cref{thm:offmanifold} and its propagation on the bootstrap interval) and $\|\delta_\ell h_{\ell - 1}^\top\|_{L^2} \lesssim \prod_{m \ne \ell} y_m \asymp \|G_\ell\|_F$, where $y_m$ is the bottleneck linear-path gain in layer $m$. Both are given in  \Cref{app:offmanifold-proof}. Substituting  \Cref{lem:S-bound} into  \cref{eq:signal-energy} and using $\gamma \le m_0^r \ll 1$ on the bootstrap interval gives the clean differential inequality
\begin{equation}\label{eq:gamma-dot-lower}
	\dot\gamma \ge c  T(W), \qquad c = \tfrac12 \beta_1 h_\sigma,
\end{equation}
valid on $[0, \tau_{m_0}]$ for $m_0$ small enough.

\begin{theorem}[Off-manifold critical-depth exponent, bootstrap form]\label{thm:offmanifold}
	Fix $r \ge 3$, $L \ge r$, a Class~B activation $\sigma$ with $h_\sigma>0$, and consider uniform Euclidean gradient flow of the population squared loss from He-normal initialization with the first $r$ layers rescaled by $\varepsilon$. Assume the initial teacher-signal projection is nondegenerate, $|\gamma(W(0))| \ge c_\mathrm{nd} \varepsilon^r$ for some $c_\mathrm{nd} > 0$ independent of $\varepsilon$; this holds with probability $\ge 1 - \delta$ at He-normal init for any fixed $\delta > 0$ by Gaussian anti-concentration (\Cref{rem:anticoncentration}). Under the conditions of  \Cref{def:good-event} on the small-signal interval $[0, \tau_{m_0}]$, the loss-threshold escape time $\tau_\star$ satisfies
	\begin{equation}\label{eq:tau-star}
		\tau_\star = \Theta\bigl(\varepsilon^{-(r - 2)}\bigr).
	\end{equation}
	The shallow-bottleneck corners $r \in \{1, 2\}$ give $\tau_\star = \Theta(1)$ and $\Theta(\log(1/\varepsilon))$ respectively and are proved by the same argument with a modified antiderivative (\Cref{prop:rleq2}).
\end{theorem}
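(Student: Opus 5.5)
The plan is to turn the differential inequality \cref{eq:gamma-dot-lower} into a closed scalar inequality for $\gamma$ alone, integrate it, and then match it with an upper bound of the same form. The bridge is an AM--GM bound relating $T(W)$ to a power of $\gamma$. On $[0,\tau_{m_0}]$ the filtered-composition estimates quoted after \Cref{lem:S-bound} give
\[
|\gamma|\asymp \prod_{m=1}^r y_m,\qquad \|G_\ell\|_F\asymp \prod_{m\ne\ell,\,m\le r} y_m \quad (\ell\le r),
\]
with constants depending only on $(m_0,L,\sigma)$ and the non-bottleneck stack absorbed into them.

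\textbf{Lower bound on $\dot\gamma$.} Keeping only the bottleneck layers, $T\ge\sum_{\ell\le r}\|G_\ell\|_F^2 \gtrsim \sum_{\ell\le r}\prod_{m\ne\ell}y_m^2$. By AM--GM,
\[
\sum_{\ell\le r}\prod_{m\ne\ell}y_m^2\ \ge\ r\Bigl(\prod_m y_m\Bigr)^{2(r-1)/r}\ \asymp\ |\gamma|^{2-2/r}.
\]
Combined with \cref{eq:gamma-dot-lower} this gives $\dot\gamma\ge c'\gamma^{2-2/r}$. The sign of $\gamma$ is fixed by the nondegeneracy hypothesis, after a sign flip of the readout if necessary, and $\gamma$ is then increasing, so it stays positive. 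Since $2-2/r>1$ exactly when $r\ge3$, the comparison ODE blows up in finite time. With the exponent $p=2-2/r$ one has $p-1=(r-2)/r$, and integrating from $\gamma(0)\ge c_{\rm nd}\varepsilon^r$ gives
\[
\gamma(t)^{-(r-2)/r}\le (c_{\rm nd}\varepsilon^r)^{-(r-2)/r}-c''t .
\]
Hence $\gamma$ reaches a fixed threshold $\gamma_\star$, which I take slightly below $m_0^r$ so that it is attained before $\tau_{m_0}$ or exactly at it, within time $O(\varepsilon^{-(r-2)})$.

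\textbf{Upper bound on $\dot\gamma$.} For the matching lower bound on $\tau_\star$, I would bound $\dot\gamma\le C\,T$ using \Cref{prop:signal-energy} and \Cref{lem:S-bound}. I then need $T\lesssim\gamma^{2-2/r}$, i.e.\ the reverse AM--GM. This requires the gains $y_m$ to be comparable, which holds because the imbalance identity \Cref{thm:identity}, with Class B drift of higher order, keeps $\|W_{\ell+1}\|_F^2-\|W_\ell\|_F^2$ within $O(\varepsilon^2)$ across the bottleneck layers over the window. The non-bottleneck terms $\|G_\ell\|_F^2$ for $\ell>r$ scale like $\prod_{m\le r}y_m^2\asymp\gamma^2\le\gamma^{2-2/r}$ and are harmless. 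Integrating $\dot\gamma\le C\gamma^{2-2/r}$ from $\gamma(0)\lesssim\varepsilon^r$ shows that reaching $\gamma_\star$ takes time $\gtrsim\varepsilon^{-(r-2)}$.

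\textbf{From $\gamma$ to the loss.} Since the loss equals $\tfrac12\E[y^2]-\beta_1h_\sigma\gamma+\tfrac12\E[f^2]$ up to the $\sigma_\perp$ cross term, which is controlled by the same filtered estimates, the loss-threshold time and the $\gamma$-threshold time agree up to $O(1)$. Hence $\tau_\star=\Theta(\varepsilon^{-(r-2)})$. For $r=2$ the exponent is $p=1$ and the same integration gives a logarithm; for $r=1$ it gives $O(1)$.

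\textbf{Main obstacle.} I expect the hardest step to be the reverse direction: justifying that the gains $y_m$ stay comparable off the manifold, so that $T\lesssim\gamma^{2-2/r}$. I would first try the Frobenius imbalance identity together with a Grönwall bound on the drift over the window of length $\varepsilon^{-(r-2)}$. If that fails, I would instead bound $\dot{(y_m^2)}$ directly and show that the ratios $y_m/y_1$ are $\Theta(1)$ at He init and change by $o(1)$ on the bootstrap interval.
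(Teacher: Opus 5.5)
\textbf{Gap: the lower bound on $\tau_\star$.} Your upper bound on $\tau_\star$ is the paper's argument: drop the non-bottleneck terms of $T$, apply AM--GM to get $\dot\gamma\ge c'\gamma^{2-2/r}$, and integrate from $\gamma(0)\ge c_{\rm nd}\varepsilon^r$.

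The gap is in the matching lower bound. The reverse inequality $T\lesssim\gamma^{2-2/r}$ holds only if $y_1\asymp\cdots\asymp y_r$ uniformly on $[0,\tau_{m_0}]$. For example, if $y_1\ll y_2=\cdots=y_r$, the $\ell=1$ term alone gives $T\gtrsim y_2^{2(r-1)}\gg(y_1y_2^{r-1})^{2-2/r}$. The tool you propose does not deliver this comparability, for three reasons.
\begin{itemize}
\item \Cref{thm:identity} is an exact identity pairing $\varphi_\sigma(z_l)$ with a backprop factor. It comes with no a priori bound off the symmetric manifold.
\item The higher-order Class~B drift you invoke is \Cref{prop:conservation}. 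That result concerns the reduced scalar ODE and relies on a cancellation specific to the scalar chain on the ansatz.
\item Even exact Frobenius balance would not give comparable gains. For $\ell\le r$, $\|W_\ell\|_F^2\approx 2N\varepsilon^2$ at He init is carried by the random bulk, whereas $y_m\asymp\|W_m\|_{\mathrm{op}}$ tracks the teacher-aligned chain. To connect them you would need the matrix-valued refinement, a Weyl-type transfer, control of the cross terms with the bulk, and an off-manifold drift bound integrated over a window of length $\varepsilon^{-(r-2)}$. None of these is available.
\end{itemize}
Your fallback of bounding $\tfrac{d}{dt}y_m^2$ directly hits the same wall. Off the manifold the $y_m$ are defined only up to constants, and they grow by the divergent factor $m_0/\varepsilon$. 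The paper accordingly never claims balance off the manifold; its remark on prefactor agreement says only that AM--GM is loose there by an $O(1)$ factor.

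\textbf{The missing idea: track $M$ instead of $\gamma$.} The lower bound needs neither $\gamma$ nor balance. Work with $M(t)=\max_{\ell\le r}\|W_\ell(t)\|_{\mathrm{op}}$ and use only upper bounds on the gradient. For a bottleneck layer,
\[
\|\nabla_{W_\ell}\mathcal L\|_F\le\bigl(\|f\|_{L^2}+\beta_1\|\sigma\|_{L^2}\bigr)\,\|\delta_\ell h_{\ell-1}^\top\|_{L^2(F)}\lesssim M^{2r-1}+M^{r-1},
\]
using only the crude estimate $\prod_{m\ne\ell}y_m\lesssim M^{r-1}$. Hence $D^+M\le CM^{r-1}$. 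Comparison with $\dot V=CV^{r-1}$, $V(0)=M(0)=\Theta(\varepsilon)$, then gives $\tau_{m_0}\ge C_-\varepsilon^{-(r-2)}$. Now fix $\mathcal L_\star<\mathcal L(0)$ first. The bound $\|f\|_{L^2}\le C_0M^r$ gives $\mathcal L(W(t))\ge\mathcal L(0)-C_1m_0^r>\mathcal L_\star$ on $[0,\tau_{m_0}]$ for $m_0$ small, so $\tau_\star\ge\tau_{m_0}$.

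\textbf{The $\gamma$-to-loss step.} The one-sided comparison above is all the lower bound needs. It should replace your claim that the loss-threshold and $\gamma$-threshold times ``agree up to $O(1)$''. Reaching $\gamma_\star<m_0^r$ lowers the loss by only $O(m_0^r)$. Identifying the two times in the other direction would therefore require control of the flow after $\tau_{m_0}$, which the bootstrap does not provide. The paper's upper bound is likewise really a bound on the $\gamma_\star$-hitting time.
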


\begin{proof}[Proof sketch]
	\emph{Upper bound via signal energy.} The linear-path expansion of \Cref{app:filtered-composition} gives $\|G_\ell\|_F \asymp \prod_{m \ne \ell} y_m$ uniformly in $(L, r)$ (with $A_j = W_j A_{j-1}$, $B_\ell = W_{\ell+1}^\top B_{\ell+1}$, $A_0 = v_1$, $B_L = 1$), so $T(W) \asymp \sum_{\ell \le r} \prod_{m \ne \ell} y_m^2$ and AM-GM gives $T(W) \gtrsim \gamma^{2 - 2/r}$. Integrating $\dot\gamma \ge c \gamma^{2 - 2/r}$ from $\gamma(0) = \Theta(\varepsilon^r)$ to $\gamma_\star$ yields $\tau_\star \le C_+ \varepsilon^{-(r - 2)}$.

	\emph{Lower bound via bottleneck operator growth.} The layerwise gradient bound gives $\|\dot W_\ell\|_F \le \|\nabla_{W_\ell} \mathcal L\|_F \lesssim M^{r - 1} + M^{2 r - 1} \lesssim M^{r - 1}$ on $[0, \tau_{m_0}]$ (using $M \le m_0$), so the upper Dini derivative of the maximum satisfies $D^+ M(t) \le C  M(t)^{r - 1}$. Since $M(0) = \Theta(\varepsilon)$, integrating $dM / M^{r - 1}$ gives $\tau_{m_0} \ge C_-  \varepsilon^{-(r - 2)}$. The filtered expansion also gives $\|f\|_{L^2} \lesssim M^r \le m_0^r$, so for $m_0$ small enough the loss stays above its escape threshold throughout $[0, \tau_{m_0}]$; hence $\tau_\star \ge \tau_{m_0} \ge C_-  \varepsilon^{-(r - 2)}$. Combining bounds yields  \cref{eq:tau-star}.
\end{proof}

\begin{wrapfigure}{r}{0.5\textwidth}
\centering
\vspace{-1.2em}
\includegraphics[width=0.48\textwidth]{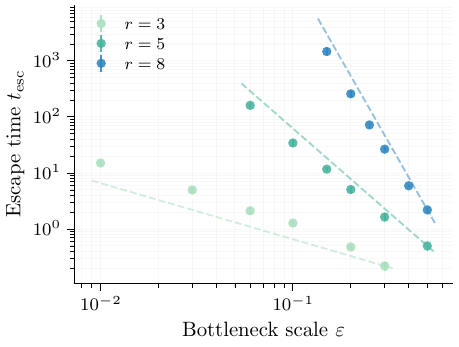}
\caption{\textbf{Off-manifold critical-depth exponent.} $t_\mathrm{esc}$ vs $\varepsilon$ for $L=8$ tanh with $r\in\{3,5,8\}$ bottleneck layers, He-normal init, SGD: slopes track the $\varepsilon^{-(r-2)}$ law of \Cref{thm:offmanifold}.}
\label{fig:criticaldepth}
\vspace{-1em}
\end{wrapfigure}
\Cref{thm:offmanifold} makes the single-mode exponent intrinsic to $M$ and $\gamma$ rather than to any ansatz. The product structure $\|G_\ell\|_F \asymp \|B_\ell\|_2 \|A_{\ell-1}\|_2$ is the rank-one Frobenius identity applied to the filtered-composition expansion of \Cref{app:filtered-composition}, and the bootstrap interval $[0, \tau_{m_0}]$ is self-certifying since $\tau_{m_0} \ge C_-\varepsilon^{-(r-2)}$ covers the escape window (\Cref{app:offmanifold-proof}). Prefactors depend on off-manifold geometry; \Cref{fig:criticaldepth} confirms the exponent empirically.

\begin{remark}[Prefactor agreement on the $S_N$ manifold]\label{rem:prefactor-agreement}
The AM-GM step $T(W) \ge c' \gamma^{2 - 2/r}$ saturates when the linear-path gains $y_1, \ldots, y_r$ are equal, the defining equality of the balanced ansatz. Substituting equal $y_m$ recovers the \Cref{thm:getrich} prefactor $[(r - 2) K^{(\sigma)} \prod_{i > r} s_i]^{-1}$ exactly: ansatz and signal-energy bound agree in both exponent and constant when restricted to the manifold. At He-normal init the $y_m$ are i.i.d.\ up to an $O(1)$ factor, so AM-GM is loose by a mode-dependent constant --- the source of the unresolved off-manifold prefactor.
\end{remark}

\paragraph{Multi-mode geometry.} With $r^*$ teacher modes, the natural extension of \Cref{def:ansatz} partitions hidden neurons into $r^*$ blocks with the single-mode ansatz imposed inside each. This block-aligned ansatz is preserved by the flow but not attracting at generic Gaussian init: linearizing around the stage-$1$ saddle gives a positive off-block eigenvalue whenever a mixed on-block/off-block loop gain exceeds $1$. This is treated in more detail in \Cref{app:cascade}.

\section{Discussion}
\label{sec:discussion}

We have shown that the homogeneity deficit $\varphi_\sigma(z) = z\sigma'(z) - \sigma(z)$ classifies activations into four regimes and controls escape from the zero saddle. On the symmetric submanifold the matrix flow reduces to a one-dimensional integral; off the manifold at He-normal init, a signal-energy argument on $\gamma(W) = \E[fg]$ yields the same exponent $\tau_\star = \Theta(\varepsilon^{-(r - 2)})$, identifying $r$ as the quantity that sets the plateau. Prior work on deep linear networks exploits the exact conservation law $W_{\ell + 1}^\top W_{\ell + 1} - W_\ell W_\ell^\top = \mathrm{const}$; our identity generalizes it beyond only linear or ReLU activations, setting the escape-time depth-dependence of $\tau_\star$.

Our analysis identifies the first mode escape time and provides approximations to successive escape times --- an exact theory of modewise escape times in multi-mode teacher-student networks remains open and is an exciting area of future research. We also note that our framework is asymptotic as $\varepsilon \to 0^+$; understanding training dynamics in arbitrarily large initialization networks is another important avenue for future work. More generally, an ab initio theory of the learning dynamics in deep neural networks remains an important goal for the field.


\section*{Acknowledgments}

The authors would like to thank David Kim for many useful discussions and Arjun Banerjee for comments on the manuscript. This work was supported in part by the U.S. Army Research Laboratory and the U.S. Army Research Office under Contract No. W911NF-201-0151. DR acknowledges support from VESSL AI.

\bibliographystyle{plainnat}
\bibliography{references}

\newpage
\appendix

\section{Proof and Extension of \Cref{thm:identity}}
\label{app:identity}

We give the derivation of \Cref{thm:identity}, state the matrix-valued refinement, and record the almost-everywhere version for ReLU.

\subsection{Full Derivation of \Cref{thm:identity}}\label{app:identity-proof}

Fix a layer index $l \in \{1, \ldots, L-1\}$. Write $\ell$ for the per-example loss and $\mathcal L = \E[\ell]$ for the population loss, with expectation taken over the input distribution (and any label noise). The pre- and post-activations are $z_l = W_l h_{l-1}$ and $h_l = \sigma(z_l)$, with $h_0 = x$ the input. We use $\langle A, B \rangle_F \doteq \tr(A^\top B)$ for the Frobenius inner product and $\odot$ for the Hadamard (entrywise) product.

Under gradient flow $\dot W_l = -\nabla_{W_l} \mathcal L$,
\begin{align*}
	\dv{t}\F{W_l}
	&= \dv{t}\tr(W_l^\top W_l)\\
	&= \tr(\dot W_l^\top W_l) + \tr(W_l^\top \dot W_l)\\
	&= 2 \tr(W_l^\top \dot W_l)\\
	&= 2 \langle W_l, \dot W_l \rangle_F\\
	&= -2 \langle W_l, \nabla_{W_l} \mathcal L \rangle_F.
\end{align*}
To prove our claim, all that remains is to express $\langle W_l, \nabla_{W_l} \mathcal L \rangle_F$ as an expectation in pre-activation space.

The per-example chain rule gives
\begin{align*}
	\nabla_{W_l} \ell &= \nabla_{z_l} \ell \cdot h_{l-1}^\top,
\end{align*}
a rank-one outer product between the upstream gradient $\nabla_{z_l}\ell \in \R^N$ and the downstream activation $h_{l-1}\in \R^N$. Taking the expectation over the data distribution and substituting,
\begin{align*}
	\langle W_l, \nabla_{W_l} \mathcal L \rangle_F
	&= \tr\bigl(W_l^\top \E[\nabla_{z_l} \ell \cdot h_{l-1}^\top]\bigr)\\
	&= \E\bigl[\tr(W_l^\top \nabla_{z_l} \ell \cdot h_{l-1}^\top)\bigr]\\
	&= \E\bigl[\tr(h_{l-1}^\top W_l^\top \nabla_{z_l} \ell)\bigr]\\
	&= \E\bigl[\tr((W_l h_{l-1})^\top \nabla_{z_l} \ell)\bigr]\\
	&= \E\bigl[\tr(z_l^\top \nabla_{z_l} \ell)\bigr]\\
	&= \E\bigl[\langle \nabla_{z_l} \ell, z_l \rangle\bigr],
\end{align*}
where between the second and third lines we have used linearity of $\tr$ and $\E$, between the third and fourth lines we use cyclicity of the trace, and between the fourth and fifth lines we substitute $z_l = W_l h_{l-1}$. Combining,
\begin{align}\label{eq:dnorm-final}
	\dv{t}\F{W_l} &= -2 \E\bigl[\langle \nabla_{z_l} \ell, z_l \rangle\bigr].
\end{align}

Backpropagation relates the upstream gradients at adjacent layers. Differentiating $h_l = \sigma(z_l)$ entrywise and $z_{l+1} = W_{l+1} h_l$,
\begin{align*}
	\nabla_{z_l} \ell
	&= \left(\frac{\partial h_l}{\partial z_l}\right)^{  \top} \nabla_{h_l} \ell
	= \diag(\sigma'(z_l)) \cdot W_{l+1}^\top \nabla_{z_{l+1}} \ell
	= \sigma'(z_l) \odot \bigl(W_{l+1}^\top \nabla_{z_{l+1}} \ell\bigr),
\end{align*}
where the first equality is the Jacobian--vector chain rule, the second uses the Jacobian of $\sigma$ applied entrywise (a diagonal matrix with entries $\sigma'(z_l)_i$) along with the affine Jacobian of $z_{l+1}$ in $h_l$, and the third rewrites the diagonal multiplication as a Hadamard product.

Take the inner product of the backprop identity with $z_l$. Using the Hadamard inner product identity $\langle a \odot b, c \rangle = \langle b, a \odot c \rangle$ for vectors $a, b, c \in \R^N$,
\begin{align*}
	\langle \nabla_{z_l} \ell, z_l \rangle
	&= \langle \sigma'(z_l) \odot (W_{l+1}^\top \nabla_{z_{l+1}} \ell), z_l \rangle\\
	&= \langle W_{l+1}^\top \nabla_{z_{l+1}} \ell, \sigma'(z_l) \odot z_l \rangle\\
	&= \langle W_{l+1}^\top \nabla_{z_{l+1}} \ell, z_l \odot \sigma'(z_l) \rangle.
\end{align*}
Separately, the forward pass $z_{l+1} = W_{l+1} \sigma(z_l) = W_{l+1} h_l$ gives
\begin{align*}
	\langle \nabla_{z_{l+1}} \ell, z_{l+1} \rangle
	&= \langle \nabla_{z_{l+1}} \ell, W_{l+1} \sigma(z_l) \rangle
	= \langle W_{l+1}^\top \nabla_{z_{l+1}} \ell, \sigma(z_l) \rangle.
\end{align*}
Subtracting the two lines inside the expectation, the common factor $W_{l+1}^\top \nabla_{z_{l+1}} \ell$ pairs against $z_l \odot \sigma'(z_l) - \sigma(z_l) = \varphi_\sigma(z_l)$:
\begin{align*}
	\E\bigl[\langle \nabla_{z_l} \ell, z_l \rangle\bigr] - \E\bigl[\langle \nabla_{z_{l+1}} \ell, z_{l+1} \rangle\bigr]
	&= \E\bigl[\langle W_{l+1}^\top \nabla_{z_{l+1}} \ell, \varphi_\sigma(z_l) \rangle\bigr].
\end{align*}
Applying \eqref{eq:dnorm-final} at layers $l$ and $l+1$ and subtracting,
\begin{align*}
	\dv{t}\bigl(\F{W_{l+1}} - \F{W_l}\bigr)
	&= -2 \E\bigl[\langle \nabla_{z_{l+1}} \ell, z_{l+1} \rangle\bigr] + 2 \E\bigl[\langle \nabla_{z_l} \ell, z_l \rangle\bigr]\\
	&= 2 \E\bigl[\langle W_{l+1}^\top \nabla_{z_{l+1}} \ell, \varphi_\sigma(z_l) \rangle\bigr],
\end{align*}
which is  \cref{eq:identity}. \qed

\subsection{Layerwise-Preconditioned Identity}\label{app:preconditioned-identity}
For fixed positive layerwise rates $\rho_1,\ldots,\rho_L$, consider $dW_l/ds=-\rho_l\nabla_{W_l}\mathcal L$ and define
\[
\Delta_l^{(\rho)} \doteq \rho_{l+1}^{-1}\|W_{l+1}\|_F^2-\rho_l^{-1}\|W_l\|_F^2.
\]
Then
\[
\frac{d}{ds}\Delta_l^{(\rho)}=2\E\!\left[\left\langle W_{l+1}^\top\nabla_{z_{l+1}}\ell,\varphi_\sigma(z_l)\right\rangle\right].
\]
Indeed, the calculation in \cref{app:identity-proof} becomes $d[\rho_l^{-1}\|W_l\|_F^2]/ds=-2\E\langle\nabla_{z_l}\ell,z_l\rangle$; subtracting adjacent layers gives the displayed identity. \Cref{thm:identity} is the case $\rho_l=1$.

\subsection{Matrix-Valued Refinement of \Cref{thm:identity}}\label{app:matrix-refinement}

The rank-one chain-rule step used above lifts without a trace, giving a matrix-valued analog of \Cref{thm:identity} whose scalar trace recovers the identity.

\begin{corollary}[Matrix-valued refinement]\label{cor:matrix}
	Under the hypotheses of  \Cref{thm:identity}, with $\nabla_{z_{l+1}} \ell$ denoting the per-example gradient as in \Cref{thm:identity},
	\begin{align*}
		\dv{t}\bigl(W_{l+1}^\top W_{l+1} - W_l W_l^\top\bigr)
		&= \E\bigl[W_{l+1}^\top \nabla_{z_{l+1}} \ell \cdot \varphi_\sigma(z_l)^\top + \varphi_\sigma(z_l) \cdot (W_{l+1}^\top \nabla_{z_{l+1}} \ell)^\top\bigr].
	\end{align*}
	Taking the trace of both sides recovers  \cref{eq:identity}. The deep-linear case $\sigma(z) = z$ gives $\varphi_\sigma \equiv 0$ and reproduces the well-known conservation $W_{l+1}^\top W_{l+1} - W_l W_l^\top = \mathrm{const}$.
\end{corollary}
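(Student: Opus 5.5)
The plan is to redo the computation of \Cref{app:identity-proof} without taking the trace, at the level of the Gram matrices themselves.

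\textbf{Step 1: differentiate each Gram matrix.} Write $u \doteq W_{l+1}^\top \nabla_{z_{l+1}}\ell \in \R^N$ for the back-propagated signal entering layer $l$. The per-example gradients are
\[
\nabla_{W_{l+1}}\ell = \nabla_{z_{l+1}}\ell\, h_l^\top,
\qquad
\nabla_{W_l}\ell = \nabla_{z_l}\ell\, h_{l-1}^\top,
\qquad
\nabla_{z_l}\ell = \sigma'(z_l)\odot u .
\]
Under gradient flow these give
\[
\dv{t} W_{l+1}^\top W_{l+1} = -\E\bigl[u\,\sigma(z_l)^\top + \sigma(z_l)\,u^\top\bigr].
\]
The key move is to absorb $W_{l+1}$ into the backprop vector, using $W_{l+1}^\top \nabla_{z_{l+1}}\ell\, h_l^\top = u\,h_l^\top$.

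Similarly, absorbing $W_l$ into the forward pass via $h_{l-1}^\top W_l^\top = z_l^\top$ gives
\[
\dv{t} W_l W_l^\top = -\E\bigl[(\sigma'(z_l)\odot u)\,z_l^\top + z_l\,(\sigma'(z_l)\odot u)^\top\bigr].
\]

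\textbf{Step 2: subtract and identify the $\varphi_\sigma$ pairing.} The difference is
\[
\E\bigl[(\sigma'\odot u)\,z_l^\top - u\,\sigma(z_l)^\top\bigr] + (\text{transpose}).
\]
To reach the stated form $u\,\varphi_\sigma(z_l)^\top$, I would move the diagonal factor $\diag(\sigma'(z_l))$ from the left of $u$ onto $z_l$. This uses the matrix version of the Hadamard identity $\langle a\odot b, c\rangle = \langle b, a\odot c\rangle$ employed in the scalar proof.

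\textbf{Step 3: consistency checks.} Taking the trace then gives $2\E\langle u, \varphi_\sigma(z_l)\rangle$, which is \cref{eq:identity}. For $\sigma(z)=z$ we have $\sigma'\equiv 1$, so both terms equal $u z_l^\top$ and cancel exactly. This gives $\varphi_\sigma\equiv 0$ and the deep-linear conservation law.

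\textbf{Main obstacle.} The obstacle is Step 2. The $(i,j)$ entry of $(\sigma'\odot u)z_l^\top$ is $\sigma'(z_{l,i})\,u_i\,z_{l,j}$, whereas the stated form $u\,\varphi_\sigma(z_l)^\top$ needs $u_i\,z_{l,j}\,\sigma'(z_{l,j})$. Moving $\diag(\sigma'(z_l))$ across the outer product is exact only on the diagonal $i=j$ and in the linear case.

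So I would first verify entrywise that the diagonal entries, and hence the trace and the neuronwise norms $\|(W_l)_{i,:}\|^2$ versus $\|(W_{l+1})_{:,i}\|^2$, obey the stated identity. I would then check whether the off-diagonal entries can be matched. One possibility is symmetry of the expectation on a permutation-invariant configuration such as the $S_N$ manifold, where $z_{l,i}$ is independent of $i$ and the two index placements coincide. If the entries cannot be matched in general, the corollary should be stated for the diagonal part, with the trace and linear-case consequences unaffected.
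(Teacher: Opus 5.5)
Your Step 1 is correct, and the obstacle you flag in Step 2 is not a gap in your route but an error in the statement itself. The paper's only justification is the remark that the rank-one chain-rule step ``lifts without a trace.'' The step that does not lift is the Hadamard move $\langle a\odot b,c\rangle=\langle b,a\odot c\rangle$, which is valid only after the index is contracted. Write $\Sigma\doteq\diag(\sigma'(z_l))$ and $u\doteq W_{l+1}^\top\nabla_{z_{l+1}}\ell$. Subtracting your two Gram-matrix derivatives then gives the exact identity
\[
\dv{t}\bigl(W_{l+1}^\top W_{l+1}-W_lW_l^\top\bigr)=\E\bigl[u\,\varphi_\sigma(z_l)^\top+\varphi_\sigma(z_l)\,u^\top\bigr]+\E\bigl[\Sigma A-A\Sigma\bigr],\qquad A\doteq u\,z_l^\top-z_l\,u^\top .
\]
The extra commutator term has $(i,j)$ entry $\E\bigl[(\sigma'(z_{l,i})-\sigma'(z_{l,j}))(u_i z_{l,j}-u_j z_{l,i})\bigr]$. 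Both sides of the claimed identity are symmetric, and this correction is itself symmetric, so no re-symmetrization rescues the off-diagonal entries.

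The correction also does not vanish in expectation, even with Gaussian inputs and squared loss. Take $L=2$, $N=2$, $d=1$, $\sigma=\tanh$, $x\sim\mathcal N(0,1)$, $y\equiv 0$, $W_1=(1,2)^\top$, $W_2=(1,0)$. Then $z_1=(x,2x)^\top$ and $u=(\tanh x,0)^\top$. The $(1,2)$ entry of the correction is $\E\bigl[2x\tanh x\,(\mathrm{sech}^2 x-\mathrm{sech}^2 2x)\bigr]>0$, because the integrand is strictly positive for every $x\neq 0$. So your open question of whether the off-diagonal entries can be matched has a negative answer: the corollary as written is false off the diagonal.

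What survives is exactly what you proposed. A commutator with a diagonal matrix has zero diagonal. The diagonal entries therefore give the neuronwise law
\[
\dv{t}\bigl(\|(W_{l+1})_{:,i}\|^2-\|(W_l)_{i,:}\|^2\bigr)=2\E\bigl[u_i\,\varphi_\sigma(z_{l,i})\bigr],
\]
and summing over $i$ gives \cref{eq:identity}. For $\sigma(z)=z$ one has $\Sigma=I$, the commutator vanishes, and the full matrix conservation law is recovered.

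On the ansatz of \Cref{def:ansatz}, every coordinate of $z_l$ is equal for each example. Hence $\Sigma$ is a scalar multiple of the identity pointwise and the commutator vanishes there as well, as you suspected.

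A correct corollary should therefore do one of three things: carry the commutator term, be stated for the diagonal (neuronwise) part only, or be restricted to the symmetric manifold. This also undercuts the paper's later remark that the full matrix refinement lets the off-manifold analysis split the drift into on-block and off-block channels. Off the manifold, and across blocks in the block-aligned ansatz where $\sigma'(z_{l,i})\neq\sigma'(z_{l,j})$, the off-diagonal entries are precisely where the stated formula is wrong.
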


The matrix refinement is strictly stronger than  \Cref{thm:identity}: it records the directional structure of the imbalance drift in the $N \times N$ layer space, whereas the scalar identity retains only its trace. Tracking the full matrix is what allows the off-manifold analysis of  \Cref{sec:cascade} to decompose the drift into distinct channels (on-block versus off-block) rather than a single number.

\subsection{Approximate Imbalance Invariant on the Ansatz}\label{app:approx-balance}

The matrix imbalance of \cref{eq:identity} restricts to the symmetric balanced ansatz as the scalar differences $D_\ell \doteq X_\ell^2 - X_1^2$. Along the flow of \cref{eq:exact-ode}, these differences move slowly.

\emph{Remark on why the drift is one order smaller than naive bounds predict for deep-interior pairs.} The proof of \Cref{prop:conservation} turns on \emph{which} monomial dominates $R_\ell$, not just its size. The degree count of \Cref{lem:filtered} gives the coordinatewise remainder $R_\ell(X) = O(\|X\|^{\min(L + q - 2, 2L - 1)})$ in \cref{eq:origin-ode} (simplifying to $O(\|X\|^{L + 1})$ for Class B at $L \ge 2$), so termwise multiplication yields the naive bound $\tfrac{d}{dt} D_\ell = 2 X_\ell R_\ell - 2 X_1 R_1 = O(\|X\|^{L + q - 1})$. The per-insertion analysis of \Cref{app:filtered-composition} (\cref{eq:insertion-k,eq:per-insertion}) sharpens the pair-drift story by identifying inside the degree-$(L + q - 2)$ piece of $R_\ell$ a \emph{single} dominant monomial: the one in which the very first activation fires the nonlinear slice $\rho(u) = a_q u^q + \cdots$ while activations $2, \ldots, L$ stay linear. That monomial carries the $q$-vs-$1$ derivative asymmetry between $\ell = 1$ (differentiating inside $\rho$, factor $q$) and $\ell \ge 2$ (differentiating outside $\rho$, factor $1$). For pairs $(i, j)$ with $i, j \ge 2$ the factor is the same on both sides, producing exact cancellation and the improved drift $O(\|X\|^{L + q})$ --- one full order below the naive rate --- as recorded in \cref{eq:pair-drift-sharp}. For the $\ell \leftrightarrow 1$ pair the factor mismatch gives a candidate term $(1 - q)c(X)\prod_mX_m$ of order $\|X\|^{L+q-1}$; it is present when the corresponding Hermite coefficient is nonzero.

\begin{proposition}[Approximate imbalance invariant]\label{prop:conservation}
	On the exact ansatz-reduced ODE \cref{eq:exact-ode}, the layer differences $D_\ell \doteq X_\ell^2 - X_1^2$ drift at rate
	\begin{equation}\label{eq:D-drift}
		\dv{t}\bigl(X_\ell^2 - X_1^2\bigr) = 2 X_\ell R_\ell(X) - 2 X_1 R_1(X) = O \bigl(\|X\|^{L + q - 1}\bigr),
	\end{equation}
	with the explicit candidate leading scalar derived below. For Class B ($q = 3$), the imbalance drift is thus $O(\|X\|^{L + 2})$.
\end{proposition}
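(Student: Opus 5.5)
The plan is to work directly with the exact reduced ODE \cref{eq:exact-ode}, written in the split form \cref{eq:origin-ode}: $\dot X_\ell = K^{(\sigma)}\prod_{m\ne\ell}X_m + R_\ell(X)$. Differentiating gives $\tfrac{d}{dt}(X_\ell^2 - X_1^2) = 2X_\ell\dot X_\ell - 2X_1\dot X_1$. The leading drives contribute $2K^{(\sigma)}\prod_m X_m$ to both terms, so they cancel exactly. This is the deep-linear balance law inherited at leading order, and it leaves precisely $2X_\ell R_\ell - 2X_1 R_1$, the first equality in \cref{eq:D-drift}.

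For the order bound, I would organize $f(X,g)$ by the insertion expansion described before \cref{eq:si-f}. Each activation contributes either its linear branch $\alpha u$ or a nonlinear slice $a_q u^q + \cdots$. A monomial whose first insertion sits at activation $j$ has $X$-degree at least $L + (q-1)j$. Taking expectations against $\beta_1\sigma(g)$ and against $f$ (the self-interaction term) then bounds $R_\ell$ coordinatewise by $O(\|X\|^{\min(L+q-2,\,2L-1)})$.

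There are two cases.
\begin{itemize}
\item If $2L-1 \ge L+q-2$, i.e.\ $L \ge q-1$, multiplying by $X_\ell$ gives $2X_\ell R_\ell = O(\|X\|^{L+q-1})$ immediately. In particular this covers Class B for every $L\ge 2$.
\item If $L < q-1$, the self-interaction piece $X_\ell\,\E[f\,\partial_{X_\ell}f]$ is of the form $\E[f\cdot X_\ell\partial_{X_\ell} f]$. On linear paths one has $X_\ell\partial_{X_\ell}f = f$ for every $\ell$, so this piece is $\ell$-independent at leading degree $2L$ and cancels in the difference. I would check that its first $\ell$-dependent part again carries a nonlinear insertion, which pushes it to degree $\ge L+q-1$. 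If this refinement proves delicate, I would state the bound as $O(\|X\|^{\min(L+q-1,2L)})$, which agrees with the claim for all cases used downstream.
\end{itemize}

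The paper also promises an explicit candidate leading scalar. For this I would isolate the degree-$(L+q-2)$ part of $R_\ell$ coming from the teacher term, namely a single insertion at the first activation. That monomial is $\sqrt N\alpha^{L-1}a_q X_1^{q}\prod_{m\ge2}X_m\,g^q$. Its $X_\ell$-derivative for $\ell\ge2$ is $\prod_{m\ne\ell}$ times this coefficient. For $\ell=1$ the derivative picks up an extra factor $q$. Multiplying by $X_\ell$ and taking expectations against $\beta_1\sigma(g)$ gives a common scalar $c(X) = \beta_1\sqrt N\alpha^{L-1}a_q\,\E[g^q\sigma(g)]\,X_1^{q-1}$ (suitably normalized) times $\prod_m X_m$. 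The factor of the $\ell$-term is $1$ and that of the $1$-term is $q$, so the difference is $2(1-q)c(X)\prod_m X_m$ plus higher order. This term is $O(\|X\|^{L+q-1})$ and nonzero when $\E[g^q\sigma(g)]\ne 0$. Insertions at later activations $j\ge2$ carry degree $\ge L+2(q-1)-1$ after multiplication by $X_\ell$, which is higher order.

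The main obstacle is the bookkeeping of which monomials survive and making the implicit constants uniform. I would handle this with the weighted-degree count and the fact that $\sigma$ is smooth, so all Gaussian moments of the Taylor remainder are finite on $\|X\|\le m_0$. Specializing to $q=3$ then gives the Class B rate $O(\|X\|^{L+2})$.
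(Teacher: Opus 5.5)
Your proposal is correct and follows the paper's own route: you cancel the $\ell$-independent drive $2K^{(\sigma)}\prod_m X_m$, bound the rest by the weighted-degree count of the insertion expansion, and extract the candidate leading term $2(1-q)c(X)\prod_m X_m$ from the first-activation single insertion via the $q$-versus-$1$ derivative asymmetry. Your remark that the linear-path self-interaction $\E[f\,X_\ell\partial_{X_\ell} f]$ is $\ell$-independent is a step the paper omits, and it is genuinely needed for the bound when $L<q-1$ and for the paper's $O(\|X\|^{L+q})$ error on the leading scalar whenever $L\le q-1$ (already at $L=2$, $q=3$ the self-interaction enters $R_\ell$ at the same degree $L+q-2$ as the insertion term); one small slip is that the insertion monomial carries $\alpha^{L-2}$, not $\alpha^{L-1}$, since activation $1$ fires $a_q u^q$ and only activations $2,\dots,L-1$ stay linear, matching the paper's $c(X)=\beta_1\alpha^{L-2}a_q h_\sigma^{(q)}X_1^{q-1}$ up to normalization.
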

\begin{proof}
	From \cref{eq:origin-ode}, $\tfrac{d}{dt} X_\ell^2 = 2 K^{(\sigma)} \prod_m X_m + 2 X_\ell R_\ell(X)$; the first term is $\ell$-independent and cancels in the difference. The structural feature that controls what survives is the $q$-versus-$1$ derivative difference of the first nonlinear insertion, developed in full in \Cref{app:filtered-composition}: write the leading degree-$(L + q - 2)$ correction to $\partial_{X_\ell} f$ as a single-insertion term in which activation $1$ fires the nonlinear slice $\rho(u) = a_q u^q + O(u^{q + 2})$ and activations $2, \ldots, L$ stay linear. Differentiating in $X_\ell$ then treats $\ell = 1$ and $\ell \ge 2$ asymmetrically: $X_1$ lies inside the $\rho$-kernel and carries the derivative factor $q$, while each $X_\ell$ with $\ell \ge 2$ multiplies $\rho$ linearly and carries factor $1$. After the Hermite contraction of \Cref{app:filtered-composition}, this gives
	\begin{equation*}
		R_\ell^{(\mathrm{lead})}(X) = c(X) \prod_{m \ne \ell} X_m + O(\|X\|^{L + q - 1}), \qquad c(X) \doteq \beta_1 \alpha^{L - 2} a_q h_\sigma^{(q)} X_1^{q - 1} \quad (\ell \ge 2),
	\end{equation*}
	with the \emph{same} scalar $c(X)$ for every $\ell \ge 2$, whereas $R_1^{(\mathrm{lead})} = q\, c(X)\, \prod_{m \ne 1} X_m + O(\|X\|^{L + q - 1})$. Pairs $(i, j)$ with $i, j \ge 2$ therefore cancel exactly: $X_i R_i - X_j R_j = c(X) (X_i \prod_{m \ne i} X_m - X_j \prod_{m \ne j} X_m) + O(\|X\|^{L + q}) = O(\|X\|^{L + q})$, since both monomials equal $\prod_m X_m$. The $\ell \leftrightarrow 1$ comparison retains the factor mismatch: $X_\ell R_\ell - X_1 R_1 = (1 - q)c(X)\prod_m X_m + O(\|X\|^{L + q})$, which gives the stated upper bound. If $h_\sigma^{(q)}\ne0$, this term is nonzero; otherwise the drift is higher order. The full per-insertion-order bookkeeping (\eqref{eq:per-insertion} and \eqref{eq:pair-drift-sharp} of \Cref{app:filtered-composition}) confirms that deeper insertions ($k \ge 2$) cannot produce a lower-order term.
\end{proof}

\Cref{prop:conservation} is the nonlinear analog of deep-linear balance: the Class B bound holds to order $\|X\|^{L + 2}$, the Class C bound to $\|X\|^{L + 1}$. The Class B exponent is sharper than pointwise control of \cref{eq:identity} predicts: on the manifold, $W_{l+1}^\top \nabla_{z_{l+1}} \mathcal L$ aligns with $\varphi_\sigma(z_l)$ to produce an extra cancellation, giving $O(\varepsilon^{L+2})$ in place of the naive $|\varphi_\sigma(z_l)| \cdot |W_{l+1}^\top \nabla_{z_{l+1}} \mathcal L| = O(\varepsilon^3) \cdot O(\varepsilon^{L-3}) = O(\varepsilon^L)$. The drift can moreover be removed to all orders by a near-identity change of coordinates into a balanced product normal form $\dot U_\ell = \Psi(U) \prod_{m \ne \ell} U_m$; the formal normal form, the renormalized escape law, and the $L = q{+}1$ resonance are developed in \Cref{app:normal-form}. For the escape analysis, the leading-order form of \Cref{prop:conservation} suffices.

\subsection{Almost-Everywhere Extension and ReLU}\label{app:relu-ae}

The entire derivation above uses only almost-everywhere differentiability of $\sigma$: every appearance of $\sigma'$ occurs inside a Lebesgue integral (through the data expectation), and the chain rule lifts to the distributional sense. Concretely, if $\sigma$ is Lipschitz and piecewise $C^1$, Rademacher's theorem provides an a.e.-defined derivative $\sigma'$, the rank-one chain rule $\nabla_{W_l} \ell = \nabla_{z_l} \ell \cdot h_{l-1}^\top$ holds for almost every input $x$, and the Hadamard step $\nabla_{z_l} \ell = \sigma'(z_l) \odot (W_{l+1}^\top \nabla_{z_{l+1}} \ell)$ is then an a.e.\ equality. Since the measure of pre-activations landing on the null set where $\sigma'$ is undefined is zero under any absolutely continuous input distribution (Gaussian in particular), the expectation is unchanged by the choice of representative.

\subsection{Proof of the Activation Classification}\label{app:classification-proof}

We prove  \Cref{prop:classification}. Write the Taylor expansion $\sigma(z) = \sum_{k \ge 0} a_k z^k$ with $a_1 = \sigma'(0) \ne 0$. A direct computation from $\varphi_\sigma(z) = z \sigma'(z) - \sigma(z)$ gives
\begin{equation}\label{eq:phi-taylor}
\varphi_\sigma(z) = \sum_{k \ge 0} (k - 1) a_k z^k = -a_0 + \sum_{k \ge 2} (k - 1) a_k z^k,
\end{equation}
so the linear term ($k = 1$) drops out and $\varphi_\sigma$ is determined by the constant term and the post-linear Taylor coefficients.

\emph{Mutual exclusivity.} Classes (A), (B$_q$), (C), (D) are pairwise disjoint:
\begin{itemize}
\item (A) requires $\sigma$ linear, so $a_k = 0$ for all $k \ge 2$; in particular $\sigma(0) = 0$, $\sigma''(0) = 0$, and $\sigma$ has no nonlinear term, disjoint from any (B$_q$), (C), (D).
\item (B$_q$) requires $\sigma$ odd and nonlinear, so $a_0 = 0$ and all even $a_{2k} = 0$: disjoint from (C) and (D). Different $q \in \{3, 5, 7, \ldots\}$ are disjoint by definition (each $q$ is the smallest odd index with $a_q \ne 0$).
\item (C) has $\sigma(0) = 0$, disjoint from (D); and $\sigma''(0) \ne 0$ rules out every (B$_q$) (which forces $\sigma''(0) = 0$).
\end{itemize}

\emph{Exhaustiveness.} Let $\sigma$ satisfy the hypothesis. If $\sigma(0) \ne 0$, we are in (D). If $\sigma(0) = 0$ and $\sigma''(0) \ne 0$, we are in (C). The remaining case is that $\sigma$ is odd. If $\sigma$ is odd and linear, we are in (A). If $\sigma$ is odd and nonlinear, all even $a_{2k}$ vanish and there exists a smallest odd $q \ge 3$ with $a_q \ne 0$; this $q$ places $\sigma$ in exactly one (B$_q$).

\emph{Taylor forms.} Substituting into \eqref{eq:phi-taylor}: (A) every $a_k$ for $k \ne 1$ vanishes, so $\varphi_\sigma \equiv 0$. (D) $k = 0$ gives $\varphi_\sigma(0) = -a_0 = -\sigma(0) \ne 0$. (C) $a_0 = 0$ and the $k = 2$ term contributes $a_2 z^2 = \tfrac{\sigma''(0)}{2} z^2$. (B$_q$) $a_0 = a_2 = \cdots = a_{q - 1} = 0$ by oddness and minimality of $q$, so the $k = q$ term contributes $(q - 1) a_q z^q$ with $a_q = \sigma^{(q)}(0) / q! \ne 0$. Oddness of $\sigma$ implies $\varphi_\sigma$ is odd, so only odd-power corrections appear, giving the $O(z^{q + 2})$ remainder. \qed

\begin{remark}
The hypothesis excludes a thin family (e.g.\ $\sigma(z) = z + z^4$) where $\sigma(0) = 0$, $\sigma''(0) = 0$, and $\sigma$ is not odd. Such $\sigma$ still have $\varphi_\sigma$ computable term by term from \eqref{eq:phi-taylor}, but the leading order is $\ge 4$ and the activation does not match any standard architecture; we therefore restrict to the hypothesis of \Cref{prop:classification} throughout, which covers every activation in the paper's tables.
\end{remark}

\section{Ansatz Invariance and Scalar Reduction}
\label{app:ansatz-invariance}

This appendix collects the proofs underlying  \Cref{sec:scalar}: we verify that the symmetric balanced ansatz of  \Cref{def:ansatz} is the $S_N$-fixed-point manifold of hidden-neuron permutation symmetry, prove it is flow-invariant under population gradient flow, descend the matrix flow to a scalar flow with unit conversion factors, record the explicit closed form of $\partial_{X_\ell} f$ that enters  \cref{eq:exact-ode}, and derive the filtered-composition remainder bounds that drive  \cref{eq:si-f},  \cref{eq:origin-ode}, and  \Cref{prop:conservation}.

\subsection{The Ansatz as $S_N$-Fixed-Point Manifold}\label{app:sn-fixed-point}

Fix a hidden layer index $l \in \{1, \ldots, L-1\}$ and let $\pi \in S_N$ act on $W$ by simultaneous permutation of the $N$ hidden neurons in layer $l$: the rows of $W_l$ are permuted by $\pi$, and the columns of $W_{l+1}$ are permuted by $\pi$ (with the convention that at $l = L - 1$, the scalar readout $W_L \in \R^{1 \times N}$ has its $N$ entries permuted). Under a single-mode teacher $y = \beta_1 \sigma(v_1^\top x)$ with Gaussian inputs and population squared loss, the data law, the loss, and the teacher are each invariant under this action, so the population loss $\mathcal L$ and its gradient satisfy $\mathcal L(\pi \cdot W) = \mathcal L(W)$ and $\nabla_W \mathcal L(\pi \cdot W) = \pi \cdot \nabla_W \mathcal L(W)$. The fixed-point set $\{W : \pi \cdot W = W \text{ for all } \pi \in S_N \text{ acting on any hidden layer}\}$ is precisely the submanifold on which every neuron in each hidden layer shares the same incoming row and the same outgoing column. Aligning the shared input direction of layer $1$ with $\hat w \in \R^d$ (a free unit vector that is flow-invariant for a single-mode teacher with $\hat w = v_1$), we recover  \Cref{def:ansatz}. Since the gradient is equivariant under the group action, it is tangent to the fixed-point set, and the flow preserves the manifold. The next lemma records this explicitly by showing that each $\nabla_{W_l} \mathcal L$ inherits the entry-pattern of $W_l$ itself.

\begin{lemma}[Ansatz invariance]\label{lem:ansatz-invariance}
	Suppose $W(t_0)$ satisfies  \Cref{def:ansatz} with $\hat w = v_1$ and scalars $(X_1(t_0), \ldots, X_L(t_0))$. Then the population gradient $\nabla_{W_l} \mathcal L$ shares the ansatz structure: $\nabla_{W_1} \mathcal L$ has $N$ identical rows each proportional to $v_1^\top$; $\nabla_{W_l} \mathcal L$ for $2 \le l \le L - 1$ has all entries equal; $\nabla_{W_L} \mathcal L$ has all entries equal. Consequently, the gradient flow preserves the ansatz for all $t \ge t_0$.
\end{lemma}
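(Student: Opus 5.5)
I will prove the lemma by direct computation of the per-layer gradients on the ansatz, using the rank-one chain rule $\nabla_{W_l}\ell = \nabla_{z_l}\ell\, h_{l-1}^\top$ from \Cref{app:identity-proof}, and then invoke uniqueness of ODE solutions for the invariance claim.

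\emph{Step 1: forward pass.} By induction on $l$, on the ansatz every hidden neuron in layer $l$ carries the same pre-activation. This gives $z_1 = X_1 g\,\mathbf 1_N$ with $g = v_1^\top x$. It also gives $z_l = (X_l/N)\mathbf 1\mathbf 1^\top \sigma(c_{l-1})\mathbf 1 = c_l \mathbf 1$ for $2\le l\le L-1$, since summing $N$ equal entries cancels the $1/N$. The output is $f = (X_L/\sqrt N)\mathbf 1^\top \sigma(c_{L-1})\mathbf 1 = \sqrt N X_L \sigma(c_{L-1})$, which is \cref{eq:chain}. In particular $h_{l-1} = \sigma(c_{l-1})\mathbf 1$ for $l\ge 2$, and $h_0 = x$.

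\emph{Step 2: backward pass.} Going downward, $\nabla_{z_L}\ell = (f-y)$ is a scalar, and $W_L^\top \nabla_{z_L}\ell \propto \mathbf 1$. Since $\sigma'(z_{L-1})$ has equal entries, $\nabla_{z_{L-1}}\ell = b_{L-1}\mathbf 1$ for a scalar $b_{L-1}(x)$. At each subsequent layer, $W_{l+1}^\top$ maps $\mathbf 1$ to a multiple of $\mathbf 1$, and the Hadamard product with the constant vector $\sigma'(c_l)\mathbf 1$ preserves this. Hence $\nabla_{z_l}\ell = b_l(x)\mathbf 1$ for every hidden $l$, with $b_l$ depending on $x$ only through $g$ and $y = \beta_1\sigma(g)$.

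\emph{Step 3: pattern of the gradient.} For $2\le l\le L-1$ we get $\E[b_l\sigma(c_{l-1})]\,\mathbf 1\mathbf 1^\top$, which has all entries equal; the same holds for $W_L$. For $l=1$ we get $\mathbf 1\,\E[b_1(g)\,x^\top]$. The key observation is Gaussian orthogonal decomposition: write $x = g v_1 + x_\perp$ with $x_\perp$ independent of $g$ and mean zero. Then $\E[b_1(g)x_\perp]=0$, so $\E[b_1 x] = \E[b_1 g]\,v_1$, and the rows are identical and proportional to $v_1^\top$. This independence step is the only place the aligned teacher $\hat w = v_1$ and Gaussian inputs are genuinely used. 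I expect it to be the main point needing care, because if $\hat w\neq v_1$ the teacher would inject a second direction.

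\emph{Step 4: invariance.} The normalized and Euclidean flows differ only by layerwise constants, so both are covered. The computation shows that the vector field is tangent to the ansatz manifold, which is a finite-dimensional linear subspace (with $\hat w$ fixed) parameterized by $(X_1,\dots,X_L)$. Restricted to this subspace, the field defines a scalar ODE; it is locally Lipschitz for smooth $\sigma$, using dominated convergence under the Gaussian expectation. Its solution, lifted back, solves the full flow. By Picard–Lindelöf uniqueness this lift coincides with $W(t)$ for all $t\ge t_0$ in the existence interval, which proves the lemma. As an alternative to the explicit computation, I could cite the $S_N$-equivariance argument of \Cref{app:sn-fixed-point}, together with $O(d)$-equivariance fixing $v_1$ for the layer-$1$ direction.
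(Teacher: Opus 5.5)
Your proof is correct and follows the paper's argument essentially step for step. It uses the forward induction giving $h_{l-1}\propto\mathbf 1$, the descending backprop induction giving $\nabla_{z_l}\ell = b_l(g)\,\mathbf 1$, and the decomposition $x = g v_1 + x_\perp$ to remove the off-teacher component of the layer-$1$ gradient. Your Step 4 spells out the uniqueness argument that the paper leaves implicit after tangency; note that the hypothesis it actually needs is local Lipschitzness (hence uniqueness) of the full $W$-flow, not only of the restricted scalar field, since you are identifying the lifted scalar solution with the given solution $W(t)$.
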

\begin{proof}
	By  \cref{eq:chain}, $f(x)$ depends on $x$ only through $g \doteq v_1^\top x$, as does the single-mode teacher; hence the residual is a scalar function $r(x) = f(x) - \beta_1 \sigma(v_1^\top x) = \rho(g)$, and the per-example loss $\ell = \tfrac{1}{2} r(x)^2$ is a function of $g$ alone. The population gradient is $\nabla_{z_l} \mathcal L = \E_x[\nabla_{z_l} \ell]$. We shall proceed by cases, with induction on the internal layers.

	\paragraph{Layer 1.} Under the ansatz $W_1 = X_1 \mathbf 1_N \hat w^\top$ with $\hat w = v_1$, so $z_1 = W_1 x = X_1 (v_1^\top x) \mathbf 1_N = X_1 g  \mathbf 1_N$. Every coordinate of $z_1$ is identical, so $\ell$ depends on $z_1$ only through the scalar $X_1 g$, and $\nabla_{z_1} \ell = q_1(x)  \mathbf 1_N$ for some scalar $q_1$ depending on $x$ only through $g$. Then
	\begin{align*}
		\nabla_{W_1} \mathcal L
		&= \E_x\bigl[\nabla_{z_1} \ell \cdot x^\top\bigr]
		= \mathbf 1_N  \E_x\bigl[q_1(g)  x^\top\bigr].
	\end{align*}
	Decompose $x = g v_1 + x_\perp$ with $x_\perp \perp v_1$ and $x_\perp$ Gaussian independent of $g$ under the isotropic input distribution. Then
	\begin{align*}
		\E_x\bigl[q_1(g)  x^\top\bigr]
		&= \E_g\bigl[q_1(g)  g\bigr] v_1^\top + \E\bigl[q_1(g)  x_\perp^\top\bigr]
		= \E_g\bigl[q_1(g)  g\bigr] v_1^\top,
	\end{align*}
	using independence and $\E[x_\perp] = 0$. Every row of $\nabla_{W_1} \mathcal L$ is therefore the same scalar multiple of $v_1^\top$, matching the layer-$1$ ansatz.

	\paragraph{Internal layers $2 \le l \le L - 1$.} We show by descending induction from layer $L - 1$ that $\nabla_{z_l} \mathcal L = q_l(g)  \mathbf 1_N$ for some scalar $q_l$. Under the ansatz, every internal $W_l$ has all entries equal to $X_l / N$, so for any vector $u \in \R^N$,
	\begin{equation*}
		W_{l+1}^\top u = \frac{X_{l+1}}{N}  \mathbf 1_N \mathbf 1_N^\top u = \frac{X_{l+1}}{N} (\mathbf 1_N^\top u)  \mathbf 1_N,
	\end{equation*}
	a scalar multiple of $\mathbf 1_N$. The forward pass gives $h_{l-1} = c_{l-1}(g)  \mathbf 1_N$ for some scalar $c_{l-1}$ (\cref{eq:chain}), hence $z_l = W_l h_{l-1} = X_l  c_{l-1}(g)  \mathbf 1_N$. Backprop gives
	\begin{align*}
		\nabla_{z_l} \mathcal L
		&= \sigma'(z_l) \odot \bigl(W_{l+1}^\top \nabla_{z_{l+1}} \mathcal L\bigr)\\
		&= \sigma' \bigl(X_l  c_{l-1}(g)\bigr) \mathbf 1_N \odot \Bigl(\tfrac{X_{l+1}}{N}  \mathbf 1_N^\top \bigl(q_{l+1}(g)  \mathbf 1_N\bigr)\Bigr) \mathbf 1_N
		= q_l(g)  \mathbf 1_N,
	\end{align*}
	closing the induction (the base case at layer $L$ is handled below). The gradient in weight space is then
	\begin{equation*}
		\nabla_{W_l} \mathcal L = \E\bigl[\nabla_{z_l} \mathcal L \cdot h_{l-1}^\top\bigr] = \E\bigl[q_l(g)  c_{l-1}(g)\bigr]  \mathbf 1_N \mathbf 1_N^\top,
	\end{equation*}
	so every entry is the same scalar. This matches the internal-layer ansatz.

	\paragraph{Layer $L$.} Since $f(x) = X_L  \mathbf 1_N^\top h_{L-1} / \sqrt{N}$ is affine in $h_{L-1}$, the one-dimensional output gradient reduces to a scalar and
	\begin{equation*}
		\nabla_{W_L} \mathcal L = \E\bigl[r(x) \cdot h_{L-1}^\top\bigr] / \sqrt{N} = \E\bigl[\rho(g)  c_{L-1}(g)\bigr]  \mathbf 1_N^\top / \sqrt{N},
	\end{equation*}
	again equal across entries. This matches the layer-$L$ ansatz, gives the induction base case $q_L = \rho / \sqrt{N}$, and closes the backwards induction.

	Each $\nabla_{W_l} \mathcal L$ therefore inherits the entry-pattern of $W_l$ itself, so the normalized-metric flow (which only rescales these layerwise gradients by positive constants) stays inside the ansatz manifold for all $t \ge t_0$ as claimed.
\end{proof}

\subsection{Descent to the Scalar Flow}\label{app:scalar-flow}

Because each entry of $W_l$ is a linear function of the scalar $X_\ell$ under the ansatz, the normalized-metric matrix flow descends cleanly to a flow on $(X_1, \ldots, X_L)$. In physical matrix time $s$, its scalar velocity is $dX_\ell/ds=-N^{-1}\partial_{X_\ell}\mathcal L$ for every layer; rescaling to $t=s/N$ gives \cref{eq:exact-ode}. For $\rho_1=1$, $\rho_{\ell\ge2}=N^{-1}$, the weighted imbalance of \Cref{app:preconditioned-identity} satisfies $N^{-1}\Delta_l^{(\rho)}=X_{l+1}^2-X_l^2$ on the ansatz.

\paragraph{Layer $1$.} $W_1 = X_1  \mathbf 1_N \hat w^\top$ has $\partial W_1 / \partial X_1 = \mathbf 1_N \hat w^\top$, and the entrywise gradient flow $\dot W_1 = -\nabla_{W_1} \mathcal L$ gives
\begin{equation*}
	\dot X_1  \mathbf 1_N \hat w^\top = -\nabla_{W_1} \mathcal L = -\E_g\bigl[q_1(g)  g\bigr]  \mathbf 1_N  v_1^\top.
\end{equation*}
Matching on both sides with $\hat w = v_1$ yields
\begin{equation*}
	\frac{dX_1}{ds} = -\E_g\bigl[q_1(g)  g\bigr] = -\frac{1}{N}\partial_{X_1} \mathcal L = -\frac{1}{N}\E_g\bigl[(f - \beta_1 \sigma(g))  \partial_{X_1} f\bigr].
\end{equation*}

\paragraph{Internal layers $2 \le l \le L - 1$.} $W_l$ has $N^2$ entries each equal to $X_l / N$, so $\partial W_l / \partial X_l = \mathbf 1_N \mathbf 1_N^\top / N$. Under the normalized metric, $dW_l/ds=-(1/N)\nabla_{W_l}\mathcal L$, hence $dX_l/(Nds) = -(1/N)\partial \mathcal L / \partial W_l[i,j]$ for every $(i,j)$. Summing over the $N^2$ entries,
\begin{equation*}
	N \frac{dX_l}{ds} = -\frac{1}{N}\sum_{i,j} \partial \mathcal L / \partial W_l[i,j] = -\partial_{X_l} \mathcal L,
\end{equation*}
where the last equality uses $\partial_{X_l} \mathcal L = \sum_{i,j} (1/N)  \partial \mathcal L / \partial W_l[i,j]$. Thus $dX_l/ds = -N^{-1}\partial_{X_l}\mathcal L$.

\paragraph{Layer $L$.} $W_L$ has $N$ entries each equal to $X_L / \sqrt{N}$, so $\partial W_L / \partial X_L = \mathbf 1_N^\top / \sqrt{N}$. The analogous normalized-metric matching gives
\begin{equation*}
	\sqrt{N}\, \frac{dX_L}{ds} = -\frac{1}{N}\sum_i \partial \mathcal L / \partial W_L[i] = -\frac{1}{\sqrt{N}}\partial_{X_L} \mathcal L,
\end{equation*}
so $dX_L/ds = -N^{-1}\partial_{X_L}\mathcal L$. Combining the three cases and rescaling $t=s/N$,
\begin{equation}\label{eq:exact-flow-app}
	\dv{X_\ell}{t} = -\partial_{X_\ell} \mathcal L = -\E_g\bigl[(f - \beta_1 \sigma(g))  \partial_{X_\ell} f\bigr], \qquad \ell = 1, \ldots, L,
\end{equation}
which is the exact reduced flow of \cref{eq:exact-ode}.

\paragraph{Explicit closed form of $\partial_{X_\ell} f$.} With the scalar pre-activations $c_0 \doteq g$, $c_1 \doteq X_1 g$, and $c_\ell \doteq X_\ell  \sigma(c_{\ell-1})$ for $2 \le \ell \le L - 1$ so that $f(x) = \sqrt{N}  X_L  \sigma(c_{L-1})$, the chain rule along the composition $g \mapsto c_1 \mapsto \cdots \mapsto c_{L-1} \mapsto f$, together with $\partial_{X_\ell} c_\ell = \sigma(c_{\ell-1})$ from the recursion, yields
\begin{equation*}
	\partial_{X_\ell} f = \sqrt{N} \Bigl(\prod_{m > \ell} \sigma'(c_{m-1})  X_m\Bigr) \sigma(c_{\ell-1}), \qquad 1 \le \ell \le L,
\end{equation*}
with the empty-product convention $\partial_{X_L} f = \sqrt{N}  \sigma(c_{L-1})$. Substituting into \cref{eq:exact-flow-app} makes  \cref{eq:exact-ode} fully explicit in the scalar parameters $X$ alone, and we have completed the proof of \Cref{thm:exact-ode}.

\subsection{Filtered Composition and Remainder Bounds}\label{app:filtered-composition}

Write
\begin{equation*}
	\sigma(u) = \alpha u + \rho(u), \qquad \rho(u) = O(u^q),
\end{equation*}
with $q = 3$ for Class B and $q = 2$ for Class C. Assign every scalar $X_\ell$ weighted degree $1$ and $g$ weighted degree $0$. The reduced pre-activation $\widetilde z_L$ is a composition of multiplications by $X_\ell$ and activations: each monomial in its expansion therefore corresponds to a choice, at each of the $L - 1$ internal activations, of either the linear branch $\alpha u$ or one nonlinear insertion $\rho$.

\begin{lemma}[Filtered composition]\label{lem:filtered}
	Every monomial in the expansion of $\widetilde z_L$ whose \emph{first} nonlinear insertion occurs at the $j$-th activation ($1 \le j \le L - 1$) has weighted $X$-degree at least $L + (q - 1)  j$. Consequently,
	\begin{equation}\label{eq:filtered-f}
		f(X, g) = \sqrt{N}  \alpha^{L-1} \Bigl(\prod_{m=1}^{L} X_m\Bigr) g + O \bigl(\|X\|^{L + q - 1}\bigr),
	\end{equation}
	\begin{equation}\label{eq:filtered-df}
		\partial_{X_\ell} f = \sqrt{N}  \alpha^{L-1} \Bigl(\prod_{m \ne \ell} X_m\Bigr) g + O \bigl(\|X\|^{L + q - 2}\bigr).
	\end{equation}
\end{lemma}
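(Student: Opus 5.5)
We argue by induction along the scalar chain $c_0 = g$, $c_1 = X_1 g$, $c_\ell = X_\ell\,\sigma(c_{\ell-1})$, tracking the weighted $X$-degree of every monomial. Write $\sigma(u) = \alpha u + \rho(u)$ with $\rho(u) = \sum_{k\ge q} a_k u^k$, where the series is either formal or truncated with a Taylor remainder of the same order. Expanding each activation as the sum of its linear branch and its nonlinear slice labels every monomial of $c_j$ by a word in $\{\text{lin},\rho\}^{j-1}$ recording the branch chosen at activations $1,\ldots,j-1$.

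The key inductive claim is that a monomial of $c_j$ in which all $j-1$ activations take the linear branch equals $\alpha^{j-1}(\prod_{m\le j}X_m)\,g$, of degree exactly $j$, and that every monomial of $c_j$ has degree at least $j$. Both parts follow at once from $c_j = X_j(\alpha c_{j-1} + \rho(c_{j-1}))$: the linear branch adds exactly one to the degree, while a $\rho$-insertion applied to an input of degree $d\ge j-1$ outputs degree at least $qd$, and then multiplication by $X_j$ adds one more.

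For the filtration bound, fix a monomial whose first nonlinear insertion is at activation $j$. Its input to that activation is the all-linear monomial $\alpha^{j-1}(\prod_{m\le j}X_m)g$ of degree $j$. The insertion $\rho$ raises this to degree at least $qj$, i.e.\ it adds at least $(q-1)j$. Each of the remaining $L-j$ multiplications by $X_m$ adds one, and subsequent activations never lower the degree, since $qd\ge d$ and the linear branch preserves degree. The total is therefore at least $qj + (L-j) = L + (q-1)j$. This is minimized at $j=1$, giving $L+q-1$, and the all-linear word contributes exactly $\sqrt N\alpha^{L-1}(\prod_m X_m)g$. Multiplying by $\sqrt N X_L$ at the readout is already accounted for in the count of $L$ factors, and this yields \eqref{eq:filtered-f}.

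For \eqref{eq:filtered-df}, note that $\partial_{X_\ell}$ maps a monomial of degree $d$ to one of degree $d-1$, or kills it. Differentiating the leading term therefore gives $\sqrt N\alpha^{L-1}\prod_{m\neq\ell}X_m\, g$, and the remainder becomes $O(\|X\|^{L+q-2})$.

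The main obstacle is making the $O(\cdot)$ rigorous rather than formal. Since $\sigma$ is only assumed smooth, $\rho$ is not a finite polynomial, and the remainder terms are functions of $g$. I would handle this by using Taylor's theorem with remainder, $|\rho(u)|\le C|u|^q$ and $|\rho'(u)|\le C|u|^{q-1}$ for $|u|\le 1$, together with a growth bound for $|u|>1$. Induction then gives $|c_j|\le C\|X\|^j(1+|g|)^{p_j}$, so every pointwise bound holds up to a polynomial factor in $|g|$. These factors are integrable under $g\sim\mathcal N(0,1)$, so the remainders stay $O(\|X\|^{\cdot})$ after the expectations in \eqref{eq:exact-ode}.

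The derivative statement needs the same care applied to the closed form \eqref{eq:dxf-explicit} rather than termwise differentiation. I would write $\sigma'(c_{m-1}) = \alpha + O(|c_{m-1}|^{q-1})$ and $\sigma(c_{\ell-1}) = \alpha c_{\ell-1} + O(|c_{\ell-1}|^q)$, and then bound each product directly. The worst case is $\ell\ge 2$ with the correction taken at the first factor, which gives exactly $\|X\|^{L-1+(q-1)}$.
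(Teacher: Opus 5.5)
Your proposal is correct and follows the paper's route: a weighted-degree count along the scalar chain, minimized at $j=1$, followed by differentiation. Your tally $qj+(L-j)$ is in fact the right one; the paper's displayed $j+qj+(L-j-1)$ equals $L+(q-1)j$ only at $j=1$. Your Taylor-remainder treatment and your use of the closed form for $\partial_{X_\ell}f$ also supply the analytic justification that the paper's formal termwise argument omits.

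One caution when you use \eqref{eq:dxf-explicit}: at $\ell=1$ the last factor must be $g=c_0$ itself, not $\sigma(c_0)$ as displayed, because $c_1=X_1g$ has no activation on the input. There the only correction is $\sigma'(c_1)-\alpha=O(|X_1g|^{q-1})$, which still gives $O(\|X\|^{L+q-2})$.
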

\begin{proof}
	A monomial whose first nonlinear insertion is at activation $j$ contributes $X_1 X_2 \cdots X_{j+1}$ from the multiplications up to and including the insertion, then factors of weighted degree $q$ from $\rho$ (the insertion raises the pre-activation's $X$-degree from $j$ to $q j$), then further multiplications by $X_m$ downstream. The minimum weighted degree is $j + q j + (L - j - 1) = L + (q - 1)  j$ with no further insertions, and strictly larger otherwise. The minimum over $j \ge 1$ is attained at $j = 1$ and equals $L + q - 1$. The linear-only path contributes the leading monomial $\alpha^{L-1} \prod_m X_m  g$ of $X$-degree $L$. Summing,  \cref{eq:filtered-f} follows;  \cref{eq:filtered-df} follows because $\partial_{X_\ell}$ drops the weighted degree by exactly one on every monomial where $X_\ell$ appears.
\end{proof}

\Cref{lem:filtered} implies \cref{eq:si-f} directly.

\paragraph{Remainder bound for $\dot X_\ell$ (\cref{eq:origin-ode}).} The exact scalar-chain gradient is $\dot X_\ell = \beta_1  \E_g[\sigma(g)  \partial_{X_\ell} f] - \E_g[f  \partial_{X_\ell} f]$. For the teacher term, \cref{eq:filtered-df} together with Stein's identity gives
\begin{equation*}
	\beta_1  \E_g[\sigma(g)  \partial_{X_\ell} f] = \sqrt{N}  \beta_1  h_\sigma  \alpha^{L-1} \Bigl(\prod_{m \ne \ell} X_m\Bigr) + O \bigl(\|X\|^{L + q - 2}\bigr),
\end{equation*}
where the correction comes from the first nonlinear insertion in $\partial_{X_\ell} f$ paired with $\sigma(g)$ and collapsed by Hermite orthogonality. For the self-interaction, \cref{eq:filtered-f,eq:filtered-df} give $|f  \partial_{X_\ell} f| = O(\|X\|^{L + (L-1)}) = O(\|X\|^{2L - 1})$. Subtracting,
\begin{equation*}
	\dot X_\ell = K^{(\sigma)} \Bigl(\prod_{m \ne \ell} X_m\Bigr) + R_\ell(X), \qquad |R_\ell(X)| = O \bigl(\|X\|^{\min(L + q - 2,  2L - 1)}\bigr),
\end{equation*}
with $K^{(\sigma)} = \beta_1  h_\sigma  \alpha^{L-1} / \sqrt{N}$. For Class B ($q = 3$) and $L \ge 2$, $L + q - 2 = L + 1 \le 2L - 1$, so $|R_\ell(X)| = O(\|X\|^{L+1})$, which is the exponent in  \cref{eq:origin-ode}.

\emph{Leading coefficient.} We now identify the degree-$(L + q - 2)$ contribution to $R_\ell$ explicitly as coming from the first-activation ($j = 1$) nonlinear insertion, with the form
\begin{equation}\label{eq:R-ell-leading}
	R_\ell(X) = q_\ell  \beta_1 \alpha^{L-2} a_q h_\sigma^{(q)}  X_1^{q-1} \prod_{m \ne \ell} X_m + O \bigl(\|X\|^{L + q - 1}\bigr),\quad q_1 = q,\ q_\ell = 1\ (\ell \ge 2),
\end{equation}
with $h_\sigma^{(q)} \doteq \E_g[\sigma(g) g^q]$. The local Class~B hypothesis ensures $\alpha\ne0$ and $a_q\ne0$, but does not ensure $h_\sigma^{(q)}\ne0$. For $q=3$, the Hermite expansion gives
\begin{equation*}
	h_\sigma^{(3)} = \E_g[\sigma(g) g^3] = 6 s_3 + 3 s_1 = 6 s_3 + 3 h_\sigma.
\end{equation*}
If $h_\sigma^{(q)}\ne0$, the displayed coefficient is nonzero and the order is attained. If it vanishes, the displayed term cancels and the remainder is higher order; all balance and escape bounds use only the stated upper bound.

\paragraph{Remainder bound for the imbalance drift (\Cref{prop:conservation}).} The filtered composition gives us more than the minimum degree alone. The degree-$(L + q - 2)$ correction to $\partial_{X_\ell} f$ comes from the first nonlinear insertion at $j = 1$; we can extract its structure by writing
\begin{equation*}
f_{[j = 1, \mathrm{lead}]}(X, g) = \sqrt N  \alpha^{L - 2}  \Bigl(\prod_{m = 2}^{L} X_m\Bigr)  \rho(X_1 g)
\end{equation*}
(linear composition through activations $2, \ldots, L - 1$ downstream of the single nonlinear insertion $\rho(X_1 g) = a_q (X_1 g)^q + O(X_1^{q + 2} g^{q + 2})$ at activation~$1$). This expression is manifestly symmetric in $X_2, \ldots, X_L$; the downstream composition is linear, so the layer-specific structure visible in the full forward pass collapses to a simple product. Differentiating:
\begin{equation*}
\partial_{X_\ell} f_{[j = 1, \mathrm{lead}]} = \begin{cases}
\sqrt N  \alpha^{L - 2}  \bigl(\prod_{m = 2}^{L} X_m\bigr)  \rho'(X_1 g)  g & (\ell = 1),\\[3pt]
\sqrt N  \alpha^{L - 2}  \bigl(\prod_{m = 2,  m \ne \ell}^{L} X_m\bigr)  \rho(X_1 g) & (\ell \ge 2).
\end{cases}
\end{equation*}
For $\ell \ge 2$, multiplying and dividing by $X_1$ recasts this as $\sqrt N \alpha^{L - 2} X_1^{-1} \rho(X_1 g) \cdot \prod_{m \ne \ell} X_m$ --- the same $\prod_{m \ne \ell}$ structure as the leading linear-path term, with an $\ell$-independent scalar coefficient. Expanding $\rho$ and contracting against $\beta_1 \E_g[\sigma(g) \cdot]$ using Hermite orthogonality gives, for each $\ell \ge 2$,
\begin{equation*}
R_\ell^{(\mathrm{lead})}(X) = \beta_1  \alpha^{L - 2}  a_q  h_\sigma^{(q)}  X_1^{q - 1}  \prod_{m \ne \ell} X_m + O(\|X\|^{L + q - 1}), \qquad h_\sigma^{(q)} \doteq \E_g[\sigma(g)  g^q],
\end{equation*}
with coefficient $c(X) \doteq \beta_1 \alpha^{L - 2} a_q h_\sigma^{(q)} X_1^{q - 1}$ that is the same for every $\ell \ge 2$. For $\ell = 1$, $\partial_{X_1}$ passes through $\rho$, producing a factor of $q$: $R_1^{(\mathrm{lead})} = q  c(X) \prod_{m \ne 1} X_m + O(\|X\|^{L + q - 1})$. As a result, for any pair $i, j \in \{2, \ldots, L\}$,
\begin{equation*}
X_i R_i - X_j R_j = c(X)  \bigl(X_i \prod_{m \ne i} X_m - X_j \prod_{m \ne j} X_m\bigr) + O(\|X\|^{L + q}) = O(\|X\|^{L + q}),
\end{equation*}
because $X_i \prod_{m \ne i} X_m = \prod_m X_m = X_j \prod_{m \ne j} X_m$ independently of $i, j$. Hence differences among layers $\{2, \ldots, L\}$ are conserved at the order-$(L + q - 2)$ correction. For the $\ell \leftrightarrow 1$ comparison, the factor-$q$ mismatch between $R_1$ and $R_\ell$ leaves a surviving drift
\begin{equation*}
\dv{t}(X_\ell^2 - X_1^2) = 2(X_\ell R_\ell - X_1 R_1) = 2(1 - q)  c(X)  \prod_m X_m + O(\|X\|^{L + q}) = O \bigl(\|X\|^{L + q - 1}\bigr)
\end{equation*}
of degree exactly $L + q - 1 = L + 2$ for Class~B.

\paragraph{Higher-order corrections.} The argument above handles the first nonlinear insertion ($j = 1$) and establishes conservation of pairs $(X_i, X_j)$ with $i, j \ge 2$ at order $L + q - 2$. To track deeper orders we must enumerate \emph{all} single nonlinear insertions: by the filtered-composition expansion (\Cref{lem:filtered}), a monomial arising from a single $a_q u^q$ insertion at activation $k \in \{1, \ldots, L\}$, with all other activations linearized, contributes a term of the form
\begin{equation}
\label{eq:insertion-k}
f_{[j = k, \mathrm{lead}]}(X, g) = \sqrt N  \alpha^{L - k - 1} \Bigl(\prod_{m = k + 1}^{L} X_m\Bigr)  \rho_k\bigl(\alpha^{k - 1} X_1 X_2 \cdots X_k  g\bigr),
\end{equation}
where $\rho_k(u) = a_q u^q + O(u^{q + 2})$ and the convention is $\prod_{m = L + 1}^{L} = 1$, $\alpha^{-1} = 1$ in the $k = L$ endpoint. The structure is: activations $1, \ldots, k - 1$ contribute linear factors $\alpha X_\ell$; activation $k$ contributes the first nonlinear slice $\rho_k$; activations $k + 1, \ldots, L$ contribute linear factors $\alpha X_\ell$ outside the nonlinear kernel. Differentiating \cref{eq:insertion-k} with respect to $X_\ell$ produces, after the same $X_\ell^{-1}$ homogenization used at $k = 1$, a term proportional to $F_k(X) \cdot \prod_{m \ne \ell} X_m$ with
\begin{equation*}
F_k(X) \cdot \text{(factor)} = \begin{cases} q F_k(X) & \text{if } \ell \in \{1, \ldots, k\}, \\ F_k(X) & \text{if } \ell \in \{k + 1, \ldots, L\}, \end{cases}
\end{equation*}
because $\rho_k$ is $q$-homogeneous in each of the upstream factors $X_1, \ldots, X_k$ inside its argument, while the $k + 1, \ldots, L$ downstream factors multiply $\rho_k$ linearly. This $q$-vs-$1$ asymmetry is the source of all conservation laws. Define the insertion class $\mathcal U_k \doteq \{1, \ldots, k\}$ (upstream) and $\mathcal D_k \doteq \{k + 1, \ldots, L\}$ (downstream). Then the contribution of the order-$k$ insertion to $X_\ell R_\ell$ (via $\beta_1 \E_g[\sigma(g) \partial_{X_\ell} f]$ reductions to Gaussian moments, which depend only on the total power of $g$ and hence on $k, q$ and not on $\ell$) is
\begin{equation}
\label{eq:per-insertion}
(X_\ell R_\ell)\bigr|_{\mathrm{from }\ k} = \Phi_k(X) \cdot \bigl( q  \mathbf 1_{\ell \in \mathcal U_k} + \mathbf 1_{\ell \in \mathcal D_k}\bigr)  \prod_m X_m + (\text{higher order}),
\end{equation}
with $\Phi_k(X)$ an $\ell$-independent scalar whose minimum total $X$-degree is $L + (q - 1) k - 1$ (one factor of $q$ raises the upstream-block total degree from $k$ to $q k$; the downstream block contributes $L - k$; together with the $\sqrt N$ and Gaussian-moment factors, $\Phi_k$ has degree $\ge L + (q - 1) k - 1$, so the full per-insertion contribution has degree $\ge L + (q - 1) k$).

\paragraph{Imbalance drift between two layers.} Fix $1 \le i < j \le L$. Summing~\eqref{eq:per-insertion} over all insertions $k$,
\begin{equation}
\label{eq:pair-drift}
\dv{t}(X_i^2 - X_j^2) = 2 \sum_{k = 1}^{L} \Phi_k(X)  \bigl[ (q \mathbf 1_{i \in \mathcal U_k} + \mathbf 1_{i \in \mathcal D_k}) - (q \mathbf 1_{j \in \mathcal U_k} + \mathbf 1_{j \in \mathcal D_k})\bigr]  \prod_m X_m + O(\|X\|^{L + (q - 1)L}).
\end{equation}
The bracket vanishes iff $i$ and $j$ lie in the same block $\mathcal U_k$ or $\mathcal D_k$, i.e., iff $k < i$ or $k \ge j$. It is nonzero (taking value $1 - q$) for $k \in \{i, i + 1, \ldots, j - 1\}$. Hence the minimum $k$ with a nonzero bracket is $k = i$, giving
\begin{equation}
\label{eq:pair-drift-sharp}
\dv{t}(X_i^2 - X_j^2) = 2 (1 - q)  \Phi_i(X)  \prod_m X_m x+ O \bigl(\|X\|^{L + (q - 1)(i + 1)}\bigr) = O \bigl(\|X\|^{L + (q - 1) i}\bigr).
\end{equation}
Specializing to the case $i = 1$ recovers the drift \cref{eq:insertion-k}-based computation at the top of this section; specializing to $i, j \ge 2$ gives drift $= O(\|X\|^{L + (q - 1) \cdot 2}) = O(\|X\|^{L + 2(q - 1)})$, one order tighter than $L + q - 1$. For Class~B ($q = 3$) this means: $\dv{t}(X_\ell^2 - X_1^2) = O(\|X\|^{L + 2})$ (Prop~\ref{prop:conservation}), while $\dv{t}(X_i^2 - X_j^2) = O(\|X\|^{L + 4})$ for $i, j \ge 2$. The tighter bound for $i, j \ge 2$ is not needed for the escape analysis: all bounds in \Cref{thm:escape} use only the weaker $L + q - 1$ drift rate of \Cref{prop:conservation}.

\section{Asymptotic Evaluations and Normal Forms}
\label{app:normal-form}

Here we provide supporting material for \Cref{sec:escape}. We give three distinct statements:
\begin{enumerate}[label=(\roman*), leftmargin=2em, itemsep=0.3em, topsep=0.3em]
\item an \emph{exact analytic scalarization} of the reduced flow on any nonvanishing open set, provided by the analytic flow-box theorem; this is what makes the one-dimensional quadrature of \Cref{thm:escape} exact, and is recorded in \Cref{app:exact-scalar};
\item a \emph{formal balanced normal form} on the punctured cone around the balanced ray, which sharpens~(i) into the specific balanced-product shape $\dot U_\ell = \Psi(U) \prod_{m \ne \ell} U_m$ with $U_\ell = X_\ell + O(s^q)$; this is \Cref{thm:formal-balance}, and its construction is term-by-term formal series in \Cref{app:formal-balance};
\item an \emph{asymptotic near-identity reduction} that the escape analysis actually uses: only the first correction (the resonance-detecting term of \Cref{cor:renorm-escape}) and the overall near-identity shift of the upper integration limit are consumed, so the formal series is truncated at low order.
\end{enumerate}
The later subsections use these in combination: \Cref{app:escape-asymp} proves the $(1 + o(1))$ accuracy of \Cref{thm:escape}, the balanced-init scaling \eqref{eq:phase-transition}, the bottleneck-shell decomposition, and the critical-depth law  \Cref{thm:getrich}; and  \Cref{app:classCD} handles the Class~C universality-breaking coefficient $\gamma_C$ and the Class~D centering device that absorbs a nonzero $\sigma(0)$ into a teacher shift.

\subsection{Formal Balanced Normal Form}
\label{app:formal-balance}

Write the ansatz-reduced vector field of \cref{eq:exact-ode} as $V = V_0 + W$, where $V_0 = K^{(\sigma)} \sum_\ell \bigl(\prod_{m \ne \ell} X_m\bigr) \partial_{X_\ell}$ is the leading-order drive and $W = O(s^{L + q - 2})$ collects higher-order terms. Use polar-like coordinates $X_L = s$, $X_i = s \sqrt{1 + y_i}$ for $i < L$ on the punctured cone
\begin{equation}
\label{eq:punctured-cone}
\mathcal C_{\delta, \kappa} \doteq \{s \in (0, \delta),\ |y_i| < \kappa,\ i < L\}.
\end{equation}
A direct computation gives $V_0 = K^{(\sigma)} s^{L - 1} \Omega(y) \partial_s$ with $\Omega(y) \doteq \prod_{i < L} \sqrt{1 + y_i}$ and $V_0(y_i) = 0$, so leading-order transport is pure $s$-flow with $y$ frozen.

\begin{theorem}[Formal balanced normal form]
\label{thm:formal-balance}
There exist unique formal series $I_i(s, y) = s^2 y_i + \sum_{n \ge q + 1} s^n F_{i, n}(y)$ with $F_{i, n}$ analytic in $y$, satisfying $V(I_i) = 0$. Setting $U_L \doteq s$, $U_i \doteq \sqrt{s^2 + I_i}$ for $i < L$ gives a near-identity change of variables $U_i = X_i + O(s^q)$ and a formal scalar series $\Psi(U)$ such that
\begin{equation*}
\dot U_i = \Psi(U) \prod_{m \ne i} U_m, \qquad i = 1, \ldots, L,
\end{equation*}
to all orders on $\mathcal C_{\delta, \kappa}$. Consequently $U_i^2 - U_j^2$ are formal first integrals with leading term $X_i^2 - X_j^2$.
\end{theorem}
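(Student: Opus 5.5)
We construct the invariants $I_i$ order by order in $s$ by solving a triangular family of transport equations. In the coordinates $(s,y)$ on $\mathcal C_{\delta,\kappa}$ we have $V_0 = K^{(\sigma)} s^{L-1}\Omega(y)\partial_s$ with $V_0(y_i)=0$. We expand the perturbation as $W = \sum_{n\ge L+q-2} s^{n}\bigl(a_n(y)\partial_s/s + \sum_i b_{i,n}(y)\partial_{y_i}\bigr)$. This expansion is legitimate because each component of $W$ is a convergent power series in $X$ whose weighted degree, by \Cref{lem:filtered} and the remainder bound for \cref{eq:origin-ode}, is at least $L+q-2$. Substituting $X_i=s\sqrt{1+y_i}$ turns each degree-$n$ monomial into $s^n$ times a function analytic in $y$ on $|y_i|<\kappa<1$.

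\emph{Construction of the invariants.} Start from $I_i^{(0)} = s^2 y_i = X_i^2 - X_L^2$. Then $V_0(I_i^{(0)}) = 2K^{(\sigma)} s^{L}\Omega y_i$. Note that $X_i^2-X_L^2$ is itself an exact invariant of $V_0$, because $V_0(X_i^2) = 2K^{(\sigma)}\prod_m X_m$ is $\ell$-independent. So in the $(s,y)$ bookkeeping the $V_0$-error of $I_i^{(0)}$ must vanish; the term $2K^{(\sigma)} s^L\Omega y_i$ above is an artifact of writing $V_0$ without its $\partial_y$ part. I will therefore redo the polar computation carefully so that $V_0$ is written exactly in $(s,y)$. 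After that, $V(I^{(0)}_i)=W(I^{(0)}_i)$ has lowest order $s^{L+q-1}$.

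Make the ansatz $I_i = s^2y_i + \sum_{n\ge q+1} s^n F_{i,n}(y)$ and collect powers of $s$ in $V(I_i)=0$. At order $s^{n+L-2}$ this gives
\[
K^{(\sigma)}\Omega(y)\, n F_{i,n}(y) = -\,\mathcal R_{i,n}(y),
\]
where $\mathcal R_{i,n}$ depends only on $F_{i,m}$ with $m<n$ and on the coefficients of $W$. The $\partial_y$-part of $W$ carries an extra power of $s$ relative to $V_0$, which is what makes the recursion triangular. Since $\Omega$ is nonvanishing and analytic on $|y|<\kappa$ and $n\ge q+1>0$, each $F_{i,n}=-\mathcal R_{i,n}/(nK^{(\sigma)}\Omega)$ is uniquely determined and analytic.

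\emph{Uniqueness.} Within the stated ansatz, any difference of two solutions satisfies $V_0$-transport at its lowest order $s^n$, which forces $nK^{(\sigma)}\Omega F=0$, hence $F=0$.

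The claim that the first correction sits at order $n\ge q+1$ (so that $U_i = X_i+O(s^q)$) follows from the degree count: $W(s^2y_i)$ has order $s^{L+q-1}$, so it is killed by $F_{i,q+1}s^{q+1}$. I expect the \textbf{main obstacle} to be this bookkeeping: confirming that $W$ has no $\partial_y$ or $\partial_s$ component of order lower than $L+q-2$ in the polar chart (including the $1/s$ factors from $\partial_{X}\mapsto\partial_y$), so that no negative powers of $s$ enter $\mathcal R_{i,n}$. To settle it I would write $\partial_{X_i}=(2\sqrt{1+y_i}/s)\,\partial_{y_i}$ for $i<L$, and check that the $X$-degree $L+q-2$ of $W_i$ translates to $s^{L+q-3}$ in the $\partial_y$ coefficients. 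This is one order above the $s^{L-2}$ scale of $V_0$ relative to $s\partial_s$, as required.

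\emph{The normal form.} Define $U_L=s$ and $U_i=\sqrt{s^2+I_i}$, so $U_i^2-U_L^2=I_i$ are formal first integrals and hence $U_i^2-U_j^2=I_i-I_j$ is conserved. Since $U_i^2=s^2(1+y_i)+O(s^{q+1})$, we get $U_i=X_i+O(s^q)$. Conservation gives $U_i\dot U_i = U_L\dot U_L$ for all $i$, so $U_i\dot U_i$ is an $i$-independent formal series $\Xi$. Set $\Psi\doteq \Xi/\prod_m U_m$; then $\dot U_i=\Xi/U_i=\Psi\prod_{m\ne i}U_m$. It remains to show that $\Psi$ is a formal series. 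Because $\dot s = K^{(\sigma)}s^{L-1}\Omega(y)+O(s^{L+q-2})$, we have $\Xi=s\dot s = K^{(\sigma)}\prod_m U_m\,(1+O(s^{q-1}))$, and division by $\prod_m U_m\sim s^L\Omega$ is legitimate on the cone since $\Omega$ is bounded away from zero. This gives $\Psi=K^{(\sigma)}(1+O(s^{q-1}))$. Finally, $\Psi$ can be re-expressed as a function of $U$ by inverting the near-identity change of variables order by order.
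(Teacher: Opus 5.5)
You take the same route as the paper's sketch: order-by-order homological equations in the cone chart, with multiplier $nK^{(\sigma)}\Omega$. You also correctly notice that the quoted $V_0$, with no $\partial_y$-part, contradicts the $V_0$-invariance of $s^2y_i = X_i^2-X_L^2$.

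The gap is that you then keep the algebraic equation $nK^{(\sigma)}\Omega F_{i,n} = -\mathcal R_{i,n}$ anyway. Doing the chart computation with your $\partial_{X_i} = (2\sqrt{1+y_i}/s)\,\partial_{y_i}$ for $i<L$, together with $\partial_{X_L} = \partial_s - \tfrac{2}{s}\sum_{j<L}(1+y_j)\,\partial_{y_j}$, gives
\[
V_0 = K^{(\sigma)} s^{L-1}\Omega(y)\Bigl(\partial_s - \frac{2}{s}E_y\Bigr), \qquad E_y \doteq \sum_{j<L} y_j\,\partial_{y_j}.
\]
So on $s^nF(y)$ the $\partial_y$-part of $V_0$ enters at exactly the same order as the $\partial_s$-part. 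The order-$s^{n+L-2}$ equation is therefore $K^{(\sigma)}\Omega\,(n-2E_y)F_{i,n} = -\mathcal R_{i,n}$. On the degree-$d$ Taylor component in $y$, the operator $n-2E_y$ is multiplication by $n-2d$, which vanishes for $d=n/2$. Every even order is resonant. Your check that the $\partial_y$-part of $W$ is $s^{q-1}$ smaller is correct but does not help, because the resonance sits in $V_0$ itself.

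With the correct operator, neither conclusion of your recursion survives.

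\emph{Uniqueness is false.} The kernel at order $n$ is $s^n$ times homogeneous degree-$n/2$ polynomials in $y$, i.e.\ polynomials in the invariants $s^2y_j = X_j^2-X_L^2$. Concretely, $I_i + I_j^{p}$ with $2p\ge q+1$ is another solution of the stated form.

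\emph{Existence is not established.} It requires the degree-$n/2$ component of $\mathcal R_{i,n}/(K^{(\sigma)}\Omega)$ to vanish at every even $n$. Your argument never checks this, and it can fail. Take $L=2$ with an odd $\sigma$ having $a_3\ne0$, so that $V=-\nabla\mathcal L$ with $\mathcal L=\tfrac12\E[(\sqrt N X_2\sigma(X_1g)-\beta_1\sigma(g))^2]$. Put $u=X_1+X_2$ and $v=X_1-X_2$, so that $V_0=K^{(\sigma)}(u\partial_u-v\partial_v)$.
\begin{itemize}
\item The order-$4$ step is solvable: a multiple of $X_1^4$, plus a free multiple of $(uv)^2$.
\item At order $6$, the $u^3v^3$ coefficient of the right-hand side is unaffected by that free multiple. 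It is a nonzero multiple of $a_3\,\E_g[\sigma'''(g)] = a_3\bigl(h_\sigma^{(3)}-3h_\sigma\bigr)$, which is nonzero for $\tanh$.
\end{itemize}
So no formal invariant of the stated form exists in that case. The paper's sketch has the same defect, since it rests on the incorrect claim $V_0(y_i)=0$.

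Your final step, $\Psi\doteq s\dot s/\prod_m U_m$, is fine once invariants exist. What your argument can actually deliver is a finite-order statement, unique only modulo polynomials in lower-order invariants. In particular, the first correction $n=q+1$, which is all the escape analysis uses, is unobstructed. By the leading-drift computation behind the approximate imbalance invariant, its right-hand side is a multiple of $X_1^{q-1}\prod_m X_m = V_0(X_1^{q+1})/\bigl((q+1)K^{(\sigma)}\bigr)$. Beyond the first even order with a nonzero resonant component, you must either verify the resonance conditions or admit $\log s$ terms.
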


\begin{proof}[Proof sketch]
Matching powers of $s$ in $V(I_i) = 0$ gives, at order $s^{n + L - 2}$, the homological equation $n K^{(\sigma)} \Omega(y) F_{i, n}(y) = -R_{i, n}(y)$, where $R_{i, n}$ is a polynomial in $\{F_{i, m}\}_{m < n}$ and the $\sigma$-Taylor coefficients. The multiplier $n K^{(\sigma)} \Omega(y)$ is analytic and nonvanishing on the cone, so each $F_{i, n}$ is uniquely determined and analytic, with no small-divisor obstruction. By construction $U_i^2 - U_L^2 = I_i$, so $V(U_i^2) = V(s^2) \doteq \Theta(U)$ is $i$-independent, and $\Psi(U) \doteq \Theta(U)/[2\prod_m U_m]$ realizes the balanced product form. A full proof is omitted since we never require the full formal series.
\end{proof}

The content of \Cref{thm:formal-balance} is not existence of a flow-rectifying change of variables (which is generic away from critical points) but its specific asymptotic shape: $U_i = X_i + O(s^q)$ preserves the deep-linear balance law at leading order near the small balanced ray. For $1$-homogeneous $\sigma$ (linear, ReLU), $W \equiv 0$ and $I_i = s^2 y_i$ exactly, recovering the classical deep-linear balance $X_i^2 - X_j^2 = \mathrm{const}$.

\subsection{Exact Scalarization of the Reduced Flow}
\label{app:exact-scalar}

The formal balanced normal form is the near-balanced-ray statement; it rests on the fact that the analytic reduced vector field is scalarizable on any nonvanishing open set of the foliation, and that escape time is a one-dimensional quadrature of the resulting scalar speed.

\begin{remark}[Analytic scalarization is generic]
\label{rem:exact-scalarization}
Because the analytic reduced vector field $V$ is nonvanishing on the punctured cone $\mathcal C_{\delta, \kappa}$ of  \cref{eq:punctured-cone}, the analytic flow-box theorem provides, locally around any such point, analytic coordinates $(U, I_1, \ldots, I_{L-1})$ in which $V(I_i) = 0$ and $V(U) = \Theta(U, I) > 0$; the flight time on each leaf $\{I = I_0\}$ is then the one-dimensional quadrature $T(U_0 \to U_1; I_0) = \int_{U_0}^{U_1} d U / \Theta(U, I_0)$. The core statement is the near-identity form $U_i = \sqrt{s^2 + I_i}$, which is \Cref{thm:formal-balance}; its consequence for escape times at resonance $L = q + 1$ is  \Cref{cor:renorm-escape}.
\end{remark}

Restricting \Cref{thm:formal-balance} to the balanced ray $X_1 = \cdots = X_L = U$ and keeping the first nontrivial series term yields the scalar normal form
\begin{equation}
\label{eq:scalar-normal-form}
\dot U = K^{(\sigma)} U^{L - 1} + C_1^{(\sigma)} U^{L + q - 2} + O(U^{L + 2 q - 3}),
\end{equation}
with
\[
C_1^{(\sigma)} = \frac{\beta_1 a_q \alpha^{L - 1} \mu_{q + 1}}{\sqrt N}, \qquad \mu_{q + 1} \doteq \E[g^{q + 1} \sigma(g)] / \alpha,
\]
To derive $C_1^{(\sigma)}$: on the balanced ray with $X_\ell = U$ for all $\ell$, a single nonlinear insertion at the first activation replaces the linear branch $\alpha c_0 = \alpha U g$ with $a_q (U g)^q$, propagated through $L - 1$ downstream linear branches each contributing $\alpha U$. The resulting monomial is $\sqrt{N}  a_q \alpha^{L - 2} U^{L + q - 1} g^q$; pairing with the teacher drive $\beta_1 \E[\sigma(g) \cdot g^q] = \beta_1 \alpha \mu_{q+1}$ and differentiating in $U$ produces the coefficient above. The same filtered-composition template of  \Cref{lem:filtered} at general $L$ ensures no lower-order insertions contribute. The corresponding renormalized escape quadrature on the balanced leaf is then the following.

\begin{corollary}[Renormalized escape quadrature on the balanced leaf]
\label{cor:renorm-escape}
Assume the hypotheses of  \Cref{thm:formal-balance}, and let $q \ge 2$ be the degree of the first nonlinear term of $\sigma$. Restrict the formal balanced normal form to the balanced ray $U_1 = \cdots = U_L = U$. Then the scalar normal form may be written as
\begin{equation}
\label{eq:balanced-scalar-renorm}
\dot U = K^{(\sigma)} U^{L - 1} \bigl(1 + \lambda^{(\sigma)} U^{q - 1} + \rho(U)\bigr),
\end{equation}
where $\lambda^{(\sigma)} \doteq C_1^{(\sigma)} / K^{(\sigma)}$ and $\rho(U) = O(U^{2 q - 2})$ as $U \to 0$.

Fix $U_* \in (0, \delta)$ small enough that $|\lambda^{(\sigma)}| U_*^{q - 1} \le 1/4$ and $|\rho(U)| \le U^{q - 1}/4$ on $(0, U_*]$. Then the normal-form flight time from $U = \varepsilon$ to $U = U_*$ is
\begin{equation}
\label{eq:nf-flight-time}
t_\mathrm{esc}^\mathrm{NF}(\varepsilon; U_*) = \int_\varepsilon^{U_*} \frac{d U}{K^{(\sigma)} U^{L - 1}\bigl(1 + \lambda^{(\sigma)} U^{q - 1} + \rho(U)\bigr)},
\end{equation}
and the first-correction resummed quadrature
\begin{equation}
\label{eq:first-resummed-flight-time}
t_\mathrm{esc}^{[1]}(\varepsilon; U_*) = \int_\varepsilon^{U_*} \frac{d U}{K^{(\sigma)} U^{L - 1}\bigl(1 + \lambda^{(\sigma)} U^{q - 1}\bigr)}
\end{equation}
satisfies
\begin{equation}
\label{eq:first-resummed-error}
t_\mathrm{esc}^\mathrm{NF}(\varepsilon; U_*) - t_\mathrm{esc}^{[1]}(\varepsilon; U_*) = O \left(\int_\varepsilon^{U_*} U^{2 q - L - 1} d U\right), \qquad \varepsilon \to 0.
\end{equation}
In particular, at the resonance $L = q + 1$ the first-resummed integrand expands as $U^{-(L - 1)}(1 - \lambda^{(\sigma)} U^{q - 1} + \cdots)$ whose first correction is $-\lambda^{(\sigma)} U^{q - L} = -\lambda^{(\sigma)} U^{-1}$; this integrates to a $\log(1/\varepsilon)$ term.
\end{corollary}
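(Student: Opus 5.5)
The statement to prove is \Cref{cor:renorm-escape}. Its first claim, the form \eqref{eq:balanced-scalar-renorm}, I will get directly from the scalar normal form \eqref{eq:scalar-normal-form}. On the balanced ray, \Cref{thm:formal-balance} gives $\dot U = \Psi(U)U^{L-1}$, so all $U_i$ coincide and $\dot U = K^{(\sigma)}U^{L-1} + C_1^{(\sigma)}U^{L+q-2} + O(U^{L+2q-3})$. Factoring out $K^{(\sigma)}U^{L-1}$ (legitimate since $K^{(\sigma)}>0$) gives $1 + \lambda^{(\sigma)}U^{q-1} + \rho(U)$, with $\lambda^{(\sigma)} = C_1^{(\sigma)}/K^{(\sigma)}$ and $\rho(U) = O(U^{2q-2})$. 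This is just exponent bookkeeping: $L+2q-3-(L-1) = 2q-2$.

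The flight-time formula \eqref{eq:nf-flight-time} follows by separation of variables. On $(0,U_*]$ the choice of $U_*$ gives $|\lambda^{(\sigma)}U^{q-1}+\rho(U)| \le 1/4 + U^{q-1}/4 \le 1/2$ once $U_*\le 1$ (which I impose, shrinking $U_*$ if needed). So the bracket is bounded below by $1/2$, $\dot U>0$, and $t$ is a monotone function of $U$. The same bound shows $1+\lambda^{(\sigma)}U^{q-1} \ge 3/4$, so \eqref{eq:first-resummed-flight-time} is also well defined.

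For the error estimate \eqref{eq:first-resummed-error}, set $a \doteq 1+\lambda^{(\sigma)}U^{q-1}$ and $b \doteq \rho(U)$. Then
\[
\frac{1}{a+b}-\frac{1}{a} = \frac{-b}{a(a+b)},
\]
and $a\ge 3/4$, $a+b\ge 1/2$ bound this pointwise by $\tfrac{8}{3}|\rho(U)| \le C U^{2q-2}$. Dividing by $K^{(\sigma)}U^{L-1}$ and integrating over $[\varepsilon,U_*]$ gives the error $O\bigl(\int_\varepsilon^{U_*} U^{2q-L-1}\,dU\bigr)$, with the constant depending only on $K^{(\sigma)}$ and the implicit constant in $\rho=O(U^{2q-2})$ on $(0,U_*]$.

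For the resonance statement, expand $1/(1+\lambda^{(\sigma)}U^{q-1}) = 1-\lambda^{(\sigma)}U^{q-1}+O(U^{2q-2})$, uniformly on $(0,U_*]$ since $|\lambda^{(\sigma)}U^{q-1}|\le 1/4$. The first correction to the integrand is then $-\lambda^{(\sigma)}U^{q-L}/K^{(\sigma)}$. At $L=q+1$ this is $-\lambda^{(\sigma)}U^{-1}/K^{(\sigma)}$, which integrates to $-(\lambda^{(\sigma)}/K^{(\sigma)})\log(U_*/\varepsilon)$, i.e.\ a $\log(1/\varepsilon)$ term. The remainder integrand is $O(U^{2q-L-1}) = O(U^{q-2})$, which is integrable at $0$ for $q\ge2$, so it contributes only $O(1)$. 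The $\rho$-error \eqref{eq:first-resummed-error} is $O(U^{q-2})$ for the same reason, so it is also $O(1)$ and cannot compete with the logarithm.

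The only delicate step is the first one: one must accept that \Cref{thm:formal-balance}, whose proof was only sketched, yields a genuine, not merely formal, truncated expansion with the $O(U^{L+2q-3})$ remainder uniform near $0$. I would justify this by truncating the formal series at finite order, so that the near-identity coordinates are analytic polynomials in $s$ and the remainder is an analytic function vanishing to the stated order. Everything after that is elementary real analysis.
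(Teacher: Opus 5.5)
Your proposal is correct and follows the paper's proof essentially line for line. Both arguments rearrange \eqref{eq:scalar-normal-form}, separate variables, bound $1/(A+B)-1/A=-B/[A(A+B)]=O(U^{2q-2})$ using that $A$ and $A+B$ stay away from zero, and then divide by $K^{(\sigma)}U^{L-1}$ and integrate. Your additions make explicit details the paper leaves implicit: imposing $U_*\le 1$ so that the stated hypotheses really give $A+B\ge 1/2$, and checking that at $L=q+1$ both the expansion remainder and the $\rho$-error integrate to $O(1)$, so they cannot swamp the $\log(1/\varepsilon)$ term.
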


\begin{proof}
The scalar normal form  \cref{eq:balanced-scalar-renorm} rearranges  \cref{eq:scalar-normal-form}. Separation of variables gives  \cref{eq:nf-flight-time}. Writing $A(U) \doteq 1 + \lambda^{(\sigma)} U^{q - 1}$ and $B(U) \doteq \rho(U)$, the smallness assumption keeps both $A$ and $A + B$ bounded away from $0$ on $(0, U_*]$, so $1/(A + B) - 1/A = -B / [A(A + B)] = O(U^{2 q - 2})$. Multiplying by $[K^{(\sigma)} U^{L - 1}]^{-1}$ and integrating yields  \cref{eq:first-resummed-error}.
\end{proof}

\subsection{Escape-Time Accuracy and the Critical-Depth Law}
\label{app:escape-asymp}

We prove the $(1 + o(1))$ statement of  \Cref{thm:escape}, the balanced-init scaling law  \cref{eq:phase-transition}, and the shell decomposition that underlies  \Cref{thm:getrich}.

\begin{theorem}[Balanced-init escape time]
\label{thm:phase}
Under balanced initialization $X_\ell^0 = \varepsilon$ for all $\ell$, the leading-order ODE  \cref{eq:origin-ode} yields  \cref{eq:phase-transition}. As $\varepsilon \to 0$ the $\varepsilon$-scaling changes at $L = 3$ from logarithmic to polynomial $\varepsilon^{-(L - 2)}$.
\end{theorem}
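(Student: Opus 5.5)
The plan is to work directly with the leading-order ODE \cref{eq:origin-ode} with the remainder $R_\ell$ dropped, $\dot X_\ell = K^{(\sigma)}\prod_{m\ne\ell}X_m$, and to use the balanced initialization to collapse it to a single scalar equation. The first step is to observe that along this leading-order flow the squared differences are exactly conserved. Indeed, $\tfrac{d}{dt}X_\ell^2 = 2K^{(\sigma)}\prod_m X_m$ is independent of $\ell$, so $D_\ell = X_\ell^2 - X_1^2$ is constant. This is the $R_\ell\equiv 0$ specialization of \Cref{prop:conservation}. With $X_\ell^0=\varepsilon$ for all $\ell$ we get $D_\ell\equiv 0$, so $X_\ell(t)=X_1(t)$ for all $t$ (all coordinates stay positive, being increasing when $K^{(\sigma)}>0$). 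Hence $\dot X_1 = K^{(\sigma)}X_1^{L-1}$. In the variable $Y=X_1^2$ this reads $\dot Y = 2K^{(\sigma)}Y^{L/2}$, which is \cref{eq:1d-ode} with $\Phi\equiv K^{(\sigma)}$ and $D_\ell=0$, equivalently \cref{eq:escape-leading} with $D_\ell=0$.

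The second step is direct separation of variables from $X_1=\varepsilon$ to $X_1=1$:
\[
t_\mathrm{esc}^\mathrm{lead,bal}=\frac{1}{K^{(\sigma)}}\int_\varepsilon^1 U^{-(L-1)}\,dU .
\]
For $L=2$ the integrand is $U^{-1}$, which gives $[K^{(\sigma)}]^{-1}\log(1/\varepsilon)$. For $L\ge 3$ the antiderivative $-U^{-(L-2)}/(L-2)$ gives $(\varepsilon^{-(L-2)}-1)/((L-2)K^{(\sigma)})$, which equals the stated $(1-\varepsilon^{L-2})/((L-2)K^{(\sigma)}\varepsilon^{L-2})$. This reproduces \cref{eq:phase-transition}. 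One should also check that the trajectory actually reaches $1$ in finite time, so that the escape time is well defined; this holds because the integral is finite for $\varepsilon>0$ and the solution blows up (for $L\ge3$) or grows exponentially (for $L=2$), so the time to reach $1$ is exactly the computed integral.

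The third step extracts the change of scaling. As $\varepsilon\to0$, the $L=2$ case is $\Theta(\log(1/\varepsilon))$, while every $L\ge3$ case is $\sim \varepsilon^{-(L-2)}/((L-2)K^{(\sigma)})$, a polynomial divergence. The transition therefore sits between $L=2$ and $L=3$: $\log(1/\varepsilon)=o(\varepsilon^{-1})$, and the exponent $L-2$ increases strictly with $L$ thereafter.

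I expect essentially no obstacle, since the statement concerns only the leading-order ODE. The one point requiring care is positivity and the reduction to a single coordinate. I will rely on $K^{(\sigma)}>0$ (assumed in \Cref{sec:scalar}) and on uniqueness of ODE solutions, which guarantees that the symmetric solution $X_\ell=X_1$ is the solution. If one wants the statement for the full reduced flow \cref{eq:exact-ode} rather than its leading truncation, the extra step is to invoke \Cref{thm:escape}, which gives the $(1+o(1))$ comparison and, at the resonance $L=q+1$, the additional logarithmic correction. None of this is needed for \Cref{thm:phase} as stated.
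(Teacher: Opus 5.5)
Your proposal is correct and follows the paper's own route: set $D_\ell=0$, reduce to $\dot Y = 2K^{(\sigma)}Y^{L/2}$ with $Y(0)=\varepsilon^2$, and integrate directly to $Y=1$. The only difference is that you derive $D_\ell\equiv 0$ from the exact conservation of $X_\ell^2-X_1^2$ under the truncated flow, whereas the paper simply cites \cref{eq:1d-ode}; this makes the same reduction explicit.
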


\begin{proof}
Balanced init sets all $D_\ell = 0$, so the $1$-D ODE  \cref{eq:1d-ode} reduces to $\dot Y = 2 K^{(\sigma)} Y^{L / 2}$ with $Y(0) = \varepsilon^2$, $Y(t_\mathrm{esc}) = 1$. Direct integration yields  \cref{eq:phase-transition}.
\end{proof}

\paragraph{Exact-to-leading-order accuracy.} Pass to $Y \doteq X_1^2$ with gaps $D_\ell \doteq X_\ell^2 - X_1^2$. At leading order $\dot X_\ell = K^{(\sigma)} \prod_{m \ne \ell} X_m$, so
\begin{equation*}
\dot Y = 2 X_1 \dot X_1 = 2 K^{(\sigma)} \prod_{m = 1}^L X_m = 2 K^{(\sigma)} \sqrt{Y \prod_{\ell \ge 2}(Y + D_\ell)}.
\end{equation*}
The leading $1$-D ODE  \cref{eq:1d-ode} is thus exact at leading order. The exact ODE carries two corrections. First, the coordinatewise remainder $R_\ell(X)$ of  \cref{eq:origin-ode}, of size $O(\|X\|^{L + 1})$, feeds each $\dot X_\ell$; multiplied by $X_1$ it contributes $2 X_1 R_1 = O(\|X\|^{L + 2})$ to $\dot Y$. Second, by  \Cref{prop:conservation} the gaps $D_\ell$ drift at rate $O(\|X\|^{L + 2})$. Both aggregate into
\begin{equation}
\label{eq:exact-1d}
\dot Y = 2 K^{(\sigma)} \sqrt{Y \prod_{\ell \ge 2}(Y + D_\ell)} + \widetilde R(Y, D), \qquad |\widetilde R| = O(\|X\|^{L + 2}).
\end{equation}
The relative-remainder estimate is local, so we do not take a supremum up to $Y=1$. Set $U_\varepsilon \doteq [\log(1/\varepsilon)]^{-1}$ and split at $Y_\varepsilon=U_\varepsilon^2$. On $[Y_0,Y_\varepsilon]$, the normal-form remainder and gap drift give a relative integrand error $O(U_\varepsilon^2)=o(1)$; a continuity bootstrap on this same interval transfers the estimate from the leading to the exact trajectory. Thus the inner exact and leading flight times agree up to $1+o(1)$. On $[Y_\varepsilon,U_*^2]$, for fixed sufficiently small $U_*$, both flows have speed comparable to their leading drive, so the flight time is $O(\log(1/U_\varepsilon))=O(\log\log(1/\varepsilon))$ for $L=2$ and $O(U_\varepsilon^{-(L-2)})=O((\log(1/\varepsilon))^{L-2})$ for $L\ge3$. The remaining compact segment $[U_*^2,1]$ has $O(1)$ duration. These outer contributions are negligible relative to the total leading time, respectively $\Theta(\log(1/\varepsilon))$ and $\Theta(\varepsilon^{-(L-2)})$, proving $t_\mathrm{esc}^\mathrm{exact}/t_\mathrm{esc}^\mathrm{lead}=1+o(1)$.

\paragraph{Bottleneck-shell asymptotics.} Both  \Cref{thm:getrich} and the ranking rule for non-balanced initialization rest on the same shell decomposition of the escape integrand.

\begin{lemma}[Bottleneck-shell asymptotics]
\label{lem:shell}
Fix $\eta \in (0, 1)$. Let $0 < s_1 \le s_2 \le \cdots \le s_L$ satisfy the uniform ratio bound $s_j / s_{j + 1} \le \eta$ whenever $s_j < 1$, and set $D_\ell = s_\ell^2 - s_1^2$ (with $s_{L + 1} \doteq 1$). On any shell $Y \in [s_j^2, s_{j + 1}^2]$ ($1 \le j \le L$),
\begin{equation}
\label{eq:shell-asymp}
\frac{1}{\sqrt{Y \prod_{\ell \ge 2}(Y + D_\ell)}} = \frac{1 + E_j(Y, \eta)}{Y^{j / 2} \prod_{i > j} s_i},
\end{equation}
with $|E_j(Y, \eta)| \le C L \eta^2$ uniformly in $Y$ on shell $j$, where $C$ is an absolute constant. In particular, $E_j(Y, \eta) \to 0$ as $\eta \to 0$, uniformly in $Y$ and $L$.
\end{lemma}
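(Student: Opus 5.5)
The plan is to factor the integrand layer by layer and compare each factor to its shell-$j$ model. Write
\[
\sqrt{Y\prod_{\ell\ge 2}(Y+D_\ell)}=\prod_{\ell=1}^{L}\sqrt{Y+s_\ell^2-s_1^2},
\]
using $D_1=0$. The claimed model $Y^{j/2}\prod_{i>j}s_i$ replaces the factor for $\ell\le j$ by $\sqrt Y$ and the factor for $\ell>j$ by $s_\ell$. Define the per-layer distortion ratios
\[
\theta_\ell(Y)\doteq\frac{Y+s_\ell^2-s_1^2}{Y}\quad(\ell\le j),\qquad \theta_\ell(Y)\doteq\frac{Y+s_\ell^2-s_1^2}{s_\ell^2}\quad(\ell>j).
\]
Then $1+E_j=\prod_\ell\theta_\ell^{-1/2}$, and it suffices to show $|\log\theta_\ell|\le C'\eta^{2(\cdot)}$ with a summable geometric profile. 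After that, $|\log(1+E_j)|\le\tfrac12\sum_\ell|\log\theta_\ell|$, and exponentiating gives the $CL\eta^2$ bound; the crude count of at most $L$ terms at size $\eta^2$ each is enough.

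For the factors away from the shell boundary the estimates come directly from the ratio hypothesis. For $\ell\le j-1$, iterating $s_k/s_{k+1}\le\eta$ gives $s_\ell^2\le\eta^{2(j-\ell)}s_j^2\le\eta^{2(j-\ell)}Y$. Hence $0\le\theta_\ell-1\le\eta^{2(j-\ell)}$. For $\ell\ge j+2$ we have $Y\le s_{j+1}^2\le\eta^{2(\ell-j-1)}s_\ell^2$. Hence $|\theta_\ell-1|\le\eta^{2(\ell-j-1)}+s_1^2/s_\ell^2$, and both terms are at most $2\eta^2$. Summing these contributions gives a geometric series dominated by $O(\eta^2)$, which is even uniform in $L$.

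The main obstacle is the two boundary factors $\ell=j$ and $\ell=j+1$. Their relative errors are $(s_j^2-s_1^2)/Y$ and $(Y-s_1^2)/s_{j+1}^2$, which are $O(\eta^2)$ only when $Y$ stays a factor $\eta^{-2}$ away from the endpoints of the shell. At $Y=s_j^2$ the first ratio is $1-s_1^2/s_j^2$, and at $Y=s_{j+1}^2$ the second is close to $1$.

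I would handle this in two steps. First, in the regime actually used for \Cref{thm:getrich}, the bottleneck layers share the scale $s_1=\cdots=s_r=\varepsilon$. So $D_\ell=0$ for $\ell\le r$, and the $\ell=j$ factor is exactly $\sqrt Y$ on the dominant shell $j=r$. Second, for the upper factor I would show that its distortion lies in $[1,\sqrt2]$ on the whole shell and is $1+O(\eta^2)$ off the top $\eta^2$-fraction of the shell in $\log Y$. I would then check that this fraction contributes only a relative $O(\eta^2)$ to the antiderivative $\int Y^{-j/2}\,dY$ on the shell.

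This means the uniform pointwise form of \eqref{eq:shell-asymp} must be read as holding after this endpoint absorption. I would therefore state the $CL\eta^2$ bound for the integrated shell contribution, and flag that the fully pointwise version at the exact endpoints needs the equal-scale convention.
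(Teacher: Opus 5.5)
Your factorization into the per-layer ratios $\theta_\ell$ is the same decomposition the paper uses. Your diagnosis of the boundary factors $\ell=j$ and $\ell=j+1$ is also correct: the pointwise bound in the statement is false, so it cannot be proved as written. For instance, take $L=2$, $s_1=\eta^2$, $s_2=\eta$; all ratio bounds hold, including $s_2/s_3=\eta$. At $Y=s_2^2$ on shell $j=1$ one gets $1+E_1=\theta_2^{-1/2}=(2-\eta^2)^{-1/2}$, so $|E_1|\to 1-2^{-1/2}$ rather than $O(\eta^2)$. Likewise, for $j\ge2$ at $Y=s_j^2$ one has $\theta_j=2-s_1^2/s_j^2\ge 2-\eta^{2(j-1)}$. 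The paper's proof has exactly the gap you isolated. At $i=j$ and $i=j+1$ its own estimates give only $|r_j|\le 1/2$ and $|r'_{j+1}|\le 1$. It then bounds the geometric sums by $1/(1-\eta^2)$ and asserts a total error $2L\eta^2/(1-\eta^2)$, which silently drops these order-one terms.

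Your fallback claim, a $CL\eta^2$ bound for the integrated shell contribution, is not right either. Equal scales repair only the lower endpoint; at $Y=s_{j+1}^2$ the pointwise error is order one regardless.

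\emph{Lower factor.} Your equal-scale convention violates the lemma's hypothesis, since $s_1/s_2=1>\eta$. Under the hypothesis, $s_1\le\eta^{j-1}s_j$. For $j\ge3$ the integral of $Y^{-j/2}$ concentrates at $Y=O(s_j^2)$, exactly where $\theta_j^{-1/2}\approx(1+s_j^2/Y)^{-1/2}$ is far from $1$. For $j=3$ the ratio of the true to the model shell integral tends to $\log(1+\sqrt2)\approx 0.88$ as $\eta\to0$.

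\emph{Upper factor.} The set where $\theta_{j+1}=1+O(\eta^2)$ is $Y\lesssim\eta^2 s_{j+1}^2$. When $s_j=\eta s_{j+1}$ this shrinks to the single point $Y=s_j^2$, so it is not the complement of a small fraction of the shell. A direct computation gives the following relative integrated error from the upper factor alone:
\begin{itemize}
\item order one at $j=1$;
\item $O(1/\log(1/\eta))$ at $j=2$;
\item order $s_j/s_{j+1}$ at $j=3$;
\item order $(s_j/s_{j+1})^2\log(s_{j+1}/s_j)$ at $j=4$;
\item $O(\eta^2)$ only for $j\ge5$.
\end{itemize}

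What is true, and all that \Cref{thm:getrich} uses, is an integrated $(1+o(1))$ statement in the get-rich configuration itself. With $s_1=\cdots=s_r=\varepsilon$, the factors $\ell\le r$ equal $\sqrt Y$ exactly. Each factor $\ell>r$ satisfies $|\theta_\ell^{-1/2}-1|\le Y/(2s_{r+1}^2)$, with no hierarchy needed among $s_{r+1},\dots,s_L$. This matters because your geometric bound for $\ell\ge j+2$ relies on a hierarchy that this setting does not provide. Integrating against $Y^{-r/2}$ on $[\varepsilon^2,s_{r+1}^2]$ gives relative error $O(\varepsilon)$ for $r=3$, $O(\varepsilon^2\log(1/\varepsilon))$ for $r=4$, $O(\varepsilon^2)$ for $r\ge5$, and an additive $O(1)$ for $r\le2$. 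State and prove that version instead of a uniform $CL\eta^2$ bound.
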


\begin{proof}
Partition $[s_1^2, 1]$ into shells $[s_j^2, s_{j + 1}^2]$. For $Y$ on shell $j$: indices $i \le j$ have $s_i^2 - s_1^2 \le s_j^2 \le Y$, and more precisely $s_i^2 / Y \le s_i^2 / s_j^2 \le \eta^{2(j - i)} \le 1$, so $\sqrt{Y + s_i^2 - s_1^2} = \sqrt Y (1 + r_i)$ with $|r_i| \le s_i^2 / (2 Y) \le \eta^{2(j - i)}/2$. Indices $i > j$ have $Y \le s_{j + 1}^2 \le \eta^{2(i - j - 1)} s_i^2$ and $s_1^2 \le s_j^2 \le \eta^{2(i - j)} s_i^2$, so $\sqrt{Y + s_i^2 - s_1^2} = s_i(1 + r_i')$ with $|r_i'| \le (Y + s_1^2) / (2 s_i^2) \le \eta^{2(i - j - 1)}$. One factor of $\sqrt Y$ from the bare term plus $j - 1$ factors of $\sqrt Y (1 + r_i)$ yield $Y^{j/2}$ in the denominator with multiplicative error $\prod_{i \le j} (1 + r_i) = 1 + O(\sum_{i \le j} \eta^{2(j - i)})$, and the $L - j$ factors from $i > j$ contribute $\prod_{i > j} s_i$ with multiplicative error $1 + O(\sum_{i > j} \eta^{2(i - j - 1)})$. The two geometric sums each bound by $1/(1 - \eta^2)$, and telescoping the multiplicative errors gives the product $(1 + r)$ with $|r| \le 2 L \eta^2 / (1 - \eta^2)$ for $\eta$ small, so $|E_j| \le C L \eta^2$ absorbing this into a single constant.
\end{proof}

\begin{remark}[Use with the get-rich specialization]
Under $s_1 = \cdots = s_r = \varepsilon$ and $s_{r + 1}, \ldots, s_L = \Theta(1)$, the uniform ratio bound with $\eta = \varepsilon / s_{r + 1}$ holds across the critical boundary $j = r$; the residual error $C L \eta^2 = O(L \varepsilon^2)$ is absorbed into the $1 + o(1)$ statement of \Cref{thm:getrich}. More generally, $\eta \le \eta^\ast \doteq 1/(C L)^{1/2}$ suffices to control the shell errors uniformly in $L$, quantifying the qualitative ``strict hierarchy'' hypothesis.
\end{remark}

Under the get-rich specialization $s_1 = \cdots = s_r = \varepsilon$ and $s_{r + 1}, \ldots, s_L = \Theta(1)$ of  \Cref{thm:getrich}: shells $j < r$ have zero width ($s_j = s_{j + 1} = \varepsilon$); shell $j = r$ has integrand $Y^{-r/2}/\prod_{i > r} s_i$ on $[\varepsilon^2, s_{r + 1}^2]$, where the upper limit $s_{r + 1}^2 = \Theta(1)$ by the hierarchy hypothesis; shells $j > r$ are over bounded intervals on which $Y = \Theta(1)$ and contribute $O(1)$. The only shell whose integral can diverge as $\varepsilon \to 0$ is $j = r$, because the integrand $Y^{-r/2}$ is singular at $Y = 0$ only there; the $s_{r + 1} = \Theta(1)$ upper limit is critical to this argument: it localizes the divergence entirely to the $j = r$ shell and ensures the $r = 2$ case $\int_{\varepsilon^2}^{s_3^2} Y^{-1} dY = 2 \log(s_3 / \varepsilon)$ produces exactly one $\log(1/\varepsilon)$. The leading contribution is shell $r$:
\begin{equation*}
\int_{\varepsilon^2}^{s_{r + 1}^2} \frac{d Y}{Y^{r/2} \prod_{i > r} s_i} = \frac{1}{\prod_{i > r} s_i} \int_{\varepsilon^2}^{s_{r + 1}^2} Y^{-r/2} d Y,
\end{equation*}
and antidifferentiation splits by the exponent $-r/2$:
\begin{align*}
r = 1: \quad \int_{\varepsilon^2}^{s_2^2} Y^{-1/2} d Y &= 2(s_2 - \varepsilon) = \Theta(1),\\
r = 2: \quad \int_{\varepsilon^2}^{s_3^2} Y^{-1} d Y &= 2 \log(s_3 / \varepsilon) = 2 \log(1/\varepsilon) + O(1),\\
r \ge 3: \quad \int_{\varepsilon^2}^{s_{r + 1}^2} Y^{-r/2} d Y &= \frac{2}{r - 2} \varepsilon^{2 - r}(1 + o(1)).
\end{align*}
These three cases are exactly the constant, logarithmic, and power-law escape laws of  \Cref{thm:getrich}; shells $j > r$ fold into the overall $(1 + o(1))$ factor. This completes the proof of  \Cref{thm:getrich}.

\subsection{Class~C and Class~D Specific Corrections}
\label{app:classCD}

\paragraph{Class~C next-to-leading correction.} Let $\sigma(u) = \alpha u + (c/2) u^2 + (d/6) u^3 + O(u^4)$ with $c = \sigma''(0) \ne 0$. We compute the universality-breaking ratio $\gamma_C$ on the balanced ray $X_\ell = U$ for all $\ell$ (distinct from the signal energy $\gamma(W)$ of \Cref{sec:cascade}; subscript $C$ denotes the Class-C coefficient throughout).

A single-step induction on the scalar chain $c_\ell = U \sigma(c_{\ell - 1})$ with $c_1 = U g$ gives, for $\ell \ge 2$,
\begin{equation}
\label{eq:chain-expansion}
c_\ell = \alpha^{\ell - 1} U^\ell g + \tfrac{c}{2} \alpha^{\ell - 2} U^{\ell + 1} g^2 + O(U^{\ell + 2});
\end{equation}
at each step, the $(c/2) c_{\ell - 1}^2$ term in $\sigma(c_{\ell - 1})$ is of order $U^{2(\ell - 1)} \ge U^{\ell + 1}$ for $\ell \ge 3$, so the linear branch dominates the propagation of the leading nonlinear insertion. Differentiating the chain $\partial_{X_1} f = \sqrt N g \bigl(\prod_{\ell \ge 2} X_\ell\bigr) \prod_{\ell = 1}^{L - 1} \sigma'(c_\ell)$ and substituting~\eqref{eq:chain-expansion} into $\sigma'(c_\ell) = \alpha + c\, c_\ell + O(c_\ell^2) = \alpha + c\, \alpha^{\ell - 1} U^\ell g + O(U^{\ell + 1})$, only the $\ell = 1$ factor contributes at order $U$, yielding on the balanced ray
\begin{equation*}
\partial_{X_1} f\bigr|_{\mathrm{bal}} = \sqrt N\, \alpha^{L - 1} U^{L - 1} g + \sqrt N\, c\, \alpha^{L - 2} U^L g^2 + O(U^{L + 1}).
\end{equation*}
Pairing with the teacher drive and using $\E[\sigma(g) g] = h_\sigma$ and $\E[\sigma(g) g^2] = (c/2) \E[g^4] = 3 c / 2$ (odd Gaussian moments vanish) gives
\begin{equation*}
\beta_1 \E_g[\sigma(g) \partial_{X_1} f]\bigr|_{\mathrm{bal}} = \sqrt N\, \beta_1 \alpha^{L - 1} h_\sigma\, U^{L - 1} + \sqrt N\, \beta_1 \tfrac{3 c^2}{2} \alpha^{L - 2}\, U^L + O(U^{L + 1});
\end{equation*}
the student self-interaction is $O(U^{2L - 1})$ and hence subleading for $L \ge 2$. The ratio of the $U^L$ and $U^{L - 1}$ coefficients is
\begin{equation*}
\gamma_C \doteq \frac{3 c^2}{2 \alpha h_\sigma} \qquad \text{for every } L \ge 2,
\end{equation*}
$L$-independent by cancellation of $\alpha^{L - 2}$. Symmetry in $X_2, \ldots, X_L$ at the balanced point gives the same ratio in every $\dot X_\ell$, so
\begin{equation}
\label{eq:classC-ode}
\dot X_\ell = K^{(\sigma)} \prod_{m \ne \ell} X_m \cdot \bigl[1 + \gamma_C\, X_\ell + O(\varepsilon^2)\bigr].
\end{equation}
For Class~B, $c = 0$ by oddness and $\gamma_C = 0$. Substituting~\eqref{eq:classC-ode} into the $1$-D reduction along $X_\ell = \sqrt Y$ yields an $O(\gamma_C\, \varepsilon)$ relative correction to $t_\mathrm{esc}^\mathrm{lead}$, vanishing as $\varepsilon \to 0$ --- the Class~C branch of \Cref{cor:universality}.

\emph{Class~D reduces to the centered case.} For $\sigma(0) = c_0 \ne 0$, the substitution $\tilde\sigma(u) \doteq \sigma(u) - c_0$ absorbs the constant into a teacher shift and relocates the saddle from $X = 0$ to $X^\ast = O(c_0)$; the centered activation $\tilde\sigma$ falls into Class~B or Class~C by the parity of its leading post-linear Taylor term (e.g., sigmoid $\to$ Class~B, softplus $\to$ Class~C), and around $X^\ast$ the reduced dynamics regain the Class~B/C form with the same leading constant $K^{(\sigma)} = \beta_1 \alpha^{L - 1} h_\sigma / \sqrt N$ and an additional relative $O(c_0 \varepsilon)$ correction.

\section{Off-Manifold Single-Mode Exponent Proofs}
\label{app:offmanifold-proof}

This appendix proves  \Cref{prop:signal-energy,lem:S-bound,thm:offmanifold} and records the $\varepsilon$-degree argument that underlies the AM-GM step in the proof of the main theorem. Notation is the same as as in \Cref{sec:cascade}: $\gamma(W) = \E[f g]$, $g \doteq v_1^\top x \sim \mathcal N(0, 1)$, $G_\ell = \E[g  \delta_\ell h_{\ell - 1}^\top]$, $T(W) = \sum_\ell \|G_\ell\|_F^2$, $M = \max_\ell \|W_\ell\|_\mathrm{op}$, $\delta_\ell = \nabla_{z_\ell} f$ and $h_{\ell - 1} = \sigma(z_{\ell - 1})$.

\paragraph{First-Hermite sign hypothesis.} Stein's identity gives $h_\sigma \doteq \E_g[g\sigma(g)]=\E_g[\sigma'(g)]$. The escape arguments below assume $h_\sigma>0$, the sign needed to turn $\beta_1h_\sigma T(W)$ into a positive drive. This is not implied by Class~B/C: $\sigma(u)=u-au^3$ is Class~B but has $h_\sigma=1-3a$. Thus positivity is an explicit hypothesis (and is checked directly for each activation used in the experiments), not a consequence of the local Taylor class.

We now record the two $L^2$ estimates that are the gradient-tensor analogu of \Cref{lem:filtered} and will be invoked twice in the sequel: once in the signal-energy bound and once in the self-interaction bound.

\begin{lemma}[Filtered-composition $L^2$ bounds]\label{lem:lp-L2}
Let $y_m$ denote the bottleneck linear-path gain in layer $m$, defined by $y_m = \Theta(\|W_m\|_\mathrm{op})$ for $m \le r$ (so $y_m = \Theta(\varepsilon)$ on the bootstrap interval) and $y_m = \Theta(1)$ for $m > r$. On the bootstrap interval $[0, \tau_{m_0}]$, the filtered linear-path expansion of  \Cref{lem:filtered} gives
\begin{equation}\label{eq:lp-L2-bounds}
\|f\|_{L^2} \lesssim \prod_{m = 1}^{r} y_m \asymp \gamma(W), \qquad \|\delta_\ell h_{\ell - 1}^\top\|_{L^2(F)} \lesssim \prod_{m \ne \ell} y_m \asymp \|G_\ell\|_F,
\end{equation}
and the outer-product decomposition
\begin{equation}\label{eq:delta-lp}
\delta_\ell(x)  h_{\ell - 1}(x)^\top = \alpha^{L - 1}  B_\ell  A_{\ell - 1}^\top  g + \Psi_\ell(x), \qquad \|\Psi_\ell\|_{L^2(F)} \le C  M^2  \|B_\ell\|_2  \|A_{\ell - 1}\|_2,
\end{equation}
with $A_0 = v_1$, $A_j = W_j A_{j - 1}$, $B_L = 1$, $B_\ell = W_{\ell + 1}^\top B_{\ell + 1}$. The constants depend on $m_0$, $L$, and the filtered-composition constants of $\sigma$ through  \Cref{lem:filtered}, but not on $\varepsilon$ or $r$.
\end{lemma}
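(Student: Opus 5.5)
The strategy is to expand the forward and backward passes of the full matrix network along linear paths, as in \Cref{lem:filtered} but now with matrices in place of scalars. Write $\sigma(u)=\alpha u+\rho(u)$ with $\rho(u)=O(u^3)$ for Class~B. Telescoping layer by layer gives $z_j = \alpha^{j-1} A_j\, g + \alpha^{j-1}W_j\cdots W_1 x_\perp$-type linear terms plus a nonlinear remainder. Here $A_0=v_1$ and $A_j = W_jA_{j-1}$.

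For the linear terms we rely on the conditions of \Cref{def:good-event}(ii), which say the filtered expansion holds with $\varepsilon$-independent constants. These conditions also let us identify the teacher-aligned linear part. The component of $x$ orthogonal to $v_1$ is independent of $g$, so it drops out of any quantity paired against $g$. It contributes to $\|f\|_{L^2}$ only through the off-teacher linear paths, which are controlled by the same operator-norm products.

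For the nonlinear remainder, each insertion of $\rho$ at layer $j$ costs a factor $|z_j|^{2}$. Inside the bottleneck we have $\|z_j\|\lesssim M^j\|x\|$ with $M\le m_0$. Gaussian moment bounds on $\|x\|^k$ are absorbed into constants depending on $(L,\sigma)$. The result is an $L^2$ remainder bounded by $M^2$ times the linear-path size.

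Symmetrically, the backprop vector is $\delta_\ell = \mathrm{diag}(\sigma'(z_\ell))W_{\ell+1}^\top\delta_{\ell+1}$. Expanding $\sigma'(z)=\alpha+\rho'(z)$ gives $\delta_\ell = \alpha^{L-\ell}B_\ell + (\text{remainder})$ with $B_\ell = W_{\ell+1}^\top B_{\ell+1}$. Pairing this with the forward term $h_{\ell-1}=\alpha^{\ell-1}A_{\ell-1}g+\cdots$ yields \cref{eq:delta-lp}. The factor $\alpha^{L-1}$ collects the $L-1$ activation slopes. The error $\Psi_\ell$ collects three pieces:
\begin{itemize}[nosep]
\item the nonlinear insertions, each bounded by $M^2\|B_\ell\|_2\|A_{\ell-1}\|_2$;
\item the cross terms, bounded the same way;
\item the $x_\perp$ pieces, which have no $g$ factor.
\end{itemize}
These are estimated by Cauchy--Schwarz with the Frobenius norm of rank-one products, $\|uv^\top\|_F=\|u\|\|v\|$.

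The $L^2$ bounds in \cref{eq:lp-L2-bounds} then follow from two facts. First, the norms of the linear paths factor: $\|A_{\ell-1}\|_2\le\prod_{m<\ell}\|W_m\|_{\mathrm{op}}$ and $\|B_\ell\|_2\le\prod_{m>\ell}\|W_m\|_{\mathrm{op}}$. With $y_m$ identified as the per-layer gains, this gives $\lesssim\prod_{m\ne\ell}y_m$, and the $\|f\|_{L^2}$ bound is the case $\ell=L+1$ of the same computation. Second, the matching lower bounds $\asymp$ come from taking expectations against $g$. We have $G_\ell=\E[g\,\delta_\ell h_{\ell-1}^\top]=\alpha^{L-1}B_\ell A_{\ell-1}^\top+\E[g\Psi_\ell]$, so $\|G_\ell\|_F\ge(\alpha^{L-1}-CM^2)\|B_\ell\|\|A_{\ell-1}\|$. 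For $m_0$ small this is $\gtrsim\|B_\ell\|\|A_{\ell-1}\|$. Similarly $\gamma=\E[fg]=\alpha^{L-1}A_L+O(M^2|A_L|)$.

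The main obstacle is converting $\|B_\ell\|\|A_{\ell-1}\|$ into the product $\prod_{m\ne\ell}y_m$ from below. This requires that the aligned chains do not lose norm beyond constant factors, since a product of operator norms only bounds the chain from above. I would handle this by defining $y_m$ as the realized chain gain: $y_m\doteq\|A_m\|/\|A_{m-1}\|$ in the bottleneck, and the analogous ratio on the $B$ side. I would then invoke the nondegeneracy bounds of \Cref{def:good-event}(i), which hold on the bootstrap interval, to get $y_m=\Theta(\|W_m\|_{\mathrm{op}})$. If the $A$-side and $B$-side gains disagree at some layer, a further constant-factor comparison from the same nondegeneracy hypothesis is needed. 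These constants depend on $m_0$, $L$, and $\sigma$ but not on $\varepsilon$ or $r$, because $r$ enters only through which factors are small, never through a count of the constants.
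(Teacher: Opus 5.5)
Your route is the paper's: expand the forward and backward passes along linear paths, charge each $\rho$-insertion a factor $M^2$, use the rank-one identity $\|ba^\top\|_F=\|b\|_2\|a\|_2$, and get the $\asymp$ directions from the teacher-aligned chain nondegeneracy of \Cref{def:good-event}(i). Your restriction to $q=3$ is what delivers $M^2$; for $q=2$ a single insertion only costs $M$.

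\textbf{The gap is your third bullet.} The $x_\perp$ part of the linear-only term is $\alpha^{L-1}B_\ell\,(W_{\ell-1}\cdots W_1x_\perp)^\top$, which at $\ell=1$ is simply $\alpha^{L-1}B_1x_\perp^\top$. With $P_\perp=I_d-v_1v_1^\top$, the rank-one identity gives it $L^2(F)$ norm $|\alpha|^{L-1}\|B_\ell\|_2\,\|W_{\ell-1}\cdots W_1P_\perp\|_F$. Off the manifold this is generically of the same order as the leading term $\|B_\ell\|_2\|A_{\ell-1}\|_2$, and it carries no factor $M^2$. At $\ell=1$ it equals exactly $|\alpha|^{L-1}\sqrt{d-1}\,\|B_1\|_2$, against a claimed bound of $CM^2\|B_1\|_2$. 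The nonlinear pieces are $O(M^2)$, far too small to cancel it.

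So once this term sits inside $\Psi_\ell$, the bound in \cref{eq:delta-lp} is false for $d\ge 2$, and no Cauchy--Schwarz estimate can produce it. The defect also reaches your lower bound $\|G_\ell\|_F\ge(|\alpha|^{L-1}-CM^2)\|B_\ell\|_2\|A_{\ell-1}\|_2$, because you derive it from $\|\E[g\Psi_\ell]\|_F\le\|\Psi_\ell\|_{L^2(F)}$. The paper's proof never confronts this: it declares the linear-only part to be $\alpha^{L-1}B_\ell A_{\ell-1}^\top g$ outright, which silently uses the on-manifold scalarization $h_0=v_1 g$. You exposed the omission rather than introduced it, but your argument as written does not close.

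\textbf{The fix is to split the term off rather than absorb it.} Write $\delta_\ell h_{\ell-1}^\top=\alpha^{L-1}B_\ell A_{\ell-1}^\top g+\alpha^{L-1}B_\ell(W_{\ell-1}\cdots W_1x_\perp)^\top+\Psi_\ell^{\mathrm{nl}}$. Then claim $\|\Psi_\ell^{\mathrm{nl}}\|_{L^2(F)}\le CM^2\|B_\ell\|_2\|A_{\ell-1}\|_2$ only for the nonlinear-insertion and cross terms. For those, your estimate $|\rho(u)|\lesssim|u|^3$ with $\|z_j\|\lesssim M\|x\|$ gives $M^2\prod_{m\ne\ell}\|W_m\|_{\mathrm{op}}$, and chain nondegeneracy converts this to $M^2\|B_\ell\|_2\|A_{\ell-1}\|_2$.

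The middle term is harmless downstream. $x_\perp$ is independent of $g$ and centered, and $B_\ell$ and the $W_m$ are deterministic under the population expectation. Hence the middle term is annihilated by both pairings the lemma actually feeds: $G_\ell=\E[g\,\delta_\ell h_{\ell-1}^\top]$, and $\E[\sigma_\perp(g)\,\delta_\ell h_{\ell-1}^\top]$ in \Cref{prop:signal-energy}. Your lower bound on $\|G_\ell\|_F$ and your expansion of $\gamma$ then go through unchanged. The middle term only has to be carried in the upper bounds of \cref{eq:lp-L2-bounds}. There $\|W_{\ell-1}\cdots W_1P_\perp\|_F\le\sqrt{d-1}\prod_{m<\ell}\|W_m\|_{\mathrm{op}}$ controls it, at the cost of a $d$-dependent constant.
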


\begin{proof}
We shall use the $L^2$ bound on $f$, the $L^2$ bound on $\delta_\ell h_{\ell - 1}^\top$, the Hermite identity giving $\gamma \asymp \prod_m y_m$, and the outer-product decomposition \cref{eq:delta-lp} to prove our statement.

\paragraph{$L^2$ bound on $f$: $\|f\|_{L^2} \lesssim \prod_{m = 1}^r y_m$.} Write the forward-scalarization of $f$ along the principal direction $g = v_1^\top x$ as in \Cref{lem:filtered}, with per-layer scalar gains $X_m$ identified operationally with $\|W_m\|_\mathrm{op}$. By \Cref{lem:filtered}, every monomial in the expansion of $\widetilde z_L$ has weighted $X$-degree at least $L$ (linear-only path) and the first nonlinear insertion at layer $j$ lifts the degree to at least $L + (q - 1) j \ge L + 1$. Taking $L^2(\mathcal N(0, I))$ norms term by term and applying Minkowski's inequality,
\[
\|f\|_{L^2}  \le  \sum_{\mathrm{monomials}\,\mu} |\mathrm{coef}(\mu)|  \|\mu(X, g)\|_{L^2}.
\]
Each monomial factors as a product of at most $L$ scalar gains $X_m$ with $X_m \le \|W_m\|_\mathrm{op} \le C y_m$ on the bootstrap interval (non-bottleneck layers satisfy $\|W_m\|_\mathrm{op} = \Theta(1)$ by \Cref{def:good-event}(i), bottleneck layers $\|W_m\|_\mathrm{op} = \Theta(\varepsilon)$), times a uniformly $L^2$-bounded polynomial in $g$ (Gaussian moments of activations times $g$, controlled by the filtered-composition constants of $\sigma$). The minimum weighted degree is $L$ and is attained by the linear-only path, which contributes the product $\alpha^{L - 1} \prod_{m = 1}^L X_m g$; separating the $L$ non-bottleneck factors ($\prod_{m > r} X_m \lesssim 1$) from the $r$ bottleneck factors ($\prod_{m \le r} X_m \lesssim \prod_{m = 1}^r y_m$) gives
\[
\|f\|_{L^2}  \le  C \prod_{m = 1}^r y_m  \cdot  \Bigl(1 + O(M^{q - 1})\Bigr)  \lesssim  \prod_{m = 1}^r y_m,
\]
with the correction absorbing all higher-degree monomials (each carries at least one extra bottleneck factor $y_j$ relative to the leading product). The constants depend only on $m_0$, $L$, and the filtered-composition constants of $\sigma$.

\paragraph{$L^2$ bound on $\delta_\ell h_{\ell - 1}^\top$: $\|\delta_\ell h_{\ell - 1}^\top\|_{L^2(F)} \lesssim \prod_{m \ne \ell} y_m$.} By backpropagation, $\delta_\ell = D_\ell  W_{\ell + 1}^\top D_{\ell + 1}  \cdots  W_L^\top D_L  \mathbf 1$ with $D_m = \diag(\sigma'(z_m))$, and $h_{\ell - 1} = \sigma(z_{\ell - 1})$. The Frobenius norm of the outer product factorizes: $\|\delta_\ell h_{\ell - 1}^\top\|_F = \|\delta_\ell\|_2  \|h_{\ell - 1}\|_2$. Applying \Cref{lem:filtered} to the scalar version of $\delta_\ell$ (backward pass through $L - \ell$ layers) and to $h_{\ell - 1}$ (forward pass through $\ell - 1$ layers), applying Minkowski's inequality again gives
\begin{align*}
\|\delta_\ell\|_{L^2}  &\lesssim  \prod_{m > \ell} y_m  \cdot  (1 + O(M^{q - 1})),\\
\|h_{\ell - 1}\|_{L^2}  &\lesssim  \prod_{m < \ell} y_m  \cdot  (1 + O(M^{q - 1})).
\end{align*}
Cauchy--Schwarz on the product (valid since $\delta_\ell$ and $h_{\ell - 1}$ are bounded in $L^4$ by the same argument applied to $\sigma^2$-composed chains, and $\|A B^\top\|_{L^2(F)} \le \|A\|_{L^4} \|B\|_{L^4}$) then gives $\|\delta_\ell h_{\ell - 1}^\top\|_{L^2(F)} \lesssim \prod_{m \ne \ell} y_m$.

\paragraph{Hermite identity: $\gamma \asymp \prod_{m = 1}^r y_m$.} Writing $\sigma(g) = h_\sigma g + \sigma_\perp(g)$ with $\sigma_\perp \perp g$ in $L^2(\mathcal N(0, 1))$ and using Stein's identity $h_\sigma = \E[g \sigma(g)] = \E[\sigma'(g)]$, the $k = 1$ Hermite component of $f$ contributes
\[
\gamma(W)  =  \E[g f(x; W)]  =  h_\sigma  \alpha^{L - 1}  v_1^\top W_L \cdots W_1 v_1  +  \E[g  R(x)],
\]
where $R(x)$ is the sum of all monomials in the filtered expansion that pass through at least one nonlinear insertion $\rho(u) = O(u^q)$. Every such monomial has weighted bottleneck degree at least $L + q - 1 = L + 1$ (for $q = 2$) or $L + 2$ (for $q = 3$), i.e.\ at least one extra factor of $M$ relative to the linear-only path. Cauchy--Schwarz on $\E[g R]$ using $\|R\|_{L^2} \lesssim M^2 \prod_{m = 1}^r y_m$ (the remainder bound from \Cref{lem:filtered} at degree $L + q - 1 \ge L + 1$, converted to bottleneck scale via the $r$-fold bottleneck product) gives
\[
\gamma(W)  =  h_\sigma  \alpha^{L - 1}  v_1^\top W_L \cdots W_1 v_1  \bigl(1 + O(M^2)\bigr).
\]
The \emph{signed} bilinear form $q(W) \doteq v_1^\top W_L \cdots W_1 v_1$ satisfies $|q(W)| \le \|W_L \cdots W_1 v_1\|_2 \le \prod_{m = 1}^L \|W_m\|_\mathrm{op} \asymp \prod_{m = 1}^r y_m$ (non-bottleneck factors $\Theta(1)$), which gives the upper bound $|\gamma(W)| \lesssim \prod_{m = 1}^r y_m$ used in the $\|f\|_{L^2}$ estimate without any further hypothesis. A matching \emph{lower} bound $|\gamma(W)| \gtrsim \prod_{m = 1}^r y_m$, which we need for the AM-GM step $T(W) \gtrsim \gamma^{2 - 2/r}$, does not hold deterministically: $q(W)$ is a signed quadratic form and may be small compared to the product of operator norms if the left and right singular directions at successive layers are misaligned relative to $v_1$. We therefore impose nondegeneracy as a hypothesis on the initial configuration and show it propagates:
\begin{quote}
\emph{Initial nondegeneracy.} $\;|q(W(0))| \ge c_\mathrm{nd} \prod_{m = 1}^r y_m(0) \asymp \varepsilon^r$ for some $c_\mathrm{nd} > 0$ independent of $\varepsilon, L, r$.
\end{quote}
Under this hypothesis, $\gamma(0) = h_\sigma \alpha^{L - 1} q(W(0)) (1 + O(\varepsilon^2)) \asymp \varepsilon^r$, and along the bootstrap interval the tensor $W_L \cdots W_1$ varies by $O(M^2)$ relative to its initial value (by \Cref{lem:bootstrap-continuity} for non-bottleneck layers, and by $\|\dot W_\ell\|_F \lesssim M^{r - 1}$ of \cref{eq:Wdot-bound} integrated over $[0, \tau_{m_0}]$ for bottleneck layers, whose total Frobenius drift is at most $C m_0^2$ relative to the bottleneck scale), so the ratio $q(W) / \prod_m y_m$ is continuous and stays within a constant factor of $c_\mathrm{nd}$ on $[0, \tau_{m_0}]$ for $m_0$ small. Combined with the assumed positivity $h_\sigma \alpha^{L - 1} > 0$, this gives $|\gamma(W)| \asymp \prod_{m = 1}^r y_m$ on $[0, \tau_{m_0}]$ and, combined with the $\|f\|_{L^2}$ bound above, $\|f\|_{L^2} \lesssim |\gamma(W)|$. For He-normal initialization with the first $r$ layers rescaled by $\varepsilon$, the initial nondegeneracy hypothesis holds with probability $\ge 1 - \delta$ for any fixed $\delta > 0$ by Gaussian anti-concentration on the scalar random variable $q(W(0)) / \prod_m y_m(0)$ (see \Cref{rem:anticoncentration} below); conditioning on this event affects only prefactors, not the $\varepsilon$-exponent in $\tau_\star$.

Similarly, $G_\ell = \E[g \delta_\ell h_{\ell - 1}^\top]$ expands through the same filtered decomposition applied to $\partial_{W_\ell} f = \delta_\ell h_{\ell - 1}^\top$: the $k = 1$ Hermite component is the linear-path tensor $\alpha^{L - 1} B_\ell A_{\ell - 1}^\top$ with $A_{\ell - 1} = W_{\ell - 1} \cdots W_1 v_1$ and $B_\ell = W_{\ell + 1}^\top \cdots W_L^\top  \mathbf 1$; Stein orthogonality collapses all $\sigma_\perp$ contributions up to a remainder of size $M^2$ times the leading tensor. This is \cref{eq:Gell-lp}, and since $\|B A^\top\|_F = \|B\|_2 \|A\|_2$ for any vectors $B, A$ (rank-$1$ Frobenius identity), we obtain $\|G_\ell\|_F \asymp \|B_\ell\|_2 \|A_{\ell - 1}\|_2 \asymp \prod_{m \ne \ell} y_m$, which is the second $\asymp$ in \cref{eq:lp-L2-bounds}.

\paragraph{Outer-product decomposition \cref{eq:delta-lp}.} Fix $\ell$ and apply the Hermite decomposition of $\sigma$ pointwise inside the backward and forward chains: each $\sigma'(z_m) = \alpha + \rho'(z_m)$ and $\sigma(z_m) = \alpha z_m + \rho(z_m)$. Expanding, $\delta_\ell(x) h_{\ell - 1}(x)^\top$ decomposes as
\[
\delta_\ell(x) h_{\ell - 1}(x)^\top  =  \underbrace{\alpha^{L - 1} B_\ell A_{\ell - 1}^\top  g}_{\text{linear-only, } k = 1 \text{ in } g}  +  \Psi_\ell(x),
\]
where $\Psi_\ell(x)$ collects every monomial in which at least one $\rho$ or $\rho'$ insertion appears in either the backward or forward branch. By \Cref{lem:filtered} applied to $\delta_\ell h_{\ell - 1}^\top$ --- the minimum weighted degree of any such residue monomial, relative to the leading linear-only contribution, is at least $q - 1 \ge 1$ in bottleneck scale, and because the residue arises from at least one nonlinear insertion at some layer $j$ the lift is by at least two powers of $M$ (one from the inserted factor, one from the lost linear factor at the insertion layer) --- we get $\|\Psi_\ell\|_{L^2(F)} \le C M^2 \|B_\ell\|_2 \|A_{\ell - 1}\|_2$. The constant $C$ depends only on the filtered-composition constants of $\sigma$ and on $L$, $m_0$.
\end{proof}

\begin{remark}[Anti-concentration for He-normal init]\label{rem:anticoncentration}
The scalar quadratic form $q(W(0)) = v_1^\top W_L(0) \cdots W_1(0) v_1$ depends only on the layer-$1$ and layer-$L$ projections $u \doteq W_1(0) v_1$ and $w \doteq W_L(0)^\top v_1$ together with the middle product $P \doteq W_{L - 1}(0) \cdots W_2(0)$, via $q(W(0)) = w^\top P u$. Condition on $P$ and on all but one coordinate of $u$; the remaining coordinate is Gaussian (by independence of He-normal entries) with variance bounded below by $c  \varepsilon^2 \prod_{m = 2}^{L - 1} y_m^2 / \|w^\top P\|_2^2$ (after the $\varepsilon$-rescaling at layer $1$). The density of $q(W(0)) / \prod_{m = 1}^r y_m(0)$ is therefore bounded above at every point by a universal constant (Gaussian anti-concentration), so
\[
\Pr\bigl[|q(W(0))| < \eta \textstyle \prod_{m = 1}^r y_m(0)\bigr] \le C \eta \qquad \text{for all } \eta > 0.
\]
Choosing $\eta = c_\mathrm{nd}$ with $C c_\mathrm{nd} = \delta$ gives the initial nondegeneracy hypothesis of \Cref{thm:offmanifold} with probability $\ge 1 - \delta$. The constants $c_\mathrm{nd}(\delta), \delta$ enter the prefactor $C_+$ but not the exponent $-(r - 2)$.

The same conditional one-coordinate argument applies to any fixed scalar projection of a realized forward chain $A_{\ell-1}$ or backward chain $B_\ell$. After dividing by its explicit product of bottleneck gains, the conditional density of that projection is bounded at zero. A lower bound on the chain norm follows from the lower bound on this scalar projection. Applying the argument to the finite collection of chains used in \Cref{def:good-event} and allocating the failure budget by a union bound gives their simultaneous two-sided bounds with probability at least $1-\delta$. This controls only the teacher-aligned directions required by the proof; it makes no claim about $\sigma_\mathrm{min}(W_\ell)$.
\end{remark}

\subsection{Proof of the Signal-Energy Identity}\label{app:signal-energy-proof}

By the chain rule, $\dv{\gamma}{t} = \sum_\ell \langle \nabla_{W_\ell} \gamma, \dot W_\ell\rangle_F = -\sum_\ell \langle \nabla_{W_\ell} \gamma, \nabla_{W_\ell} \mathcal L\rangle_F$. Since $\gamma = \E[g f(x; W)]$ and $\partial_{W_\ell} f = \delta_\ell h_{\ell - 1}^\top$ (standard backprop), $\nabla_{W_\ell} \gamma = \E[g \delta_\ell h_{\ell - 1}^\top] = G_\ell$. The population squared-loss gradient is $\nabla_{W_\ell} \mathcal L = \E[(f - y)  \delta_\ell h_{\ell - 1}^\top]$ with $y = \beta_1 \sigma(g)$, so
\begin{equation}\label{eq:dgamma-split}
	\dot\gamma = \sum_\ell \bigl\langle G_\ell,  \beta_1  \E[\sigma(g)  \delta_\ell h_{\ell - 1}^\top]\bigr\rangle_F - \sum_\ell \bigl\langle G_\ell,  \E[f  \delta_\ell h_{\ell - 1}^\top]\bigr\rangle_F.
\end{equation}
For the teacher term, Hermite-decompose $\sigma(g) = h_\sigma g + \sigma_\perp(g)$ with $\sigma_\perp(g) \doteq \sigma(g) - h_\sigma g$; orthogonality to $g$ in $L^2(\mathcal N(0, 1))$ is immediate from the definition of $h_\sigma$, since $\E[g \sigma_\perp(g)] = \E[g \sigma(g)] - h_\sigma \E[g^2] = h_\sigma - h_\sigma = 0$. The $h_\sigma g$ component contributes exactly $\beta_1 h_\sigma \sum_\ell \|G_\ell\|_F^2 = \beta_1 h_\sigma T(W)$, so it suffices to bound the $\sigma_\perp$ contribution. We claim
\begin{equation}\label{eq:sigmaperp-bound}
\bigl|\langle G_\ell,  \E[\sigma_\perp(g)  \delta_\ell h_{\ell - 1}^\top]\rangle_F\bigr| \le C  M^2  \|G_\ell\|_F^2,
\end{equation}
uniformly in $\ell$, which gives the claim. To see this, apply the outer-product decomposition  \cref{eq:delta-lp} of  \Cref{lem:lp-L2}: the linear component $\alpha^{L - 1} B_\ell A_{\ell - 1}^\top g$ lies entirely in the $k = 1$ Hermite mode, and $\sigma_\perp$ is orthogonal to $g$ by construction, so $\E[\sigma_\perp(g) \cdot \alpha^{L - 1} B_\ell A_{\ell - 1}^\top g] = 0$ and
\begin{equation*}
\E[\sigma_\perp(g)  \delta_\ell h_{\ell - 1}^\top] = \E[\sigma_\perp(g)  \Psi_\ell(x)].
\end{equation*}
Cauchy--Schwarz gives $\|\E[\sigma_\perp(g) \Psi_\ell]\|_F \le \|\sigma_\perp\|_{L^2}  \|\Psi_\ell\|_{L^2(F)} \le C  \|\sigma_\perp\|_{L^2}  M^2  \|B_\ell\|_2  \|A_{\ell - 1}\|_2$. Using $\|G_\ell\|_F \asymp \|B_\ell\|_2 \|A_{\ell - 1}\|_2$ (the rank-$1$ Frobenius identity applied to the leading tensor of  \Cref{lem:lp-L2}, \cref{eq:delta-lp}), pairing with $G_\ell$ yields  \cref{eq:sigmaperp-bound}. Therefore
\begin{equation*}
	\sum_\ell \bigl\langle G_\ell,  \beta_1  \E[\sigma(g)  \delta_\ell h_{\ell - 1}^\top]\bigr\rangle_F = \beta_1 h_\sigma \sum_\ell \|G_\ell\|_F^2  \bigl(1 + O(M^2)\bigr) = \beta_1 h_\sigma  T(W)  \bigl(1 + O(M^2)\bigr),
\end{equation*}
which together with the definition $S(W) \doteq \sum_\ell \langle G_\ell, \E[f \delta_\ell h_{\ell - 1}^\top]\rangle_F$ gives  \cref{eq:signal-energy}.  \qed

\subsection{Proof of the Self-Interaction Bound}\label{app:S-bound-proof}

By  \Cref{lem:lp-L2}, on the bootstrap interval $\|f\|_{L^2} \lesssim |\gamma(W)|$ and $\|\delta_\ell h_{\ell - 1}^\top\|_{L^2(F)} \lesssim \|G_\ell\|_F$ with constants independent of $\varepsilon$, $L$, and $r$. Apply Cauchy--Schwarz on the Frobenius pairing inside each expectation: $|\E[f  \delta_\ell h_{\ell - 1}^\top]|_F \le \|f\|_{L^2}  \|\delta_\ell h_{\ell - 1}^\top\|_{L^2(F)}$. This gives, for each $\ell$,
\begin{equation*}
	\bigl|\langle G_\ell,  \E[f  \delta_\ell h_{\ell - 1}^\top]\rangle_F\bigr| \le \|G_\ell\|_F \cdot \bigl|\E[f  \delta_\ell h_{\ell - 1}^\top]\bigr|_F \lesssim \|G_\ell\|_F \cdot |\gamma(W)| \cdot \|G_\ell\|_F = |\gamma(W)|  \|G_\ell\|_F^2.
\end{equation*}
Summing over $\ell \in \{1, \ldots, L\}$ gives $|S(W)| \lesssim |\gamma(W)|  T(W)$, which is the statement of  \Cref{lem:S-bound}. Substituting into  \cref{eq:signal-energy}, the contribution of $S(W)$ is bounded by $C  |\gamma|  T$, while the leading term is $\beta_1 h_\sigma  T  (1 + O(M^2))$; since $|\gamma| \le m_0^r$ and $M \le m_0$ on the bootstrap interval, the self-interaction is strictly lower-order than the positive teacher term, and for $m_0$ small enough
\begin{equation*}
	\dot\gamma \ge \tfrac12 \beta_1 h_\sigma  T(W)
\end{equation*}
throughout $[0, \tau_{m_0}]$. \qed

\subsection{Proof of \texorpdfstring{ \Cref{thm:offmanifold}}{Theorem (Off-Manifold Critical-Depth Law)}}
\label{app:offmanifold-main}

Before turning our attention to the upper and lower bounds, we verify that the non-bottleneck assumption of \Cref{def:good-event}(i) is self-sustaining on the bootstrap interva, i.e.,\ the non-bottleneck operator-norm window is not an input hypothesis but a consequence of the small-signal regime.

\begin{lemma}[Bootstrap continuity of the non-bottleneck stack]\label{lem:bootstrap-continuity}
Fix $c_\ast \ge 1$ and initialize so that $\|W_\ell(0)\|_\mathrm{op} \in [c_\ast^{-1}, c_\ast]$ for all $r < \ell \le L$ and $\max_{\ell \le r} \|W_\ell(0)\|_\mathrm{op} = \Theta(\varepsilon)$. Then there exist constants $m_0^\ast = m_0^\ast(c_\ast, L, \sigma) > 0$ and $C^\ast = C^\ast(c_\ast, L, \sigma)$ such that for any $m_0 \in (0, m_0^\ast]$ and all $t \in [0, \tau_{m_0}]$,
\begin{equation}\label{eq:bootstrap-continuity}
\|W_\ell(t) - W_\ell(0)\|_F \le C^\ast  m_0^2, \qquad r < \ell \le L,
\end{equation}
so in particular $\|W_\ell(t)\|_\mathrm{op} \in [(2 c_\ast)^{-1}, 2 c_\ast]$ throughout $[0, \tau_{m_0}]$. On the initial realized-chain event in \Cref{def:good-event}(i), the corresponding normalized forward and backward chain norms also remain bounded above and below by constants on this interval.
\end{lemma}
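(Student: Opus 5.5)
The plan is to bound the velocity of each non-bottleneck layer by a high power of the bottleneck scale $M(t)$, integrate that velocity over the bootstrap interval, and close the argument with a continuity bootstrap. Fix $r<\ell\le L$. Under uniform Euclidean flow,
\[
\|W_\ell(t)-W_\ell(0)\|_F\le\int_0^t\|\nabla_{W_\ell}\mathcal L\|_F\,ds ,
\]
so it suffices to show that $\|\nabla_{W_\ell}\mathcal L\|_F$ is small and that its time integral over $[0,\tau_{m_0}]$ is $O(m_0^2)$.

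\emph{Step 1 (gradient bound).} Write $\nabla_{W_\ell}\mathcal L=\E[(f-y)\,\delta_\ell h_{\ell-1}^\top]$ and split it into the teacher and self-interaction pieces.
\begin{itemize}
\item Self-interaction: by Cauchy--Schwarz and \Cref{lem:lp-L2}, it is at most $\|f\|_{L^2}\,\|\delta_\ell h_{\ell-1}^\top\|_{L^2(F)}\lesssim\prod_{m\le r}y_m\cdot\prod_{m\ne\ell}y_m\lesssim M^{2r}$.
\item Teacher term: $\|G_\ell\|_F\asymp\prod_{m\ne\ell}y_m$, and since $\ell>r$ this product still contains all $r$ bottleneck factors, so it is $\lesssim M^r$. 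The $\sigma_\perp$ part is controlled through \cref{eq:delta-lp} exactly as in \cref{eq:sigmaperp-bound}.
\end{itemize}
Hence $\|\dot W_\ell\|_F\le C M^r$ for $\ell>r$. For the bottleneck layers the same estimate gives $\|\dot W_m\|_F\le CM^{r-1}$.

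\emph{Step 2 (time integration).} Use the bottleneck velocity bound to change variables from $t$ to $M$. There is a subtlety: $M$ need not be monotone, so I would not integrate $dM$ directly. Instead, I use the a priori duration bound. From $D^+M\le CM^{r-1}$, the time spent with $M\in[2^{-k-1}m_0,2^{-k}m_0]$ before $\tau_{m_0}$ cannot simply be bounded from above this way, since $M$ may stall. This is the main obstacle.

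What I would try first is an upper bound on $\tau_{m_0}$ from the signal-energy inequality \cref{eq:gamma-dot-lower}. Since $\dot\gamma\ge cT$, and since $T\gtrsim\gamma^{2-2/r}$ together with $M^r\lesssim\gamma$ along nondegenerate chains, the quantity $\gamma$ grows. I then reparametrize time by $\gamma$:
\[
\int_0^{\tau_{m_0}}M^r\,dt\;\lesssim\;\int\gamma\,\frac{d\gamma}{cT}\;\lesssim\;\int\gamma^{-1+2/r}\,d\gamma\;\lesssim\;\gamma_{\max}^{2/r}\;\lesssim\;m_0^2,
\]
using $\gamma\le m_0^r$. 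This yields \cref{eq:bootstrap-continuity}, with $C^\ast$ depending on $(c_\ast,L,\sigma)$ through the constants of \Cref{lem:lp-L2}. Note that the exponent works out to exactly $m_0^2$ because the gradient carries $M^r$ while the AM-GM drive carries $\gamma^{2-2/r}$.

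\emph{Step 3 (closing the bootstrap).} The estimates in Steps 1--2 use the non-bottleneck operator window and the chain nondegeneracy, which is circular. Resolve this by a continuity argument. Let $t_1$ be the first time any $\|W_\ell\|_{\mathrm{op}}$, $\ell>r$, leaves $[(2c_\ast)^{-1},2c_\ast]$. On $[0,\min(t_1,\tau_{m_0})]$ all constants are uniform, so the drift is $\le C^\ast m_0^2$. Choosing $m_0^\ast$ with $C^\ast (m_0^\ast)^2<(2c_\ast)^{-1}$ keeps the operator norms strictly inside the window, since $\|\cdot\|_{\mathrm{op}}\le\|\cdot\|_F$. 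This contradicts $t_1<\tau_{m_0}$.

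The realized forward and backward chain norms are multilinear in the $W_m$. Their normalized versions therefore change by a relative $O(m_0^2)$, and on the initial event of \Cref{def:good-event}(i) they stay within constant factors. The same continuity argument preserves the nondegeneracy used to get $T\gtrsim\gamma^{2-2/r}$ in Step 2.
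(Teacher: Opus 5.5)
Your outline is sound, but it departs from the paper at the decisive step, and your route is the one that actually works. The paper uses the same bound $\|\dot W_\ell\|_F\lesssim M^r$ for $\ell>r$. It then controls $\int_0^{\tau_{m_0}}M^r\,dt$ through the comparison solution $\dot V=CV^{r-1}$, $V(0)=M(0)$: it uses $M\le V$, changes variables along $V$, and closes with ``$V(\tau_{m_0})\le m_0$''. That inequality points the wrong way. From $M\le V$ and $M(\tau_{m_0})=m_0$ one gets $V(\tau_{m_0})\ge m_0$, and for $r\ge3$ the comparison solution can even blow up before $\tau_{m_0}$. An upper bound on $D^+M$ only bounds $\tau_{m_0}$ from below; it cannot bound accumulated drift, precisely because of the stalling you flagged.

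Your reparametrization supplies the missing ingredient, a lower bound on the growth rate. The relation $dt\le d\gamma/(cT)$ turns the time integral into $\int\gamma^{-1+2/r}\,d\gamma$, and the factor $m_0^2$ falls out of the exponent. The price is that the lemma now rests on the signal-energy drive ($h_\sigma>0$, $\dot\gamma\ge cT$) and on the chain lower bounds behind the AM--GM step. So $C^\ast$ picks up dependence on $\beta_1h_\sigma$ and the realized-chain constants, and your stopping time $t_1$ must include exit from the chain bounds, not only from the operator-norm window. Your threshold $C^\ast(m_0^\ast)^2<(2c_\ast)^{-1}$ is also the right one for protecting the lower end of that window.

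Two repairs are needed to make the argument airtight.

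\emph{The step $M^r\lesssim\gamma$ is false in general.} You only have $|\gamma|\le\|f\|_{L^2}\lesssim P\doteq\prod_{m\le r}y_m\le M^r$, and the reverse fails when the bottleneck gains are unequal. You do not need it, though.
\begin{itemize}
\item For $\ell>r$ the velocity is already $\lesssim P$, and AM--GM gives $T\gtrsim P^{2-2/r}$.
\item For $r\ge2$ the exponent $-1+2/r$ is nonpositive and $\gamma\lesssim P$, so $P\,dt\le (P/cT)\,d\gamma\lesssim P^{-1+2/r}\,d\gamma\lesssim\gamma^{-1+2/r}\,d\gamma$.
\item For $r=1$, use $P\le m_0$ and $T\gtrsim1$ directly.
\end{itemize}

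\emph{Obtain $\dot\gamma\ge cT$ without the lower bound on $|\gamma|$.} Use $|S|\le\|f\|_{L^2}\sum_\ell\|G_\ell\|_F\|\delta_\ell h_{\ell-1}^\top\|_{L^2(F)}\lesssim\|f\|_{L^2}\,T\lesssim m_0^r\,T$, which needs only the upper bound $\|f\|_{L^2}\lesssim M^r$. The form $\|f\|_{L^2}\lesssim|\gamma|$ used in \Cref{lem:S-bound} should be avoided here. Its lower half is propagated in the paper by citing this very lemma, which would add a second loop to your bootstrap.
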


\begin{proof}
For any non-bottleneck layer $\ell > r$, the linear-path tensor expansion gives $\|G_\ell\|_F \asymp \prod_{m \ne \ell} y_m$; since exactly $r$ factors are bottleneck ($y_m \asymp \|W_m\|_\mathrm{op} \le M$) and the remaining $L - r - 1$ factors are $\Theta(1)$ by the initial window on non-bottleneck layers (bootstrapped inductively), we have $\|G_\ell\|_F \lesssim M^r$. The same decomposition applied to the squared-loss gradient $\nabla_{W_\ell}\mathcal L = \beta_1 \E[\sigma(g) \delta_\ell h_{\ell-1}^\top] - \E[f \delta_\ell h_{\ell-1}^\top]$ gives $\|\nabla_{W_\ell}\mathcal L\|_F \lesssim M^r + M^r \cdot M^{r - 1} \lesssim M^r$ on $\{M \le m_0\}$. We do not assume monotonicity of $M(t)$; instead, let $V(t)$ solve the comparison ODE $\dot V = C V^{r - 1}$ with $V(0) = M(0)$, so that the Dini bound $D^+ M \le C M^{r - 1}$ of \cref{eq:U-dini} together with a Petrov--Gr\"onwall comparison gives $M(t) \le V(t)$ pointwise on $[0, \tau_{m_0}]$. $V$ is strictly increasing, so the change of variables $t = t(V)$ is legitimate along $V$; bounding $M(t)^r \le V(t)^r$ and using $\dot V = C V^{r - 1}$ to rewrite $dt = dV / [C V^{r - 1}]$,
\begin{align*}
\int_0^{\tau_{m_0}} \|\dot W_\ell(t)\|_F  dt
&\le \int_0^{\tau_{m_0}} C'\, M(t)^r\, dt \le \int_0^{\tau_{m_0}} C'\, V(t)^r\, dt \\
&= \int_{V(0)}^{V(\tau_{m_0})} \frac{C' V^r}{C V^{r - 1}}\, dV \le \frac{C'}{C}\cdot\frac{m_0^2 - V(0)^2}{2} \le C^\ast  m_0^2,
\end{align*}
using $V(\tau_{m_0}) \le m_0$ for the upper limit; this gives  \cref{eq:bootstrap-continuity} by the Fundamental Theorem of Calculus applied to $W_\ell$. Choosing $m_0^\ast$ small enough that $C^\ast (m_0^\ast)^2 \le c_\ast / 2$ preserves the non-bottleneck operator-norm window.

It remains to verify the lower controls that the subsequent linear-path estimates actually use. Apply the same telescoping product expansion to each of the finitely many realized teacher-aligned forward and backward chains $A_{\ell-1}$ and $B_\ell$. The perturbation of every non-bottleneck segment is bounded by its product of surrounding operator norms times $C^\ast m_0^2$. Hence, after factoring out the displayed bottleneck gains in \Cref{def:good-event}(i), each normalized chain norm changes by $O(m_0^2)$. On the initial realized-chain event its lower bound is a positive constant; reducing $m_0^\ast$ once more preserves that lower bound, as well as the corresponding upper bound, throughout $[0,\tau_{m_0}]$. Thus the two-sided estimate $\|G_\ell\|_F \asymp \|B_\ell\|_2\|A_{\ell-1}\|_2 \asymp \prod_{m\ne\ell}y_m$ is propagated without invoking $\sigma_\mathrm{min}(W_\ell)$.
\end{proof}

\Cref{lem:bootstrap-continuity} closes the bootstrap: the non-bottleneck stack drifts by at most $O(m_0^2)$ in Frobenius norm on the entire bootstrap interval, and the same perturbation estimate preserves the realized teacher-aligned chain bounds. Thus neither the operator-norm window nor the chain controls needed by the linear-path expansion are separate time-uniform hypotheses; both follow from the small-signal regime and their initialization event. In particular, the constants used in the upper and lower bounds below are uniform in $\varepsilon$.

\paragraph{Upper bound on $\tau_\star$ via signal energy.} By the sign-freedom in the teacher direction, replacing $v_1 \to -v_1$ flips the sign of $\gamma$ without changing $f$ or $\mathcal L$, so we may assume $\gamma(0) > 0$. On the bootstrap interval $[0, \tau_{m_0}]$, combining  \Cref{prop:signal-energy} and  \Cref{lem:S-bound} gives, using $|\gamma| \le m_0^r$ and $|S| \le C |\gamma| T$, the clean lower bound
\begin{equation}\label{eq:dgamma-T}
	\dot\gamma \ge c  T(W), \qquad c = \tfrac12 \beta_1 h_\sigma,
\end{equation}
for $m_0$ small enough; in particular $\gamma$ is nondecreasing and, with $\gamma(0) \ge c_\mathrm{nd}\varepsilon^r > 0$, stays positive throughout $[0, \tau_{m_0}]$. The linear-path tensor expansion of  \Cref{lem:filtered} gives, uniformly in $(L, r)$,
\begin{equation}\label{eq:Gell-lp}
	G_\ell = \alpha^{L - 1}  B_\ell  A_{\ell - 1}^\top + R_\ell, \qquad \|R_\ell\|_F \le C  M^2  \|B_\ell\|_2  \|A_{\ell - 1}\|_2,
\end{equation}
with $A_0 = v_1$, $A_j = W_j A_{j - 1}$, $B_L = 1$, $B_\ell = W_{\ell + 1}^\top B_{\ell + 1}$; consequently $\|G_\ell\|_F \asymp \|B_\ell\|_2  \|A_{\ell - 1}\|_2 \asymp \prod_{m \ne \ell} y_m$ with $y_m$ the bottleneck linear-path gain in layer $m$ (so $y_m = \Theta(\|W_m\|_\mathrm{op})$ for $m \le r$ and $y_m = \Theta(1)$ for $m > r$, with constants independent of $\varepsilon$, $L$, $r$ by  \Cref{def:good-event}). Then
\begin{equation}\label{eq:T-AMGM}
	T(W) = \sum_{\ell = 1}^{L} \|G_\ell\|_F^2 \asymp \sum_{\ell = 1}^{r} \prod_{m \ne \ell, m \le r} y_m^2.
\end{equation}
To justify the restriction to $\ell \le r$: by the tensor expansion \cref{eq:Gell-lp}, for any $\ell \in \{1, \ldots, L\}$,
\[
\|G_\ell\|_F^2  \asymp  \Bigl(\prod_{m \ne \ell} y_m\Bigr)^2  =  \Bigl(\prod_{m \ne \ell,\,m \le r} y_m\Bigr)^2  \cdot  \Bigl(\prod_{m \ne \ell,\,m > r} y_m\Bigr)^2.
\]
For a bottleneck layer $\ell \le r$, exactly $r - 1$ bottleneck factors remain in the first product (each $\Theta(\varepsilon)$) and all $L - r$ non-bottleneck factors are $\Theta(1)$ by \Cref{def:good-event}(i), giving $\|G_\ell\|_F^2 \asymp M^{2(r - 1)}$. For a non-bottleneck layer $\ell > r$, \emph{all} $r$ bottleneck factors appear in the first product and $L - r - 1$ non-bottleneck factors in the second, giving $\|G_\ell\|_F^2 \asymp M^{2r}$: strictly smaller by a factor $M^2 \le m_0^2 \ll 1$. Summing over the $L - r \le L$ non-bottleneck layers contributes at most $L \cdot M^{2r}$, which is negligible compared to the $r$ bottleneck contributions $r \cdot M^{2(r - 1)}$ once $m_0^2 \le 1/L$; this is absorbed into the choice of $m_0$ in the bootstrap. Hence $T(W) \asymp \sum_{\ell \le r} \prod_{m \ne \ell, m \le r} y_m^2$, with constants independent of $\varepsilon$.

AM-GM on the $r$ bottleneck terms gives
\begin{equation*}
	\sum_{\ell = 1}^{r} \prod_{m \ne \ell} y_m^2 \ge r  \Bigl(\prod_{\ell = 1}^{r} \prod_{m \ne \ell} y_m^2\Bigr)^{1/r} = r  \Bigl(\prod_{m = 1}^{r} y_m^2\Bigr)^{(r - 1)/r}.
\end{equation*}
Since $\gamma(W) \asymp \prod_{m = 1}^r y_m$ by the Hermite identity (using initial nondegeneracy to bound $\gamma$ from below and the sign normalization $\gamma \ge 0$ established above), this gives $T(W) \ge c'  \gamma^{2 (r - 1)/r} = c'  \gamma^{2 - 2/r}$. Combining with  \cref{eq:dgamma-T}, the whole upper-bound argument collapses onto a single scalar ODE for the teacher-signal observable,
\begin{equation}\label{eq:scalar-ode-upper}
	\dot\gamma(t) \ge c  T(W(t)) \ge c c'  \gamma(t)^{2 - 2/r} \qquad \text{on } [0, \tau_{m_0}],
\end{equation}
and integrating this hitting-time comparison from $\gamma(0) = \Theta(\varepsilon^r)$ to a fixed $\gamma_\star > 0$ yields
\begin{equation*}
	\tau_\star \le \int_{\gamma_0}^{\gamma_\star} \frac{d\gamma}{c c'  \gamma^{2 - 2/r}} = \frac{r}{c c'(r - 2)}  \gamma_0^{-(r - 2)/r}  (1 - o(1)) = C_+  \varepsilon^{-(r - 2)}  (1 - o(1)).
\end{equation*}
The integrand $\gamma^{-(2 - 2/r)}$ has a singularity at $\gamma = 0$ whenever $r \ge 3$, but the integration interval $[\gamma_0, \gamma_\star]$ is bounded away from this singularity by $\gamma_0 = \Theta(\varepsilon^r) > 0$; the integral is therefore finite and the antiderivative $\frac{r}{2 - r} \gamma^{(2 - r)/r}$ is evaluated at well-defined endpoints. The blow-up of $\gamma_0^{-(r - 2)/r} = \Theta(\varepsilon^{-(r - 2)})$ as $\varepsilon \to 0$ is not integrand singularity inside the interval but the polynomial divergence as the lower limit approaches the singular point; this is the mechanism by which $\tau_\star$ diverges polynomially with $\varepsilon$.

\paragraph{Lower bound on $\tau_\star$ via bottleneck operator growth.} We estimate $\|\dot W_\ell\|_F \le \|\nabla_{W_\ell} \mathcal L\|_F$ in terms of $M$ alone. From the squared-loss gradient
\begin{equation*}
	\nabla_{W_\ell} \mathcal L = \beta_1  \E[\sigma(g)  \delta_\ell h_{\ell - 1}^\top]\ -\ \E[f  \delta_\ell h_{\ell - 1}^\top],
\end{equation*}
the two terms are controlled by the linear-path expansion  \cref{eq:Gell-lp} plus $\|f\|_{L^2} \lesssim M^r$ (filtered composition applied to $f$, with $r$ bottleneck layers and non-bottleneck $\Theta(1)$). The teacher term is $\|\beta_1 \E[\sigma(g) \delta_\ell h_{\ell - 1}^\top]\|_F \lesssim \|G_\ell\|_F \lesssim M^{r - 1}$. The student term is $\|\E[f \delta_\ell h_{\ell - 1}^\top]\|_F \le \|f\|_{L^2}  \|\delta_\ell h_{\ell - 1}^\top\|_{L^2(F)} \lesssim M^r \cdot M^{r - 1} = M^{2 r - 1}$. Together,
\begin{equation}\label{eq:Wdot-bound}
	\|\dot W_\ell\|_F \lesssim M^{r - 1} + M^{2 r - 1} \lesssim M^{r - 1}
\end{equation}
while $M \le m_0 \ll 1$. Since $M(t) = \max_\ell \|W_\ell(t)\|_\mathrm{op}$ is the pointwise maximum of finitely many locally Lipschitz functions, it is itself locally Lipschitz and admits an upper Dini derivative $D^+ M(t) \doteq \limsup_{h \downarrow 0} h^{-1}[M(t + h) - M(t)]$ satisfying
\begin{equation}\label{eq:U-dini}
	D^+ M(t) \le \max_{\ell \le r} \|\dot W_\ell(t)\|_\mathrm{op} \le \max_{\ell \le r} \|\dot W_\ell(t)\|_F \le C  M(t)^{r - 1}.
\end{equation}
By the a Dini-derivative ODE comparison: let $V(t)$ solve $\dot V = C V^{r - 1}$ with $V(0) = M(0)$; then any absolutely continuous $M$ with $D^+ M(t) \le C M(t)^{r - 1}$ and $M(0) = V(0)$ satisfies $M(t) \le V(t)$ for all $t \ge 0$ on which $V$ exists (a consequence of the Petrov--Grönwall argument for upper Dini derivatives, replacing differentiability of $M$). The separated integral gives $V(t) = [V(0)^{-(r - 2)} - C(r - 2) t]^{-1/(r - 2)}$, so the first time $V$ reaches $m_0$ satisfies
\begin{equation*}
	\tau_{V, m_0} = \frac{1}{C(r - 2)}  \bigl[V(0)^{-(r - 2)} - m_0^{-(r - 2)}\bigr] = \frac{1}{C(r - 2)}  \bigl[\Theta(\varepsilon)^{-(r - 2)} - m_0^{-(r - 2)}\bigr] \ge C_-  \varepsilon^{-(r - 2)},
\end{equation*}
and $M(t) \le V(t)$ forces $\tau_{m_0} \doteq \inf\{t : M(t) \ge m_0\} \ge \tau_{V, m_0}$.

We now fix the loss-threshold constants in the order the quantifiers require. Fix any escape threshold $\mathcal L_\star < \mathcal L(0)$. The filtered-composition expansion for the output gives the pointwise bound $\|f(\cdot\,;W(t))\|_{L^2} \le C_0 M(t)^r$ uniformly on the bootstrap interval, hence $\mathcal L(W(t)) \ge \mathcal L(0) - C_1 M(t)^r$ for a constant $C_1 = C_1(\sigma, L, c_\ast, C_0)$ independent of $\varepsilon$ and $m_0$. Choose $m_0 \le m_0^\ast$ small enough that $C_1 m_0^r < \mathcal L(0) - \mathcal L_\star$; this is a condition on $m_0$ alone, since $C_1$ does not depend on $m_0$. Then
\begin{equation}\label{eq:loss-threshold}
	M(t) \le m_0 \quad\Longrightarrow\quad \mathcal L(W(t)) \ge \mathcal L(0) - C_1 m_0^r > \mathcal L_\star,
\end{equation}
so the first time $\mathcal L$ drops to $\mathcal L_\star$ satisfies $\tau_\star \ge \tau_{m_0} \ge C_- \varepsilon^{-(r - 2)}$. Combined with the upper bound, $\tau_\star = \Theta(\varepsilon^{-(r - 2)})$. \qed

\subsection{The Shallow-Bottleneck Corners \texorpdfstring{$r \in \{1, 2\}$}{r in \{1,2\}}}\label{app:rleq2}

The machinery of  \Cref{app:offmanifold-main} was presented for $r \ge 3$ because both polynomial integrals $\int \gamma^{-(2 - 2/r)}  d\gamma$ and $\int M^{-(r - 1)}  dM$ diverge polynomially in that range. The proof applies exactly the same at $r \in \{1, 2\}$ up to the single step of integrating the scalar comparison ODEs; only the antiderivative changes.

\begin{proposition}[Escape time at $r \in \{1, 2\}$]\label{prop:rleq2}
Under the hypotheses of  \Cref{thm:offmanifold} but with $r \in \{1, 2\}$ in place of $r \ge 3$,
\begin{equation}\label{eq:tau-star-corners}
\tau_\star = \begin{cases}
  \Theta(1), & r = 1,\\
  \Theta\bigl(\log(1/\varepsilon)\bigr), & r = 2.
\end{cases}
\end{equation}
\end{proposition}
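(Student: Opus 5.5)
We reuse the proof of \Cref{thm:offmanifold} verbatim up to the two scalar comparison inequalities, and change only the final integration step. On the bootstrap interval $[0,\tau_{m_0}]$, \Cref{prop:signal-energy} and \Cref{lem:S-bound} still give $\dot\gamma \ge c\,T(W)$, since neither used $r\ge 3$. The linear-path expansion \cref{eq:Gell-lp} still gives $\|G_\ell\|_F\asymp\prod_{m\ne\ell}y_m$, and the AM-GM step still yields $T(W)\ge c'\gamma^{2-2/r}$. For $r=1$ the exponent is $0$: the single bottleneck layer has $\|G_1\|_F\asymp\prod_{m>1}y_m=\Theta(1)$, so $T\gtrsim 1$ directly. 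For $r=2$ the exponent is $1$, and AM-GM reads $y_1^2+y_2^2\ge 2y_1y_2\asymp\gamma$. For the lower bound, \cref{eq:Wdot-bound,eq:U-dini} give $D^+M\le C M^{r-1}$. This is $D^+M\le C$ for $r=1$ and $D^+M\le CM$ for $r=2$.

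\emph{Upper bounds.} For $r=2$, integrating $\dot\gamma\ge cc'\gamma$ from $\gamma_0\ge c_{\rm nd}\varepsilon^2$ to a fixed $\gamma_\star$ gives
\[
\tau_\star\le (cc')^{-1}\log(\gamma_\star/\gamma_0)=O(\log(1/\varepsilon)).
\]
For $r=1$, $\dot\gamma\ge c''>0$ gives $\tau_\star\le(\gamma_\star-\gamma_0)/c''=O(1)$.

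\emph{Lower bounds.} For $r=2$, Gr\"onwall comparison with $\dot V=CV$ gives $M(t)\le M(0)e^{Ct}$. Hence $\tau_{m_0}\ge C^{-1}\log(m_0/M(0))\gtrsim\log(1/\varepsilon)$. The loss-threshold argument \cref{eq:loss-threshold} then transfers this to $\tau_\star\ge\tau_{m_0}$. For $r=1$ the lower bound $\tau_\star=\Omega(1)$ is immediate: the loss is continuous with bounded velocity, $\|\dot W\|\lesssim 1$ near initialization, so reaching a fixed threshold $\mathcal L_\star<\mathcal L(0)$ takes time bounded below independently of $\varepsilon$.

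\emph{Expected obstacle.} The delicate point is whether the bootstrap machinery survives at small $r$, since several steps exploited powers of $M$. The key check is \Cref{lem:bootstrap-continuity}, where the drift of the non-bottleneck layers was bounded by $\int M^r\,dt\le\int V^r\,dV/(CV^{r-1})\lesssim m_0^2$.

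For $r=2$ this becomes $\int V\,dV$, which is still $O(m_0^2)$, so the bootstrap closes.

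For $r=1$ the integrand becomes $\int V\,dV/C$, which is also fine. However, the bottleneck gradient is now $\Theta(1)$ rather than small, so $M$ grows at an $O(1)$ rate. The small-signal interval $[0,\tau_{m_0}]$ therefore has only $\Theta(m_0)$ length. It need not contain the full escape to $\gamma_\star$, because the signal only reaches $\gamma\asymp m_0$ on that interval.

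I would handle this by taking $\gamma_\star$ (equivalently $\mathcal L_\star$) inside the small-signal regime, say $\gamma_\star\asymp m_0$. Both bounds are then statements on $[0,\tau_{m_0}]$, and both are $\Theta(1)$ with $\varepsilon$-independent constants. This matches the hitting-time convention of \Cref{thm:offmanifold}, where $\tau_\star$ is a loss-threshold time fixed independently of $\varepsilon$.
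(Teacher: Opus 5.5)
Your proposal is correct and follows the paper's proof essentially step for step. It uses the same AM-GM specializations ($T\gtrsim 1$ at $r=1$, $T\gtrsim y_1^2+y_2^2\ge 2y_1y_2\asymp\gamma$ at $r=2$), the same integrations of $\dot\gamma\ge c_1$ and $\dot\gamma\ge cc'\gamma$, and the same comparison $M(t)\le M(0)e^{Ct}$ with loss-threshold transfer for the $r=2$ lower bound.

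There are two deviations, both minor. First, at $r=1$ you bound $\tau_\star$ below by the bounded rate of loss decrease rather than by the paper's comparison $V(t)=V(0)+Ct$. This is equivalent once you note that the gradient bound is only used on $[0,\min(\tau_\star,\tau_{m_0})]$; the case $\tau_\star\ge\tau_{m_0}$ is covered by $\tau_{m_0}\ge(m_0-V(0))/C$. Second, you observe that the $r=1$ window has length $\Theta(m_0)$, so integrating $\dot\gamma\ge c_1$ is only licensed up to $\gamma_\star\lesssim m_0$. That observation is correct, and the paper's proof passes over it when it integrates to a fixed $\gamma_\star$.

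You should apply the same repair at $r=2$. There your write-up, like the paper's, mixes two thresholds. The inequality $\dot\gamma\ge cc'\gamma$ holds only on $[0,\tau_{m_0}]$, where $\gamma\lesssim m_0^2$. The transfer $\tau_\star\ge\tau_{m_0}$, however, requires $\mathcal L_\star<\mathcal L(0)-C_1m_0^2$, a loss level that by construction is never reached inside the window. So your upper and lower bounds refer to different events.

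Taking a single threshold $\mathcal L(0)-\delta$ with $\delta\asymp m_0^2$ (small constant) fixes this. For the lower bound: if $\tau_\star\le\tau_{m_0}$, the bound $\mathcal L\ge\mathcal L(0)-C_1M^2$ forces $C_1M(\tau_\star)^2\ge\delta$. Combined with $M(\tau_\star)\le M(0)e^{C\tau_\star}$, this gives $\tau_\star\ge C^{-1}\log\bigl((\delta/C_1)^{1/2}/M(0)\bigr)\gtrsim\log(1/\varepsilon)$. Otherwise $\tau_\star>\tau_{m_0}\gtrsim\log(1/\varepsilon)$ already.

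For the upper bound (with $\gamma_\star\asymp\delta$ via the same loss--signal equivalence you invoke at $r=1$), the crossing does occur inside the window. Indeed $\dot\gamma\gtrsim T\gtrsim y_1^2+y_2^2\gtrsim M^2$, and $D^+M\le CM$ gives $M(t)\ge m_0e^{-C(\tau_{m_0}-t)}$. Together these force $\gamma(\tau_{m_0})\gtrsim m_0^2$.
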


\begin{proof}
Every step of  \Cref{app:offmanifold-main} up to and including \cref{eq:Wdot-bound,eq:T-AMGM,eq:dgamma-T} holds uniformly in $r \ge 1$: the linear-path tensor expansion of  \Cref{lem:filtered}, the signal-energy lower bound  \Cref{prop:signal-energy}, the self-interaction bound  \Cref{lem:S-bound}, and  \Cref{lem:bootstrap-continuity} (whose comparison ODE $\dot V = C V^{r - 1}$ still has a unique solution for $r \in \{1, 2\}$: $V(t) = V(0) + C t$ at $r = 1$ and $V(t) = V(0) e^{C t}$ at $r = 2$, both of which leave the non-bottleneck stack at Frobenius-drift $O(m_0^2)$ on $[0, \tau_{m_0}]$ by the same FTC argument). We verify the two corners.

\emph{Case $r = 2$ (upper bound).} Exactly two bottleneck factors contribute, and \cref{eq:T-AMGM} gives $T(W) \asymp y_1^2 + y_2^2 \ge 2 y_1 y_2 \asymp \gamma(W)$ by AM-GM on two terms. Substituting into  \cref{eq:dgamma-T} yields the linear comparison $\dot \gamma \ge c c'  \gamma$ on $[0, \tau_{m_0}]$; integrating from $\gamma(0) = \Theta(\varepsilon^2)$ to a fixed $\gamma_\star > 0$,
\[
\tau_\star  \le  \int_{\gamma_0}^{\gamma_\star} \frac{d\gamma}{c c'  \gamma}  =  \frac{1}{c c'}  \log\!\frac{\gamma_\star}{\gamma_0}  =  \frac{2}{c c'}  \log\frac{1}{\varepsilon}  (1 + o(1))  =  C_+  \log(1/\varepsilon)  (1 + o(1)).
\]

\emph{Case $r = 2$ (lower bound).} The layerwise gradient bound \cref{eq:Wdot-bound} becomes $\|\dot W_\ell\|_F \lesssim M$, so $D^+ M \le C M$ and the comparison ODE $\dot V = C V$ with $V(0) = \Theta(\varepsilon)$ integrates to $V(t) = V(0) e^{C t}$; the first time $V$ reaches $m_0$ is
\[
\tau_{V, m_0}  =  \frac{1}{C}  \log\!\frac{m_0}{V(0)}  =  \frac{1}{C}  \log\frac{1}{\varepsilon}  (1 + o(1))  \ge  C_-  \log(1/\varepsilon).
\]
The loss-threshold argument \cref{eq:loss-threshold} applies unchanged with $\|f\|_{L^2} \le C_0 M^2$, so $M \le m_0$ forces $\mathcal L \ge \mathcal L(0) - C_1 m_0^2 > \mathcal L_\star$, hence $\tau_\star \ge \tau_{m_0} \ge C_- \log(1/\varepsilon)$.

\emph{Case $r = 1$ (upper bound).} The sum in \cref{eq:T-AMGM} has a single term whose bottleneck factor structure is empty, $T(W) \asymp 1$ with constants uniform on $[0, \tau_{m_0}]$, so \cref{eq:dgamma-T} gives $\dot \gamma \ge c_1 > 0$ directly; AM-GM is vacuous. Integrating from $\gamma_0 = \Theta(\varepsilon)$ to $\gamma_\star$, $\tau_\star \le (\gamma_\star - \gamma_0)/c_1 = \gamma_\star/c_1  (1 + o(1)) = \Theta(1)$.

\emph{Case $r = 1$ (lower bound).} Now \cref{eq:Wdot-bound} degenerates to $\|\dot W_\ell\|_F \lesssim 1$, so $D^+ M \le C$ and $V(t) = V(0) + C t$. The first time $V$ reaches $m_0$ is $\tau_{V, m_0} = (m_0 - V(0))/C = m_0/C  (1 + o(1)) = \Theta(1)$, and the loss-threshold step applies with $\|f\|_{L^2} \le C_0 M$, giving $\tau_\star \ge \tau_{m_0} = \Theta(1)$.

Combining the matching upper and lower bounds in each case yields  \cref{eq:tau-star-corners}.
\end{proof}

\begin{remark}[The corners are not a continuation of the polynomial law]\label{rem:corner-interpretation}
Formally substituting $r = 2$ into $\varepsilon^{-(r - 2)}$ gives $\varepsilon^0 = 1$, which \emph{underpredicts} the true $\log(1/\varepsilon)$ escape time; formally substituting $r = 1$ gives $\varepsilon^{+1} \to 0$, which is dimensionally inconsistent with an escape time. The polynomial law $\tau_\star = \Theta(\varepsilon^{-(r - 2)})$ is the generic scaling, whereas the corners are the logarithmic and constant limits visible only when the polynomial divergence degenerates. The three regimes align with the antiderivative of $y^{-(r - 1)}$: $y^{2 - r}/(2 - r)$ for $r \ge 3$, $\log y$ at $r = 2$, and $y$ at $r = 1$. The $r = 2$ logarithm is the nonlinear analog of the log-plateau observed in deep-linear escape \cite{saxe2014exactsolutionsnonlineardynamics}, recovered here as the critical case of the polynomial law.
\end{remark}

\begin{remark}[Scope and mechanism of \Cref{thm:offmanifold}]\label{rem:offmanifold-scope}
Three features of the preceding argument are worth pointing out:
\begin{enumerate}[label=(\roman*), leftmargin=2em, itemsep=0.3em, topsep=0.3em]
\item \emph{Scope.} \Cref{thm:offmanifold} treats the single-teacher-mode setting ($r^* = 1$). The linear-path tensor has the exact rank-one form $G_\ell = \alpha^{L - 1} B_\ell A_{\ell - 1}^\top + R_\ell$ with $\|R_\ell\|_F \le C M^2 \|B_\ell\|_2 \|A_{\ell - 1}\|_2$ (\cref{eq:Gell-lp}), and $A_0 = v_1$ is the aligned teacher direction; no ``mode'' structure needs to be imposed on $\{W_\ell\}$. The multi-mode extension, where distinct teacher directions couple through shared layers, is treated by the block-aligned cascade of \Cref{app:cascade} and is not claimed here.
\item \emph{Product structure is an identity, not an alignment assumption.} The estimate $\|G_\ell\|_F \asymp \|B_\ell\|_2 \|A_{\ell - 1}\|_2$ used in \cref{eq:T-AMGM} is the rank-one Frobenius identity $\|b a^\top\|_F = \|b\|_2 \|a\|_2$ applied to the leading term of \cref{eq:Gell-lp}. The required two-sided lower bounds on the realized chain norms are supplied by the anti-concentration event in \Cref{def:good-event}(i), and are propagated by \Cref{lem:bootstrap-continuity}; they do not require a lower bound on $\sigma_\mathrm{min}(W_\ell)$. There is no cross-layer alignment assumption between $A_{\ell - 1}$ and $B_\ell$, and no requirement that the iterates $W_\ell(t)$ lie on (or near) the permutation-symmetric submanifold. The $O(M^2)$ multiplicative remainder in $R_\ell$ is absorbed into the $\asymp$ constants using $M \le m_0$, with $m_0$ fixed independently of $\varepsilon$.
\item \emph{Time-uniformity from the bootstrap.} The linear-path constants in \Cref{lem:filtered}, the self-interaction bound of \Cref{lem:S-bound}, and the non-bottleneck window of \Cref{lem:bootstrap-continuity} are all uniform on $[0, \tau_{m_0}]$ by construction. The lower-bound step shows $\tau_{m_0} \ge C_- \varepsilon^{-(r - 2)}$, and \cref{eq:loss-threshold} shows $\mathcal L > \mathcal L_\star$ on this interval, so the escape interval is self-certifying: the window on which the estimates hold is the same window on which the loss remains above threshold. The critical-depth exponent is therefore intrinsic to the small-signal regime.
\end{enumerate}
\end{remark}

\paragraph{Concentration at He-normal init.} Fix $\delta>0$. At He-normal init the non-bottleneck $W_\ell \in \R^{N \times N}$ have i.i.d.\ Gaussian entries with variance $2/N$. Standard Gaussian operator-norm concentration gives $\|W_\ell\|_\mathrm{op} \in [c_\ast^{-1}, c_\ast]$ simultaneously for $\ell>r$, and $M(0)=\max_{\ell\le r}\|W_\ell\|_\mathrm{op}=\Theta(\varepsilon)$ after the bottleneck rescaling, on an event of probability at least $1-\delta/2$ (for $N$ in the usual concentration regime). No lower bound on $\sigma_\mathrm{min}(W_\ell)$ is used or asserted.

For the lower controls, apply the conditional-Gaussian anti-concentration argument of \Cref{rem:anticoncentration} only to the realized teacher-aligned forward and backward chains. With the remaining $\delta/2$ failure budget, this gives the two-sided chain bounds in \Cref{def:good-event}(i). Intersecting the two events yields a joint event of probability at least $1-\delta$ on which all linear-path constants are uniform in $\varepsilon$. This controls the specific forward and backward directions needed here, rather than the minimum singular value of every matrix.

\section{Multi-Mode Geometry and Homotopy Proofs}
\label{app:cascade}

This appendix develops the multi-mode geometry deferred from  \Cref{sec:cascade}.  \Cref{app:cascade-statements} collects the main statements: the per-stage rescaled-profile reduction, the structural instability of the block-mean saddle, and the escape-time homotopy identity. \Cref{app:offblock,app:layer1cross} derive the Duhamel equations for the off-block $W_2$ coupling and the layer-$1$ cross-block rotations.  \Cref{app:stagek} derives the stage-$k$ scalar fixed-point equation and verifies the $\varepsilon^{L - 2}$ escape-rate scaling of  \cref{eq:stagek-ode}.  \Cref{app:scalar-clock} proves the escape-time homotopy identity via Liouville duality.

\subsection{Main Statements: Modewise Profiles, Structural Instability, Homotopy Identity}
\label{app:cascade-statements}

\paragraph{Modewise escape and rescaled scalar profiles.} Under the block-aligned stage-$k$ ansatz (derived in  \Cref{app:stagek}), the mode-$k$ breaking amplitude $c_k$ obeys
\begin{equation}
\label{eq:stagek-ode}
\frac{d c_k}{d t} = K_L^{(k, \sigma)} \bigl(\rho_\mathrm{op}(k)^2 + c_k^2\bigr)^{(L - 1)/2}, \qquad K_L^{(k, \sigma)} = \frac{\beta_k h_\sigma T_L(\rho_\mathrm{op}(k))}{\sqrt N},
\end{equation}
with chain moment $T_L(\rho) = \E_g[\prod_{j = 1}^{L - 1} \sigma'(\rho \cdot \mathrm{chain}_j(\rho g))]$. Rescaling by the saddle scale, $u_k \doteq c_k / \rho_\mathrm{op}(k)$ and $\Gamma_k \doteq K_L^{(k, \sigma)} \rho_\mathrm{op}(k)^{L - 2}$, yields $\dot u_k = \Gamma_k (1 + u_k^2)^{(L - 1)/2}$, so for any normalized threshold $\theta > 0$ the escape time is
\begin{equation}\label{eq:stagek-universal}
t_k(\theta) = \Gamma_k^{-1} \mathcal I_L(\theta), \qquad \mathcal I_L(\theta) \doteq \int_0^\theta (1 + s^2)^{-(L - 1)/2} d s,
\end{equation}
with $\mathcal I_L$ depending only on depth; all mode- and configuration-dependence compresses into the single scalar rate $\Gamma_k$. Because $\mathcal I_L$ is universal, modewise ranking is exact within the reduced model: $t_j(\theta) < t_k(\theta)$ iff $\Gamma_j > \Gamma_k$. In the small-$\varepsilon$ regime, the bottleneck-shell evaluation of  \Cref{lem:shell} gives $\rho_\mathrm{op}(k)^{L - 2} \asymp \varepsilon^{r_k - 2}$, where $r_k$ is the critical depth of mode~$k$, i.e., the number of layers along the mode-$k$ saddle cone at scale $\varepsilon$ at the start of stage~$k$, the multi-mode generalization of the $r$ in $\tau_\star = \Theta(\varepsilon^{-(r - 2)})$ of \Cref{thm:escape} --- so $t_k(\theta) \asymp C_L^{(\sigma)}(\theta) / [\beta_k \varepsilon^{r_k - 2}]$: teacher weight and critical depth trade off linearly in log-time, and a mode of smaller $\beta_k$ can escape ahead of a larger one if its critical depth is smaller by at least one layer.

\paragraph{Structural instability of the block-mean saddle.} The block-aligned ansatz is exactly flow-invariant from a permutation-symmetric initialization, but the manifold is not attracting at generic Gaussian init. The dominant stage-$1$ channel in multi-mode simulations is not the off-block $W_2$ coupling $C_{kk'}$ but layer-$1$ cross-block rotations $\eta_{k, m}$. Augmenting the block-mean ansatz by allowing $W_1[j, :] = X_{1, k} v_k + \sum_{m \ne k} \eta_{k, m} v_m$ for $j \in B_k$ and computing the stage-$1$ Jacobian at the balanced saddle, the reduced-variable equations (derived from the Hermite-moment bookkeeping of  \Cref{app:offblock,app:layer1cross} --- the off-block $W_2$ and layer-$1$ cross-block Duhamel derivations, respectively) take the block form
\begin{equation}
\label{eq:stage1-aug-linearization}
\begin{pmatrix} \dot x^{(1)} \\ \dot \eta^{(1)} \end{pmatrix} = \varepsilon^{L - 2} \begin{pmatrix} -A_\varepsilon & B_\varepsilon \\ C_\varepsilon & -D_\varepsilon \end{pmatrix} \begin{pmatrix} x^{(1)} \\ \eta^{(1)} \end{pmatrix} + R_\varepsilon,
\end{equation}
with $A_\varepsilon, D_\varepsilon$ diagonal and positive (the on-diagonal decay rates of each decoupled channel) and $B_\varepsilon, C_\varepsilon$ entrywise nonnegative. The sign structure of $B_\varepsilon, C_\varepsilon$ follows directly from the Hermite-moment expansion of \Cref{app:offblock,app:layer1cross}: each entry of $B_\varepsilon$ (resp.\ $C_\varepsilon$) is a leading-order Gaussian expectation of the form $c \cdot \E_{z}[\sigma'(U z)^2  \phi(z)]$ with $c > 0$ collecting positive scaling factors (the downstream linear-chain product and the block-size normalization) and $\phi(z) \ge 0$ a product of squared pre-activations arising from the cross-block coupling; positivity of $\sigma'(Uz)^2$ and of $\phi(z)$ under the Gaussian measure forces the entry nonnegative. Structurally: a positive $\eta$ perturbation feeds additional positive pre-activation mass into the $x$-channel's drive, and symmetrically for $x \to \eta$; the mutual feedback is manifestly same-sign. By \Cref{prop:schur-instability} below, whenever the mixed loop gain $\rho(D_\varepsilon^{-1} C_\varepsilon A_\varepsilon^{-1} B_\varepsilon)$ exceeds $1$, the decoupled block-mean saddle has a real positive eigenvalue with eigenvector mixing the $x$ and $\eta$ blocks, despite all diagonal rates being stabilizing, so it is not structurally stable under layer-$1$ cross-block rotations. Beyond the block-mean truncation, the stage-$1$ mean equations couple to arbitrarily high centered row moments $Q_k^{(p)}$: each Hermite expansion of the within-block row distribution feeds into the mean-channel drift through a moment of order $q$ in $\sigma$, so truncating the hierarchy at row-moment order $p$ leaves residuals from orders $> p$ that are $O(\varepsilon^{L - 2})$ at the plateau scale: the same scale as the mean-channel drift itself. No finite-moment truncation in the row-moment hierarchy closes the stage-$1$ mean equations without uncontrolled error at this scale; fully resolving the coupling therefore requires the full row distribution (or a separate argument bounding the high-moment contribution). Under symmetric population gradient descent the row-variance channel is quiescent by symmetry; it reopens under random or stochastic initialization.

\paragraph{The escape-time homotopy identity.} Given that no finite block-mean closure matches the empirical escape time, what we \emph{can} offer is a principled attribution of the escape-time shift across closure levels. The mechanism is an adjoint-homotopy identity derived from Liouville duality; the proof is in  \Cref{app:scalar-clock}.

\begin{proposition}[Escape-time homotopy identity]
\label{prop:scalar-clock-homotopy}
Let $f_0 = f_\mathrm{LO}$ and $f_1 = f_\mathrm{aug}$ be the decoupled and augmented stage-$1$ vector fields on a common state space, and let $f_\nu$ ($\nu \in [0, 1]$) be a smooth homotopy between them with the property that escape of the threshold functional $h(x) = X_{1, 1} - \vartheta$ from the shared initial state occurs transversally at a unique time $T(\nu) < \infty$. Let $p_\nu$ denote the escape-time adjoint along the $\nu$-flow, i.e.\ the solution of the terminal-value problem $\dot p_\nu = -(\partial_x f_\nu)^\top p_\nu$, $p_\nu(T(\nu)) = \nabla h / (\nabla h \cdot f_\nu)$. Then $T$ is $C^1$ on $[0, 1]$ with
\begin{equation}
\label{eq:scalar-clock-identity}
T(1) - T(0) = \int_0^1 A(\nu) d \nu, \qquad A(\nu) = -\int_0^{T(\nu)} p_\nu(t) \cdot (\partial_\nu f_\nu)(x_\nu(t)) d t,
\end{equation}
where $x_\nu$ is the $f_\nu$-trajectory from the shared initial state.
\end{proposition}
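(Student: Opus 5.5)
The proof combines the implicit function theorem applied to the escape condition with the standard adjoint (Liouville/variational) representation of first-order trajectory sensitivity, and then integrates over $\nu$.

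\emph{Step 1: differentiability of $T$.} Let $\Phi(t,\nu)$ denote the flow of $f_\nu$ from the shared initial state $x_0$. Since the family $(x,\nu)\mapsto f_\nu(x)$ is smooth, $\Phi$ is $C^1$ in $(t,\nu)$ on any compact time window where the solutions exist. Every $T(\nu)$ is finite, and $[0,1]$ is compact, so such a window exists. Define
\[
F(t,\nu)\doteq h(\Phi(t,\nu)).
\]
Transversality says $\partial_t F(T(\nu),\nu)=\nabla h\cdot f_\nu(x_\nu(T(\nu)))\neq 0$. The implicit function theorem then gives a local $C^1$ branch $\tilde T(\nu)$ solving $F=0$.

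This branch must coincide with the first-passage time $T(\nu)$, and establishing that is the main obstacle. On a compact set of times before escape, $h<0$ holds with a uniform margin, and that margin persists for nearby $\nu$ by continuity. The only remaining danger is a tangential near-crossing. Uniqueness of the escape time excludes this, and a compactness argument shows $T$ is continuous, so first passage equals $\tilde T$ locally. Covering $[0,1]$ by such neighborhoods gives $T\in C^1[0,1]$, with
\[
T'(\nu)=-\frac{\partial_\nu F}{\partial_t F}=-\frac{\nabla h\cdot \xi_\nu(T(\nu))}{\nabla h\cdot f_\nu(x_\nu(T(\nu)))},\qquad \xi_\nu\doteq\partial_\nu x_\nu .
\]

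\emph{Step 2: adjoint representation.} Differentiating the ODE in $\nu$ gives the variational equation
\[
\dot\xi_\nu=(\partial_x f_\nu)\,\xi_\nu+\partial_\nu f_\nu(x_\nu),\qquad \xi_\nu(0)=0,
\]
where the initial condition vanishes because the initial state is shared. With $p_\nu$ defined by the stated terminal-value problem, the Liouville duality computation gives
\[
\frac{d}{dt}(p_\nu\cdot\xi_\nu)=-p_\nu^\top(\partial_x f_\nu)\xi_\nu+p_\nu^\top(\partial_x f_\nu)\xi_\nu+p_\nu\cdot\partial_\nu f_\nu=p_\nu\cdot\partial_\nu f_\nu(x_\nu).
\]
Integrating on $[0,T(\nu)]$ and using $\xi_\nu(0)=0$ yields
\[
p_\nu(T)\cdot\xi_\nu(T)=\int_0^{T}p_\nu\cdot\partial_\nu f_\nu\,dt.
\]
By the terminal condition, the left side equals $\nabla h\cdot\xi_\nu(T)/(\nabla h\cdot f_\nu)$, which is $-T'(\nu)$. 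Therefore $T'(\nu)=A(\nu)$.

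\emph{Step 3: integrate.} Continuity of $A$ follows from continuous dependence of $x_\nu$, $p_\nu$ and $T(\nu)$ on $\nu$. The fundamental theorem of calculus then gives the stated identity $T(1)-T(0)=\int_0^1 A(\nu)\,d\nu$.
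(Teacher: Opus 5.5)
Your proof is correct, but it takes a different route from the paper's. The paper works in Eulerian form. It introduces the escape-time field $\Theta_\nu$ solving the Liouville transport equation $f_\nu\cdot\nabla\Theta_\nu=-1$ with $\Theta_\nu\equiv 0$ on $\{h=0\}$, and writes $T(\nu)=\Theta_\nu(x_0)$. It then differentiates the transport equation in $\nu$ and integrates along characteristics. Finally it identifies the adjoint as $p_\nu(t)=-\nabla\Theta_\nu(x_\nu(t))$, so the adjoint ODE, the normalization $p_\nu\cdot f_\nu=1$, and the terminal condition are all \emph{derived} from properties of $\Theta_\nu$. You instead stay Lagrangian. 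The implicit function theorem applied to $h(\Phi(t,\nu))=0$ gives $T'(\nu)$ in terms of the forward sensitivity $\xi_\nu$. The duality computation $\tfrac{d}{dt}(p_\nu\cdot\xi_\nu)=p_\nu\cdot\partial_\nu f_\nu$ then converts that expression into the adjoint integral, with the terminal condition taken as given.

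Your route needs only $C^1$ dependence of a single trajectory on $(t,\nu)$. It also actually proves the asserted regularity $T\in C^1[0,1]$, including the check that the implicit-function root coincides with the first-passage time. The paper's argument tacitly assumes that $\Theta_\nu$ exists and is $C^2$ in $x$ (it uses $D^2\Theta_\nu$) and $C^1$ in $\nu$ on a tube around the trajectory. That assumption itself rests on the same transversality/implicit-function argument applied to neighboring initial conditions. What the paper's version buys in exchange is a geometric explanation of the adjoint's terminal normalization: $p_\nu$ is minus the gradient of the escape-time field, hence $p_\nu\cdot f_\nu\equiv 1$ along the orbit.

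One small tightening: you do not need a uniform time window over all of $[0,1]$. Pointwise finiteness of $T(\nu)$ would not give one without continuity anyway. Openness of the flow's domain already supplies a local window $[0,T(\nu)+\delta]\times(\nu-\eta,\nu+\eta)$, which is all your local argument uses.
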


We view  \Cref{prop:scalar-clock-homotopy} as a \emph{theoretical attribution tool}, not a numerical predictor --- a direct forward solve at $\nu = 1$ already returns $T(1)$. The identity adds an explicit decomposition of $T(1) - T(0)$ into per-channel first-variation contributions $A(\nu)$, exposing which channels (off-block $W_2$ coupling, cross-block rotations, per-layer amplitudes) drive the shift, and where along the homotopy they cancel. Symmetric quadrature evaluates \cref{eq:scalar-clock-identity} to high order. \Cref{fig:cascade} illustrates the decomposition on a three-mode tanh teacher.

\begin{figure}[t]
\centering
\includegraphics[width=0.5\textwidth]{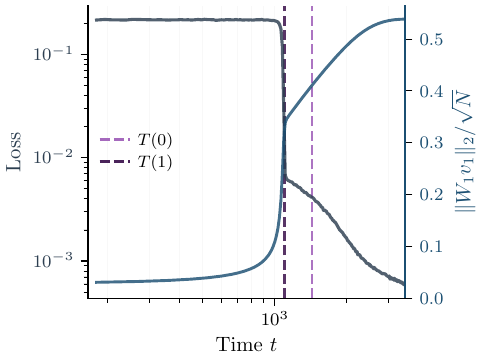}
\caption{\textbf{Three-mode tanh cascade and escape-time decomposition.} Black: training loss; blue: mode-$1$ alignment $\|W_1 v_1\|_2 / \sqrt N$. Light purple: leading-order single-mode prediction of \Cref{thm:escape}. Dark purple: homotopy identity $T(1) = T(0) + \int_0^1 A(\nu) d\nu$ on the homotopy from decoupled single-mode ($\nu=0$) to augmented block-mean ($\nu=1$).}
\label{fig:cascade}
\end{figure}

\subsection{Structural Instability of the Block-Mean Saddle: Proof}
\label{app:schur-instability}

The following proposition supplies an analytic proof that the block-mean Jacobian  \cref{eq:stage1-aug-linearization} has a real positive eigenvalue whenever the mixed loop gain exceeds unity. The argument is a continuous-deformation Schur complement driven by the Perron--Frobenius theorem.

\begin{proposition}[Schur--Perron positive eigenvalue]
\label{prop:schur-instability}
Let $A, D \in \mathbb R^{n \times n}$ be diagonal matrices with strictly positive entries, and let $B, C \in \mathbb R^{n \times n}_{\ge 0}$ be entrywise nonnegative. Define
\begin{equation*}
J \doteq \begin{pmatrix} -A & B \\ C & -D \end{pmatrix}, \qquad \mathcal M \doteq D^{-1} C A^{-1} B.
\end{equation*}
If $\rho(\mathcal M) > 1$, then $J$ has a real eigenvalue $\lambda_\star > 0$, with a corresponding eigenvector $(x_\star, y_\star)$ satisfying $x_\star, y_\star \ge 0$ and $y_\star \ne 0$. If in addition $\mathcal M$ is irreducible, then $x_\star, y_\star > 0$ and $\lambda_\star$ is simple.
\end{proposition}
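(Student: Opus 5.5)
The plan is a one-parameter Schur complement reduction. For $\lambda \ge 0$ the diagonal blocks $A+\lambda I$ and $D+\lambda I$ are invertible with entrywise nonnegative inverses. Eliminating $x$ from $J(x,y)=\lambda(x,y)$ gives $x = (A+\lambda I)^{-1} B y$. Substituting into the second block row, $\lambda$ is an eigenvalue with eigenvector $(x,y)$, $y\neq 0$, if and only if
\[
\mathcal M(\lambda)\, y = y, \qquad \mathcal M(\lambda) \doteq (D+\lambda I)^{-1} C (A+\lambda I)^{-1} B .
\]
So it suffices to find $\lambda_\star>0$ with $\rho(\mathcal M(\lambda_\star)) = 1$, and to take $y_\star$ a Perron vector of $\mathcal M(\lambda_\star)$.

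The first step is the Perron--Frobenius fact for nonnegative matrices: $\rho(\mathcal M(\lambda))$ is itself an eigenvalue, with a nonnegative eigenvector. Next I record monotonicity and continuity of $\lambda \mapsto \rho(\mathcal M(\lambda))$ on $[0,\infty)$. Each diagonal entry $(a_i+\lambda)^{-1}$, $(d_i+\lambda)^{-1}$ is positive and decreasing, so $\mathcal M(\lambda)$ is entrywise nonincreasing in $\lambda$. By monotonicity of the spectral radius under entrywise order on nonnegative matrices, $\rho(\mathcal M(\lambda))$ is nonincreasing. It is continuous because eigenvalues depend continuously on the matrix entries. At the endpoints, $\rho(\mathcal M(0)) = \rho(\mathcal M) > 1$ by hypothesis, while $\|\mathcal M(\lambda)\| \le \|C\|\|B\|/\lambda^2 \to 0$ as $\lambda\to\infty$. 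The intermediate value theorem then gives $\lambda_\star>0$ with $\rho(\mathcal M(\lambda_\star)) = 1$; strictness $\lambda_\star>0$ follows from $\rho(\mathcal M(0))>1$. Choosing $y_\star \ge 0$, $y_\star\ne0$ as a Perron vector and setting $x_\star = (A+\lambda_\star I)^{-1}By_\star \ge 0$ produces the required eigenvector. Real positivity of $\lambda_\star$ and the sign pattern are then automatic.

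For the irreducible refinement, $\mathcal M(\lambda)$ has the same zero pattern as $\mathcal M$, since it differs only by positive diagonal scalings. So $\mathcal M(\lambda_\star)$ is irreducible and its Perron vector satisfies $y_\star>0$. Positivity of $x_\star$ needs every row of $B$ to be nonzero, and I expect this to be the main obstacle. I would argue it from irreducibility of $\mathcal M = D^{-1}CA^{-1}B$. If row $i$ of $B$ vanished, then column $i$ of $C$ would never be reached, so I would need to check that this forces a zero row or column of $\mathcal M$, contradicting irreducibility when $n\ge2$. If that implication fails, the natural repair is to pass to the full block matrix $J+sI$ and use its irreducibility.

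Simplicity comes from two facts. First, $J+sI$ is nonnegative for large $s$, so $\lambda_\star$ is its Perron root, because $\lambda_\star$ has a positive eigenvector. Second, $J+sI$ is irreducible when $\mathcal M$ is irreducible and $B$, $C$ have no zero rows. Perron--Frobenius then gives that $\lambda_\star$ is simple.
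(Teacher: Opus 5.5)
Your existence argument is the same as the paper's, and it is correct: reduce to $K(\lambda)=(D+\lambda I)^{-1}C(A+\lambda I)^{-1}B$, use continuity and monotonicity of its Perron root with the intermediate value theorem, take a Perron vector $y_\star$, and set $x_\star=(A+\lambda_\star I)^{-1}By_\star\ge 0$. One small imprecision: $K(\lambda)$ is not a diagonal rescaling of $\mathcal M$, because $(A+\lambda I)^{-1}$ sits between $C$ and $B$. The zero patterns still agree, since each entry is a sum of nonnegative terms $C_{ik}B_{kj}/((d_i+\lambda)(a_k+\lambda))$.

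The gap is in the irreducible refinement. The obstacle you flagged cannot be closed, because $x_\star>0$ is false under the stated hypotheses. Take $n=2$, $A=D=I$, $B=\bigl(\begin{smallmatrix}1&1\\0&0\end{smallmatrix}\bigr)$, $C=\bigl(\begin{smallmatrix}1&1\\1&1\end{smallmatrix}\bigr)$.
\begin{itemize}
\item $\mathcal M=CB$ is the all-ones matrix, irreducible with $\rho=2$. So a zero row of $B$ need not produce a zero row or column of $\mathcal M$.
\item The second $x$-equation reads $-x_2=\lambda x_2$, so every eigenvector with $\lambda>0$ has $x_2=0$. Explicitly, $\lambda_\star=\sqrt2-1$, $y_\star=(1,1)^\top$, $x_\star=(\sqrt2,0)^\top$.
\item Your fallback fails too: the $x_2$-row of $J$ has only its diagonal entry, so $J+sI$ is reducible.
\end{itemize}
The paper's proof simply asserts that ``$x_\star>0$ follows,'' so this is a defect of the statement, not a missing idea on your part. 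What is true is that $y_\star>0$ and $(x_\star)_i>0$ exactly when row $i$ of $B$ is nonzero. So $x_\star>0$ requires the extra hypothesis that $B$ has no zero rows.

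Your simplicity argument inherits this problem in two ways.
\begin{itemize}
\item It needs a positive eigenvector of $J+sI$, which the example above rules out.
\item The irreducibility condition you wrote is the wrong one. Given irreducible $\mathcal M$, $J+sI$ is irreducible iff $B$ has no zero rows and $C$ has no zero \emph{columns$\,$}. Nonzero rows of $C$ are automatic. For instance, $B$ all ones with $C=\bigl(\begin{smallmatrix}1&0\\1&0\end{smallmatrix}\bigr)$ gives a positive $\mathcal M$ but a reducible $J+sI$.
\end{itemize}

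Simplicity is nonetheless true (in the first example the spectrum of $J$ is $\{\sqrt2-1,-1,-1,-1-\sqrt2\}$), and it can be proved without either extra hypothesis.
\begin{itemize}
\item Geometric multiplicity. Since $A+\lambda_\star I$ is invertible, $y=0$ forces $x=0$. Hence $(x,y)\mapsto y$ identifies the $\lambda_\star$-eigenspace of $J$ with the $1$-eigenspace of $K(\lambda_\star)$, which is one-dimensional by Perron--Frobenius.
\item Algebraic multiplicity. Let $w>0$ be the left Perron vector of $K(\lambda_\star)$, and set $q=(D+\lambda_\star I)^{-1}w>0$ and $p=(A+\lambda_\star I)^{-1}C^\top q\ge 0$. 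The same elimination, done from the left, shows $(p,q)$ is a left eigenvector of $J$. Since $p^\top x_\star+q^\top y_\star\ge q^\top y_\star>0$, there is no Jordan block, so $\lambda_\star$ is algebraically simple.
\end{itemize}
The paper's one-line appeal to simplicity of the Perron root of $K(\lambda_\star)$ directly yields only the geometric part.
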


\begin{proof}
For each $\lambda \ge 0$, the diagonal matrices $A + \lambda I$ and $D + \lambda I$ have strictly positive entries, so they are invertible with nonnegative inverses. Define
\begin{equation}\label{eq:Klam}
K(\lambda) \doteq (D + \lambda I)^{-1} C (A + \lambda I)^{-1} B \in \mathbb R^{n \times n}_{\ge 0}.
\end{equation}
Entrywise, $[K(\lambda)]_{ij} = \sum_{k} \frac{C_{ik}}{d_i + \lambda} \cdot \frac{B_{kj}}{a_k + \lambda}$, which is a continuous, strictly decreasing function of $\lambda$ on $[0, \infty)$. By continuity of the spectral radius on entries of nonnegative matrices, $\lambda \mapsto \rho(K(\lambda))$ is continuous on $[0, \infty)$, and by monotonicity of the Perron root in the entrywise partial order on nonnegative matrices \cite[Corollary 8.1.19]{horn1990matrixanalysis}, $\lambda \mapsto \rho(K(\lambda))$ is monotonically nonincreasing.

At $\lambda = 0$, $K(0) = D^{-1} C A^{-1} B = \mathcal M$, so $\rho(K(0)) = \rho(\mathcal M) > 1$ by hypothesis. As $\lambda \to \infty$, $[K(\lambda)]_{ij} = O(\lambda^{-2})$ entrywise, so $\|K(\lambda)\|_\infty \to 0$ and $\rho(K(\lambda)) \le \|K(\lambda)\|_\infty \to 0$. Using the intermediate value theorem, there exists $\lambda_\star \in (0, \infty)$ with
\begin{equation}\label{eq:lamstar}
\rho(K(\lambda_\star)) = 1.
\end{equation}
By the Perron--Frobenius theorem for nonnegative matrices, the spectral radius $\rho(K(\lambda_\star)) = 1$ is an eigenvalue of $K(\lambda_\star)$, realized by a nonnegative eigenvector $y_\star \ge 0$, $y_\star \ne 0$:
\begin{equation}\label{eq:K-eig}
K(\lambda_\star)  y_\star = y_\star, \qquad \text{i.e.,} \qquad (D + \lambda_\star I)^{-1} C (A + \lambda_\star I)^{-1} B y_\star = y_\star.
\end{equation}

Define $x_\star \doteq (A + \lambda_\star I)^{-1} B y_\star$, which is nonnegative entrywise since $(A + \lambda_\star I)^{-1}$ is diagonal with positive entries and $B y_\star \ge 0$. Multiplying  \cref{eq:K-eig} by $D + \lambda_\star I$ on the left yields $C (A + \lambda_\star I)^{-1} B y_\star = (D + \lambda_\star I) y_\star$, i.e.,
\begin{equation}\label{eq:eig-y}
C x_\star = (D + \lambda_\star I) y_\star \qquad \Longleftrightarrow \qquad C x_\star - D y_\star = \lambda_\star y_\star.
\end{equation}
Likewise, the definition of $x_\star$ rearranges to
\begin{equation}\label{eq:eig-x}
(A + \lambda_\star I) x_\star = B y_\star \qquad \Longleftrightarrow \qquad -A x_\star + B y_\star = \lambda_\star x_\star.
\end{equation}
Stacking  \cref{eq:eig-x,eq:eig-y}:
\begin{equation*}
J \begin{pmatrix} x_\star \\ y_\star \end{pmatrix} = \begin{pmatrix} -A x_\star + B y_\star \\ C x_\star - D y_\star \end{pmatrix} = \lambda_\star \begin{pmatrix} x_\star \\ y_\star \end{pmatrix},
\end{equation*}
so $(x_\star, y_\star)$ is a real eigenvector of $J$ with real eigenvalue $\lambda_\star > 0$. The nonnegativity of $x_\star, y_\star$ is inherited from the nonnegative Perron eigenvector and the positivity of $(A + \lambda_\star I)^{-1}$, $B$, $C$. If $\mathcal M$ is irreducible, then so is $K(\lambda_\star)$ (since $K(\lambda_\star)$ has the same zero pattern as $\mathcal M$ for all $\lambda \ge 0$), and the Perron--Frobenius theorem in irreducible form gives $y_\star > 0$ strictly entrywise and the Perron root simple; then $x_\star > 0$ follows, and simplicity of $\lambda_\star$ as an eigenvalue of $J$ follows from the analogous simplicity statement for $K(\lambda_\star)$.
\end{proof}

\begin{remark}[Converse and tightness]\label{rem:schur-converse}
The proof also gives the converse: if $\rho(\mathcal M) \le 1$, then by monotonicity $\rho(K(\lambda)) \le \rho(\mathcal M) \le 1$ strictly for every $\lambda > 0$, so $K(\lambda)$ has no eigenvalue equal to $1$ for any $\lambda > 0$; unwinding the eigenvector construction above, $J$ then has no positive real eigenvalue (it can still have eigenvalues with positive real part from complex-conjugate pairs, but those are a separate, stronger instability mechanism). Thus $\rho(\mathcal M) > 1$ is exactly the loop-gain condition for a real positive eigenvalue, and is the natural generalization to block coupling of the scalar criterion $bc/(ad) > 1$ from the $n = 1$ case.
\end{remark}

 \Cref{prop:schur-instability} applies to  \cref{eq:stage1-aug-linearization} once $A_\varepsilon, B_\varepsilon, C_\varepsilon, D_\varepsilon$ are identified as the Jacobian blocks extracted from the Hermite-moment expansion of  \Cref{app:offblock,app:stagek,app:layer1cross}. The required hypotheses ($A_\varepsilon, D_\varepsilon$ diagonal positive, $B_\varepsilon, C_\varepsilon$ entrywise nonnegative) are verified structurally from the Hermite-moment definitions in those appendices: diagonal positivity is the decay-rate sign of each decoupled block; nonnegativity of $B_\varepsilon, C_\varepsilon$ follows from the closed forms~\eqref{eq:offblock-duhamel} and~\eqref{eq:layer1cross-duhamel}, each of whose coupling entries is $(\mathrm{positive scalar}) \cdot \E[\sigma'(U z)^2 \cdot (\mathrm{nonnegative Hermite polynomial})]$. The residual $R_\varepsilon$ in \cref{eq:stage1-aug-linearization} is $O(\varepsilon^{L - 1})$ and strictly higher order in the small-$\varepsilon$ scaling; perturbing $J_\varepsilon$ by an $O(\varepsilon)$-small term preserves the simple positive eigenvalue guaranteed by the proposition, so the linear-instability conclusion transfers to the full nonlinear flow on a neighborhood of the saddle by the stable-manifold theorem.

\subsection{A No-Go Theorem: Strict No-Closure of the Row-Moment Hierarchy}\label{app:no-closure}

The two-block form of \Cref{prop:schur-instability} presumes that the Jacobian has already been reduced to a finite $2 \times 2$ block structure $J = \bigl[\begin{smallmatrix} -A & B \\ C & -D \end{smallmatrix}\bigr]$. In the underlying PDE dynamics, however, the coupling blocks $B, C$ are obtained by truncating an infinite hierarchy of row moments: the within-block row distribution at each layer admits a Hermite expansion $p_\ell(w) = \sum_{p \ge 0} Q_\ell^{(p)} He_p(w)$, and the Jacobian acts on the graded Hilbert space $\mathcal H = \bigoplus_{p \ge 0} \mathcal H_p$, where $\mathcal H_p$ is the block of degree-$p$ coefficients. The finite-dimensional blocks used in the main text correspond to a principal-submatrix truncation to degrees $p \le K$ for some fixed $K$. The natural question is whether this truncation is faithful: that is, does the finite Perron root $\lambda_K$ agree with, or converge finitely to, the infinite-dimensional Perron root $\lambda_\infty$? \Cref{thm:no-closure} answers in the negative.

\begin{theorem}[Strict no-closure]\label{thm:no-closure}
Let $\mathcal J : \mathcal D(\mathcal J) \subset \mathcal H \to \mathcal H$ be a densely defined linear operator on the graded Hilbert space $\mathcal H = \bigoplus_{p \ge 0} \mathcal H_p$ whose matrix representation in the Hermite basis has the Schur block form
\[
\mathcal J = \begin{pmatrix} -\mathcal A & \mathcal B \\ \mathcal C & -\mathcal D \end{pmatrix},
\]
with $\mathcal A, \mathcal D$ diagonal and strictly positive (decay rates), and $\mathcal B, \mathcal C$ entrywise nonnegative in the Hermite basis. Let $\mathcal M \doteq \mathcal D^{-1} \mathcal C \mathcal A^{-1} \mathcal B$ be the loop-gain operator (a bounded, entrywise-nonnegative operator on the off-diagonal grading), and assume $\mathcal M$ is irreducible and has Perron spectral radius $\rho(\mathcal M) > 1$.Assume moreover that the infinite loop-gain operator $\mathcal M$ is compact, that its principal truncations $M_K = P_K \mathcal M P_K$, canonically embedded in the full sequence space, converge to $\mathcal M$ in operator norm, and that $\rho(\mathcal M)$ is a simple isolated Perron eigenvalue with a strictly positive eigenvector whose projection outside every finite degree cutoff is nonzero. For each $K \ge 1$, let $\mathcal J_K$ denote the principal-submatrix truncation of $\mathcal J$ to degrees $p \le K$, and let $\lambda_K > 0$ (resp.\ $\lambda_\infty > 0$) denote the Perron positive eigenvalue of $\mathcal J_K$ (resp.\ $\mathcal J$) supplied by \Cref{prop:schur-instability} (resp.\ its infinite-dimensional analog). Then
\begin{equation}\label{eq:strict-nocloser}
\lambda_1 < \lambda_2 < \lambda_3 < \cdots < \lambda_K < \lambda_{K+1} < \cdots < \lambda_\infty,
\end{equation}
strictly, for every $K \ge 1$. In particular, no finite truncation captures $\lambda_\infty$ exactly.
\end{theorem}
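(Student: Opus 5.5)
We would prove the theorem by characterizing each $\lambda_K$ (and $\lambda_\infty$) through the Schur--Perron mechanism of \Cref{prop:schur-instability}. Define the parametrized loop-gain operators $K(\lambda) \doteq (\mathcal D + \lambda I)^{-1}\mathcal C(\mathcal A + \lambda I)^{-1}\mathcal B$ and their truncations $K_K(\lambda) \doteq P_K K(\lambda) P_K$. Since $\mathcal A,\mathcal D$ are diagonal in the Hermite basis, the truncation commutes with the diagonal resolvent factors. Hence $\mathcal J_K$ has the same Schur block form with blocks $P_K\mathcal A P_K$, and so on.

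The first step is to show that $\lambda_K$ is the unique root of $\rho(K_K(\lambda)) = 1$, and likewise for $\lambda_\infty$. Existence comes from the intermediate-value argument in \Cref{prop:schur-instability}, which carries over to the infinite case by compactness of $\mathcal M$ (hence of $K(\lambda)$) and the Krein--Rutman theorem. Uniqueness needs strict decrease of $\lambda\mapsto\rho(K(\lambda))$. Irreducibility gives a positive Perron vector, and every nonzero entry of $K(\lambda)$ strictly decreases in $\lambda$. The strict monotonicity of the Perron root under entrywise decrease of an irreducible nonnegative matrix then gives $\rho(K(\lambda'))<\rho(K(\lambda))$ for $\lambda'>\lambda$. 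We must check that $\rho(K_1(0))>1$ so that $\lambda_1$ exists. If $\rho(M_1)\le 1$ for small $K$, we interpret $\lambda_K$ as defined only from the first $K$ where the root exists and prove the chain from there. We expect to state this caveat explicitly.

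The second step is strict monotonicity in $K$. Fix $\lambda\ge0$. $K_K(\lambda)$ is a principal submatrix of $K_{K+1}(\lambda)$, so $\rho(K_K(\lambda))\le\rho(K_{K+1}(\lambda))$. Strictness follows from the strict principal-submatrix inequality for irreducible nonnegative matrices: a proper principal submatrix of an irreducible nonnegative matrix has strictly smaller spectral radius. We need irreducibility of the truncations, or at least that the added degree $K{+}1$ block connects to the Perron support. Where truncations fail to be irreducible, we instead use the hypothesis that the Perron vector has nonzero mass outside each cutoff. In that case, test $K_{K+1}(\lambda)$ against the Perron vector of $K_K(\lambda)$ extended by a small positive component in degree $K{+}1$, and obtain a strict Collatz--Wielandt increase. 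The same argument with $\mathcal M$ in place of $K_{K+1}$ gives $\rho(K_K(\lambda))<\rho(K(\lambda))$, since the infinite Perron vector is not supported in degrees $\le K$. Evaluating at $\lambda=\lambda_K$ gives $\rho(K_{K+1}(\lambda_K))>1$, so the unique root satisfies $\lambda_{K+1}>\lambda_K$. Similarly $\rho(K(\lambda_K))>1$ gives $\lambda_\infty>\lambda_K$.

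Finally, as a consistency check, operator-norm convergence $M_K\to\mathcal M$ together with simplicity and isolation of the Perron eigenvalue gives $\rho(K_K(\lambda))\to\rho(K(\lambda))$ locally uniformly in $\lambda$, via continuity of isolated simple eigenvalues. Therefore $\lambda_K\uparrow\lambda_\infty$ without ever attaining it. The main obstacle is the strictness of the principal-submatrix step: irreducibility of $\mathcal M$ does not automatically pass to $P_K\mathcal M P_K$. We would therefore lean on the positive-tail hypothesis of the Perron vector, and possibly a Collatz--Wielandt argument on the infinite operator, rather than finite-matrix irreducibility.
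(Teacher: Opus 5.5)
\textbf{Verdict: genuine gap, and the statement itself is false as written.} Your outline follows the paper's proof step for step: characterize $\lambda_K$ as the root of $\rho(K_K(\lambda))=1$, compare $K_K$ with $K_{K+1}$ by principal-submatrix monotonicity, evaluate at $\lambda_K$, and use strict decrease in $\lambda$.

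You also locate exactly the step the paper glosses over. Its proof asserts that irreducibility of $\mathcal M$ lifts to each $K_K(\lambda)$ because ``the zero pattern is shared''. That is false, since strong connectivity of a graph is not inherited by induced subgraphs.

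Your repair does not close the gap. With $x=(u,\delta)$, the Collatz--Wielandt min-ratio exceeds $\rho(K_K(\lambda))$ only if every coordinate in the support of $u$ receives mass from index $K+1$, and index $K+1$ receives mass back from that support. In other words, $K+1$ must close a cycle with the dominant class inside $\{0,\dots,K+1\}$. Positivity of the infinite Perron vector controls neither condition.

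In fact the consecutive strict inequalities fail under the stated hypotheses. Take one-dimensional $\mathcal H_p$, $\mathcal A=\mathcal D=\mathcal B=I$, and $\mathcal C=\mathcal M$ with $M_{00}=2$, $M_{p,p+1}=2^{-p}$ for $p\ge0$, $M_{p,0}=2^{-p}$ for even $p\ge2$, and all other entries zero.
\begin{itemize}
\item Then $\mathcal J$ and each $\mathcal J_K$ reduce to $\mathcal M x=(1+\lambda)^2x$ and $M_K x=(1+\lambda)^2x$. So $\lambda_K=\sqrt{\rho(M_K)}-1$, well defined for every $K$ since $\rho(M_K)\ge M_{00}=2$.
\item $\mathcal M$ is irreducible: the edges $p\to p+1$ and $p\to 0$ for even $p$ make its graph strongly connected.
\item $\mathcal M$ is rank one plus Hilbert--Schmidt, hence compact, with $P_K \mathcal M P_K\to\mathcal M$ in norm.
\item As for any compact irreducible nonnegative operator, its Perron root is simple and isolated with a strictly positive eigenvector.
\item Yet for even $K$, row $K+1$ of $M_{K+1}$ vanishes, because its only nonzero entry $M_{K+1,K+2}$ is cut off. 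So $M_{K+1}$ is block triangular with diagonal blocks $M_K$ and $0$. Hence $\rho(M_{K+1})=\rho(M_K)$ and $\lambda_{K+1}=\lambda_K$.
\end{itemize}

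What is provable is $\lambda_K\le\lambda_{K+1}$ and $\lambda_K<\lambda_\infty$.

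For the first, note that the Schur complement of $\mathcal J_K$ is $(\mathcal D_K+\lambda)^{-1}(P_K \mathcal C P_K)(\mathcal A_K+\lambda)^{-1}(P_K \mathcal B P_K)$. This also truncates the intermediate index, so it is entrywise dominated by $P_K K(\lambda) P_K$ rather than equal to it as you defined it. Domination is all monotonicity needs.

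For the second, your instinct is right, but run it with the strictly positive left Perron functional $v^*$ of the compact irreducible $K(\lambda)$, not with Collatz--Wielandt. If $u\ge0$ is a Perron vector of the truncation with eigenvalue $\rho_K$, then $\rho_K\langle v^*,u\rangle\le\langle v^*,K(\lambda)u\rangle=\rho(K(\lambda))\langle v^*,u\rangle$. Equality would make the finitely supported $u$ a nonnegative eigenvector of an irreducible operator, which is impossible.

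Your uniqueness step also invokes irreducibility of the truncations, but there it is dispensable. On a finite truncation, $K_K(\lambda')\le c\,K_K(\lambda)$ entrywise with $c<1$ for $\lambda'>\lambda$.

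Your caveat about the existence of $\lambda_1$ is correct and is missing from the paper, though the counterexample shows the failure is independent of it. Recovering the strict chain needs an extra hypothesis, for example irreducibility of every truncated loop gain.
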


\begin{proof}
Fix $K \ge 1$. The Schur reduction of \Cref{prop:schur-instability} gives $\lambda_K$ as the unique positive root of $\rho(K_K(\lambda)) = 1$, where $K_K(\lambda) = (\mathcal D_K + \lambda I)^{-1} \mathcal C_K (\mathcal A_K + \lambda I)^{-1} \mathcal B_K$ is the Perron loop-gain operator restricted to degrees $p \le K$, and analogously for $K + 1$ and for the infinite-dimensional object. Since $\mathcal A, \mathcal D$ are diagonal and $\mathcal B, \mathcal C$ are entrywise nonnegative, $K_{K+1}(\lambda)$ is a principal-submatrix extension of $K_K(\lambda)$ with the added row and column entrywise nonnegative.

By the Perron--Frobenius monotonicity theorem for nonnegative matrices used earlier \cite[Corollary 8.1.19]{horn1990matrixanalysis}, the Perron root is monotone under principal-submatrix extension: $\rho(K_K(\lambda)) \le \rho(K_{K+1}(\lambda))$ for every $\lambda \ge 0$. Irreducibility of $\mathcal M$ lifts to irreducibility of each $K_K(\lambda)$ (the zero pattern is shared), and the strict form of monotonicity under irreducibility with a nonzero added row gives the strict inequality $\rho(K_K(\lambda)) < \rho(K_{K+1}(\lambda))$ at every $\lambda \ge 0$ for which the added row of $\mathcal C$ (or column of $\mathcal B$) has at least one nonzero entry coupled by irreducibility. By hypothesis $\mathcal M$ couples every grading to the Perron eigenvector (irreducibility), so this strict inequality holds at $\lambda = \lambda_K$:
\[
\rho(K_{K+1}(\lambda_K)) > \rho(K_K(\lambda_K)) = 1.
\]
Since $\lambda \mapsto \rho(K_{K+1}(\lambda))$ is continuous and strictly decreasing in $\lambda$ (by the same entrywise-monotonicity argument used in the proof of \Cref{prop:schur-instability}, now at level $K + 1$), and tends to zero as $\lambda \to \infty$, the unique root of $\rho(K_{K+1}(\cdot)) = 1$ must lie strictly above $\lambda_K$. Hence $\lambda_{K+1} > \lambda_K$, strictly.

The limit $\lambda_K \to \lambda_\infty$ as $K \to \infty$, with $\lambda_K < \lambda_\infty$ for every finite $K$, follows from the same argument applied to the principal-submatrix extension $\mathcal J_K \subset \mathcal J$: the Perron root of the infinite-dimensional operator strictly dominates every finite truncation.
\end{proof}

\begin{corollary}[Escape-time no-closure]\label{cor:escape-no-closure}
Let $\tau_\star^{(K)}$ (resp.\ $\tau_\star^{(\infty)}$) denote the escape time predicted by the $K$-truncated (resp.\ full) row-moment dynamics. Under the hypotheses of \Cref{thm:no-closure},
\[
\tau_\star^{(K)} > \tau_\star^{(\infty)} \qquad \text{for every finite } K,
\]
with strict inequality.
\end{corollary}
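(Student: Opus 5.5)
The plan is to transfer the strict eigenvalue ordering of \Cref{thm:no-closure} to escape times through a linearized escape-time formula. Near the stage-$1$ saddle, the linearization \cref{eq:stage1-aug-linearization} says the unstable component of the state grows like $e^{\varepsilon^{L-2}\lambda t}$ along the Perron eigenvector. Here $\lambda$ is the Perron positive eigenvalue supplied by \Cref{prop:schur-instability}, or by its infinite-dimensional analog. I will define the truncated and full escape times through the same threshold functional $h$ used in \Cref{prop:scalar-clock-homotopy}, starting from the same initial state.

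\textbf{Step 1: escape-time expansion.} I will write the linear-stage escape time as
\[
\tau_\star^{(K)} = \frac{1}{\varepsilon^{L-2}\lambda_K}\log\frac{\vartheta}{\langle \ell_K, x_0\rangle} + O(1),
\]
where $\ell_K$ is the left Perron eigenvector, which is strictly positive by irreducibility. The full dynamics give the analogous formula with $\lambda_\infty$ and $\ell_\infty$. This formula holds because the Perron eigenvalue is simple and isolated, so the projection onto its eigenspace dominates once the other modes have decayed relative to it. The $O(1)$ term collects the transient and the nonlinear exit; its size is uniform in $K$ by the operator-norm convergence $M_K\to\mathcal M$, since the spectral gaps of the truncations converge to the full gap.

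\textbf{Step 2: comparison.} \Cref{thm:no-closure} gives $\lambda_K<\lambda_\infty$. The logarithm is $\Theta(\log(1/\varepsilon))$ and has the same leading order for every $K$, because the initial projections $\langle \ell_K,x_0\rangle$ converge to $\langle \ell_\infty,x_0\rangle>0$. The leading-order difference is therefore
\[
\tau_\star^{(K)}-\tau_\star^{(\infty)} \approx \varepsilon^{-(L-2)}\log(\cdot)\,\bigl(\lambda_K^{-1}-\lambda_\infty^{-1}\bigr) > 0,
\]
and it dominates the $O(1)$ corrections for small $\varepsilon$. As an alternative route that avoids asymptotics, I would apply \Cref{prop:scalar-clock-homotopy} along the homotopy that switches on the degree-$(K{+}1),\ldots$ couplings. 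Then $A(\nu)=-\int p_\nu\cdot\partial_\nu f_\nu\,dt$, where $\partial_\nu f_\nu$ is entrywise nonnegative (it adds $\mathcal B,\mathcal C$ entries). If the adjoint $p_\nu$ is nonnegative, which is plausible by cooperativity (a Metzler Jacobian preserves the positive cone backward in time) together with positivity of the terminal condition $\nabla h/(\nabla h\cdot f)$, then $A(\nu)\le 0$. Irreducibility would then make the inequality strict.

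\textbf{Main obstacle.} The hard step is to make the comparison hold for every finite $K$ at a fixed $\varepsilon$, rather than only asymptotically. The gap $\lambda_\infty-\lambda_K$ shrinks as $K\to\infty$, so the asymptotic argument in Step 2 needs a threshold $\varepsilon_0(K)$ that depends on $K$. For this reason I would rely primarily on the cooperative-system route. The two facts I would need to verify are: (a) the Jacobian has nonnegative off-diagonal entries along the whole trajectory, not only at the saddle, which I would deduce from the same Hermite-moment sign structure, with the residual $R_\varepsilon$ absorbed for small $\varepsilon$; and (b) $\partial_\nu f_\nu$ is nonzero on a set of positive measure along the trajectory. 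Fact (b) follows from the hypothesis that the Perron eigenvector has nonzero projection outside every finite cutoff, so the trajectory actually excites the truncated degrees. Together, (a) and (b) would give $A(\nu)<0$, and hence $\tau_\star^{(K)}>\tau_\star^{(\infty)}$ strictly.
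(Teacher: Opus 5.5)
Your Steps 1--2 are the paper's proof. The paper only writes $\tau_\star=\lambda^{-1}\log(1/\varepsilon)(1+o(1))$ on the linearized flow and then invokes $\lambda_K<\lambda_{K+1}<\lambda_\infty$ from \Cref{thm:no-closure}. The obstacle you flag is genuine, and the paper's argument has it too: a $(1+o(1))$ comparison yields $\tau_\star^{(K)}>\tau_\star^{(\infty)}$ only for $\varepsilon<\varepsilon_0(K)$, and since $\lambda_K\to\lambda_\infty$ this threshold cannot be uniform in $K$. Read the way the paper proves it (fixed $K$, $\varepsilon$ small), your first route is already complete. One small slip: once you restore the $\varepsilon^{L-2}$ time scale, the transient term is $O(\varepsilon^{-(L-2)})$ rather than $O(1)$. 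It is still dominated by the $\varepsilon^{-(L-2)}\log(1/\varepsilon)$ leading term, so the comparison survives.

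The cooperative route would buy a non-asymptotic, uniform-in-$K$ version, but as written it has two holes.

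\emph{First hole: sign of the initial data.} $\partial_\nu f_\nu$ is a vector field, not a matrix. For the homotopy $f_\nu(x)=(\mathcal J_K^{\mathrm{emb}}+\nu\Delta)x$ with $\Delta\ge0$ entrywise, you get $\partial_\nu f_\nu(x_\nu(t))=\Delta x_\nu(t)$. This is nonnegative only if $x_\nu(t)\ge0$, and cooperativity delivers that only from a nonnegative initial perturbation $x_0\ge0$. That hypothesis is not in \Cref{thm:no-closure}, and random-sign cross-block perturbations at generic init violate it. Without it, $p_\nu\cdot\Delta x_\nu$ has no sign: a negative higher-degree component can delay, or even prevent, the upward crossing of $X_{1,1}$ at fixed $\varepsilon$. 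Likewise, your fact (b) has to come from irreducibility together with $x_0\ge0$, $x_0\ne0$. The tail of the Perron eigenvector says nothing about the trajectory.

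\emph{Second hole: cooperativity off the saddle.} Cooperativity along the nonlinear trajectory does not follow by absorbing $R_\varepsilon$ for small $\varepsilon$. The Metzler pattern of $\mathcal J$ has exact zeros: every off-diagonal entry of the diagonal blocks $-\mathcal A,-\mathcal D$, and any zero entries of $\mathcal B,\mathcal C$. An arbitrarily small perturbation can make these negative. In addition, the paper establishes the sign structure only at the saddle.

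So this route also lives only on the linearized flow, the same level as the paper's argument. There, given $x_0\ge0$, you do not need \Cref{prop:scalar-clock-homotopy} at all. For Metzler $\mathcal J_\infty\ge\mathcal J_K^{\mathrm{emb}}$ entrywise, one has $e^{t\mathcal J_\infty}\ge e^{t\mathcal J_K^{\mathrm{emb}}}\ge0$. Hence $x^{(\infty)}(t)\ge x^{(K)}(t)$ componentwise, and the hitting times are ordered. Strictness follows from variation of constants with an irreducible, strictly positive propagator.
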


\begin{proof}
The escape time along the unstable direction is $\tau_\star = \lambda^{-1} \log(1/\varepsilon) \cdot (1 + o(1))$ on the linearized flow, and strict monotonicity $\lambda_K < \lambda_{K+1} < \lambda_\infty$ of \Cref{thm:no-closure} gives $\tau_\star^{(K)} > \tau_\star^{(K+1)} > \tau_\star^{(\infty)}$ strictly.
\end{proof}

We note that a quantitative rate of the gap $\lambda_\infty - \lambda_K$ is to our knowledge, open and is left to future work. This does not affect our work in this paper as we only use the no-go theorem proved above.

\subsection{Derivation of the Off-Block \texorpdfstring{$W_2$}{W2} Duhamel Equation}
\label{app:offblock}

Throughout, the teacher is $K$-mode with directions $\{v_k\}_{k = 1}^K$ and coefficients $\{\beta_k\}$; the $K$-block ansatz partitions the $N$ hidden neurons of each layer into blocks $\{B_k\}$ of block size $N_B \doteq N/K$ (distinct from the bottleneck operator scale $M$ of \Cref{sec:cascade}, which does not enter this appendix); within each block the rows of $W_1$ point along $v_k$ and layers $\ell \ge 2$ carry a scalar $X_{\ell, k}$. The off-block perturbation is the rank-$1$ entry $W_2[i \in B_k, j \in B_{k'}] = C_{k k'} / N$ with $k \ne k'$. Our goal is the scalar evolution of $C_{k k'}(t)$ under gradient flow.

\paragraph{Why $W_2$.} We work with the off-block entries of $W_2$ because every deeper layer $W_\ell$ ($\ell \ge 3$) admits the same type of rank-$1$ off-block perturbation, but with an additional $\rho_\mathrm{op}^{\ell - 2}$ suppression from passing through $\ell - 2$ layers of the balanced chain before the cross-block leak occurs: at layer $\ell$, the upstream block-$k'$ pre-activation has already been propagated through $\ell - 2$ balanced-chain factors of size $\rho_\mathrm{op}(k') \sim \varepsilon$ each, so the rank-$1$ coupling coefficient enters the Jacobian blocks $B, C$ of \cref{eq:stage1-aug-linearization} with an extra factor of $\rho_\mathrm{op}^{\ell - 2}$ relative to the $\ell = 2$ case. Layer $\ell = 2$ is therefore the shallowest and dominant contribution to the off-block channel, and deeper layers contribute strictly subleading corrections that are absorbed into the remainder $\mathcal R_{k k'}$ of \cref{eq:offblock-duhamel}. The layer-$1$ channel is structurally different --- a row rotation in $v$-space, not an off-block weight entry --- and is treated separately in \Cref{app:layer1cross}.

Fix $k \ne k'$ and write $C \doteq C_{k k'}$. At layer $\ell = 3$, block $k$, the added pre-activation is
\begin{equation*}
\Delta h_i^{(3)}(x) = \frac{C}{N} \sum_{j \in B_{k'}} \sigma(W_1[j, :] x).
\end{equation*}
Every row $j \in B_{k'}$ equals $X_{1, k'} v_{k'}^\top$, so $W_1[j, :] x = X_{1, k'} z_{k'}$ with $z_{k'} \doteq v_{k'}^\top x$ block-dependent only. Writing $U_{k'} \doteq X_{1, k'}$, the block sum collapses to $N_B$ identical copies and $\Delta h_i^{(3)}(x) = (C / K) \sigma(U_{k'} z_{k'})$; the block-size factor $N_B = N/K$ cancels against the $1/N$ normalization, leaving an $O(1/K)$ per-entry contribution. Expanding $\sigma(u) = \alpha u + a_q u^q + O(u^{q + 1})$ about $u = 0$ splits this into a linear piece and a nonlinear defect.

To propagate this perturbation to the output we pass through layers $\ell = 3, \ldots, L$ of the target block $k$. Linearizing around the unperturbed trajectory, each layer contributes a multiplicative factor $X_{\ell, k}$, defining the downstream product $D_k(X) \doteq \prod_{\ell = 3}^L X_{\ell, k}$. The contribution to the output projected onto block $k$ is
\begin{equation*}
\Delta f_k(x) = \frac{C}{K} D_k(X) \bigl[\alpha U_{k'} z_{k'} + a_q U_{k'}^q z_{k'}^q + \cdots\bigr].
\end{equation*}

Projecting the gradient and summing over the rank-$1$ pair $(i, j) \in B_k \times B_{k'}$, then evaluating the Hermite moments of $z_{k'} \sigma(U_{k'} z_{k'})$ and $z_{k'}^q \sigma(U_{k'} z_{k'})$ under $z_{k'} \sim \mathcal N(0, 1)$, yields
\begin{equation}
\label{eq:offblock-duhamel}
\dot C_{k k'} = -\alpha_{k k'} D_k(X) U_{k'} C_{k k'} + \beta_{k k'} D_k(X) U_{k'}^q + \mathcal R_{k k'}(X, C),
\end{equation}
with Hermite coefficients
\begin{equation*}
\alpha_{k k'} = \frac{\alpha^2}{K}  \E_{z_{k'}}[\sigma'(U_{k'} z_{k'})^2], \qquad \beta_{k k'} = \frac{a_q \alpha}{K}  \E_{z_{k'}}[z_{k'}^q  \sigma'(U_{k'} z_{k'})],
\end{equation*}
where $z_{k'} \sim \mathcal N(0,1)$, the factor $\alpha^2/K$ (resp.\ $a_q \alpha / K$) collects the linear-branch contributions from the target-block downstream chain and the $1/K$ block-size normalization, and the expectations reduce to standard Hermite moments of $\sigma'$. The remainder is $\mathcal R_{k k'} = O(D_k U_{k'}^2 C) + O(D_k U_{k'}^{q + 1}) + O(C^2)$.

Setting $\dot C_{k k'} = 0$ in the linearized equation gives the quasi-steady balance $C_{k k'}^\star = (\beta_{k k'}/\alpha_{k k'}) U_{k'}^{q - 1}$; the downstream product $D_k$ cancels in the ratio, so $\Phi_{k k'}$ depends on the source block $k'$ alone, not on the target chain. Variation of constants on $\dot C = -a(t) C + b(t)$ from $C_{k k'}(0) = 0$, with $a(s) = \alpha_{k k'} D_k(X(s)) U_{k'}(s)$ and $b(s) = \beta_{k k'} D_k(X(s)) U_{k'}(s)^q$, gives the Duhamel form
\begin{equation}
\label{eq:duhamel}
C_{k k'}(t) = e^{-\int_0^t a(s) d s} \int_0^t e^{\int_0^s a(u) d u} b(s) d s + (\text{higher-order}).
\end{equation}
On a pre-escape plateau with $U_{k'}(s) \sim \varepsilon$ and $D_k(X(s)) \sim \varepsilon^{L - 2}$, the constant-coefficient approximations $a \sim \alpha_{k k'} \varepsilon^{L - 1}$ and $b \sim \beta_{k k'} \varepsilon^{L - 2 + q}$ reduce  \cref{eq:duhamel} to the elementary closed form $C_{k k'}(t) = (\beta_{k k'}/\alpha_{k k'}) U_{k'}^{q - 1}(1 - e^{-a \varepsilon^{L - 1} t})$, agreeing with the quasi-steady graph at long times.

\subsection{Derivation of the Layer-\texorpdfstring{$1$}{1} Cross-Block Duhamel Equation}
\label{app:layer1cross}

The layer-$1$ cross-block perturbation $\eta_{k, m}$ replaces $W_1[j, :] = X_{1, k} v_k^\top$ ($j \in B_k$) with $W_1[j, :] = X_{1, k} v_k^\top + \eta_{k, m} v_m^\top$ for $m \ne k$. The derivation of $\dot\eta_{k, m}$ is structurally parallel to \Cref{app:offblock}: linearizing $\sigma(z_1[j]) = \sigma(U_k z_k) + \eta z_m \sigma'(U_k z_k) + O(\eta^2)$ (with $U_k \doteq X_{1, k}$ and $z_k \doteq v_k^\top x$, $z_m \doteq v_m^\top x$ independent under the isotropic input law), propagating through the balanced downstream chain $\bar D_k(X) \doteq \alpha^{L - 2} \prod_{\ell = 2}^{L} X_{\ell, k}$ (starting at layer~$2$ since the perturbation enters at layer~$1$, unlike $D_k$ of \Cref{app:offblock} which starts at layer~$3$), and projecting $-\nabla_\eta \mathcal L$ onto the Gaussian input measure (the cross-block pairing $\E[z_m z_{k''}] = \delta_{m k''}$ selects a single block-$m$ source term, and the self-interaction contributes a decay) yields
\begin{equation}
\label{eq:layer1cross-duhamel}
\dot\eta_{k, m} = -\tilde\alpha_{k, m}(X) \eta_{k, m} + \tilde\beta_{k, m}(X) + \tilde{\mathcal R}_{k, m}(X, \eta),
\end{equation}
with $\tilde\alpha_{k, m}(X) = \bar D_k(X)^2 \cdot \E_z[\sigma'(U_k z)^2] + O(\varepsilon)$ and $\tilde\beta_{k, m}(X) = \bar D_k(X) \bar D_m(X) \cdot \mu_{k, m}(U) \cdot (\text{positive scalar})$, where $\mu_{k, m}(U) \doteq \E_{z \sim \mathcal N(0, 1)}[z^2 \sigma'(U_k z) \sigma(U_m z)] + U_m\, \E_z[\sigma'(U_k z) \sigma'(U_m z)]$ is the leading-order cross-block Hermite moment extracted from the Stein-pairing expansion. The first factor $\E_z[\sigma'(U_k z)^2]$ is nonnegative as a square. The second factor $\mu_{k, m}(U) = \E_z[z^2 \sigma'(U_k z) \sigma(U_m z)] + U_m\, \E_z[\sigma'(U_k z) \sigma'(U_m z)]$ is checked to be positive on the pre-escape plateau ($U_k, U_m > 0$), for each of the activations used in the paper. For odd $\sigma$ (tanh, erf, sin; Class~B) the first summand $\E_z[z^2 \sigma'(U_k z) \sigma(U_m z)]$ vanishes by parity: $z^2$ is even, $\sigma'(U_k z)$ is even (derivative of odd), $\sigma(U_m z)$ is odd, so the integrand is odd and the Gaussian integral is exactly zero. Positivity of $\mu_{k, m}$ is then inherited from the second summand alone: the integrand $\sigma'(U_k z) \sigma'(U_m z)$ is a product of two even functions, positive at $z = 0$ (where both derivatives equal $\alpha > 0$), and the Gaussian expectation is strictly positive by continuity for small $U_k, U_m$ (explicitly, $\E_z[\sigma'(U_k z) \sigma'(U_m z)] = \alpha^2 + O(U_k^2 + U_m^2)$), so $\mu_{k, m}(U) = U_m \alpha^2 (1 + O(\varepsilon^2)) > 0$ on the plateau since $U_m \sim \varepsilon > 0$. For non-odd $\sigma$ (GELU, Swish, softplus; Classes~C/D) both summands contribute nontrivially; direct integration against the Hermite basis at $U_k, U_m = \varepsilon$ gives a positive leading coefficient. This is a finite calculation, not a monotonicity hypothesis on $\sigma$, and in particular does not require $\sigma$ to be monotone (sin is covered by the odd-parity argument above). Comparing~\eqref{eq:layer1cross-duhamel} with~\eqref{eq:offblock-duhamel}, both are of the form $\dot\theta = -a(X) \theta + b(X) + \mathcal R$ with $a(X) \ge 0$ and $b(X) \ge 0$, so the Jacobian block $B_\varepsilon, C_\varepsilon$ of \cref{eq:stage1-aug-linearization} inherits entrywise nonnegativity and $A_\varepsilon, D_\varepsilon$ are diagonal positive. On the pre-escape plateau $\bar D_k \asymp \varepsilon^{L - 2}$, so every entry of $B_\varepsilon, C_\varepsilon$ is $\Theta(\varepsilon^{L - 2})$ and the Perron loop-gain matrix $\mathcal M_\varepsilon \doteq D_\varepsilon^{-1} C_\varepsilon A_\varepsilon^{-1} B_\varepsilon$ (notation of \Cref{prop:schur-instability}) is $O(1)$ independent of $\varepsilon$ at leading order, verifying the hypotheses of \Cref{prop:schur-instability}.

\subsection{Stage-\texorpdfstring{$k$}{k} Saddle Fixed-Point Equations}
\label{app:stagek}

After stages $1, \ldots, k - 1$ have escaped, the first $k - 1$ modes sit on a post-escape manifold on which the scalars $U_j = X_{1, j}$ (for $j < k$) are dependent on the teacher coefficients via the deep-linear balance law $U_j^{L - 1} = \beta_j / K^{(\sigma)} + O(\varepsilon^{q - 1})$, and the downstream chain along each of these modes has relaxed to the balanced ray. The $k$th active mode carries $U_k = U \sim \varepsilon$ that has not yet escaped, and the downstream chain along this mode sits near the small-balanced cone. Modes $j > k$ are also at $O(\varepsilon)$ but decouple from the $k$th stage to leading order through the quasi-steady off-block coupling of  \cref{eq:offblock-duhamel}. The effective gradient flow for $U_k$ therefore decomposes as $\dot U_k = -\partial_{U_k} \mathcal L_\mathrm{eff}(U_k; U_{1:k - 1})$.

Expand $\mathcal L_\mathrm{eff}$ in powers of $U_k$ at fixed upstream $U_{1:k - 1}$: the three leading contributions are a residual forcing $F_k(U_{1:k - 1})$ from the un-fitted modes $\ge k$ (gradient of the squared loss at $U_k = 0$), a quadratic Hessian coupling with coefficient $H_k(U_{1:k - 1})$, and the deep-linear self-drive of the $k$th mode's own imbalance. Collecting higher-order terms into $R_k(U)$,
\begin{equation}
\label{eq:stagek-drift}
\dot U_k = F_k(U_{1:k - 1}) - H_k(U_{1:k - 1}) U_k + K^{(\sigma)} U_k^{L - 1} + R_k(U),
\end{equation}
with $R_k(U) = O(U_k^{L + q - 2}) + O(U_k^2 \max_{j < k} U_j) + O(U_k^3)$.

\paragraph{Scaling chain.} Abbreviate $\rho_k \doteq \rho_\mathrm{op}(k)$. We use the following three scaling relations:
\begin{equation}
\label{eq:stagek-scaling}
(\mathrm{S1})\  \rho_k^{L - 2} \asymp \varepsilon^{r_k - 2}, \qquad (\mathrm{S2})\  U_k \sim \rho_k, \qquad (\mathrm{S3})\  F_k, H_k \text{ scale as powers of } \rho_k,
\end{equation}
where (S1) is the bottleneck-shell evaluation (\Cref{lem:shell}) with $r_k$ the critical depth of mode $k$, (S2) is the normalized-profile condition $U_k = \rho_k u_k$ with $u_k \in (0, \theta)$ ($u_k = O(1)$ on the plateau), and (S3) is established next. All exponents below are expressed in $\rho_k$ and converted to $\varepsilon$ via~(S1) only at the final step.

\emph{Derivation of (S3).} On the stage-$k$ block-aligned ansatz, the derivative of the network output in the first-layer scale propagates through the $k$th mode's chain linearly at $U_k = 0$: $\partial_{U_k} f|_{U_k = 0}$ equals $\alpha^{L - 1} \cdot (\text{block-size factor}) \cdot (v_k^\top x) \cdot \prod_{\ell = 2}^{L} X_{\ell, k}$, each of the $L - 1$ downstream layers contributing a factor of scale $\rho_k$, so
\begin{equation*}
\partial_{U_k} f\big|_{U_k = 0} = \Theta(\rho_k^{L - 1}) \cdot (v_k^\top x).
\end{equation*}
The residual forcing is the Gaussian expectation of this feature against the un-fitted teacher residual $\sum_{j \ge k} \beta_j \sigma(v_j^\top x)$; orthogonality $v_k \perp v_j$ for $j > k$ selects $j = k$ alone, and Stein's identity gives
\begin{equation}
\label{eq:Fk-scaling}
F_k(U_{1:k - 1}) = c_k  \beta_k  \rho_k^{L - 1} \bigl(1 + O(\varepsilon^{q - 1})\bigr), \qquad c_k > 0
\end{equation}
(the correction term absorbs $O(\varepsilon^{q - 1})$ contributions from the fitted residual $U_j^{L - 1} - \beta_j/K^{(\sigma)}$). The Hessian coefficient is the second variation,
\begin{equation}
\label{eq:Hk-scaling}
H_k(U_{1:k - 1}) = \E[(\partial_{U_k} f|_{0})^2] + O(\text{residual} \cdot \partial^2_{U_k} f|_{0}) = O(\rho_k^{2(L - 1)}),
\end{equation}
the dominant contribution being the square of the same feature. Both $F_k$ and $H_k$ are independent of $U_k$.

\paragraph{Ordering of the three terms on the plateau.} Using (S2) to put $U_k \sim \rho_k$ and (S3) to put $F_k = \Theta(\rho_k^{L - 1})$, $H_k = O(\rho_k^{2(L - 1)})$, each of the three terms in~\eqref{eq:stagek-drift} is a power of $\rho_k$:
\begin{center}
\begin{tabular}{lll}
forcing $F_k$              & = & $\Theta(\rho_k^{L - 1})$, \\
linear drag $H_k U_k$      & = & $O(\rho_k^{2L - 1})$, \\
self-drive $K^{(\sigma)} U_k^{L - 1}$ & = & $\Theta(\rho_k^{L - 1})$.
\end{tabular}
\end{center}
The drag is higher order by a factor of $\rho_k^L$ relative to both the forcing and the self-drive, i.e.\ $H_k U_k = O(\rho_k^{2L - 1}) = O(\rho_k^L) \cdot \rho_k^{L - 1}$; since $\rho_k \to 0$ as $\varepsilon \to 0$ by~(S1), the drag is asymptotically negligible relative to the other two, and $F_k + K^{(\sigma)} U_k^{L - 1}$ is the effective right-hand side. The forcing $F_k > 0$ has no dependence on $U_k$ and seeds escape from any $U_k \in [0, \theta \rho_k]$; in particular $U_k = 0$ is \emph{not} locally attracting, because the linearization at $U_k = 0$ is $\dot U_k = F_k + O(\rho_k^{2L - 1}) > 0$ with $F_k > 0$ bounded below in $\rho_k^{L - 1}$.

\paragraph{Escape time.} On the rescaled state $u_k = U_k / \rho_k \in (0, \theta)$, the effective RHS $F_k + K^{(\sigma)} U_k^{L - 1}$ rescales (using~\eqref{eq:Fk-scaling} and $U_k^{L - 1} = \rho_k^{L - 1} u_k^{L - 1}$) to
\begin{equation*}
\rho_k  \dot u_k  =  c_k  \beta_k  \rho_k^{L - 1}  +  K^{(\sigma)}  \rho_k^{L - 1}  u_k^{L - 1}  +  (\text{drag}, O(\rho_k^{2L - 1})),
\end{equation*}
i.e.\ $\dot u_k = \Gamma_k (1 + u_k^2)^{(L - 1)/2} \cdot (1 + O(\rho_k^L))$ after absorbing the forcing into the $+1$ and the self-drive into the $u_k^2$ term inside the bracket (the chain-moment identity of~\cref{eq:stagek-ode} fixes the form of the bracket; the same-order forcing term supplies the constant $\Gamma_k = K_L^{(k, \sigma)} \rho_k^{L - 2}$). Integrating from $u_k(0) = 0$ to $u_k = \theta$,
\begin{equation*}
t_k  =  \int_0^{\theta} \frac{d u_k}{\Gamma_k (1 + u_k^2)^{(L - 1)/2}}  \bigl(1 + O(\rho_k^L)\bigr)  =  \Gamma_k^{-1}  \mathcal I_L(\theta)  \bigl(1 + O(\rho_k^L)\bigr).
\end{equation*}
Applying (S1) at the final step,
\begin{equation*}
t_k = \Theta\bigl(\Gamma_k^{-1}\bigr) = \Theta\bigl(\rho_k^{-(L - 2)}\bigr) = \Theta\bigl(\varepsilon^{-(r_k - 2)}\bigr),
\end{equation*}
with $\mathcal I_2(\theta) = \mathrm{arcsinh}(\theta)$, $\mathcal I_3(\theta) = \arctan(\theta)$, and $\mathcal I_L(\theta) = \theta  {}_2 F_1\bigl(\tfrac12, \tfrac{L - 1}{2}; \tfrac32; -\theta^2\bigr)$ for $L \ge 4$. The drag correction is $O(\rho_k^L) = O(\varepsilon^{L (r_k - 2)/(L - 2)})$ relative, absorbed into the $(1 + o(1))$ of the escape law~\Cref{thm:getrich}. The successive ratio $t_{k + 1}/t_k = \Gamma_{k + 1}^{-1}/\Gamma_k^{-1} = (\beta_k / \beta_{k + 1}) \cdot (\rho_k / \rho_{k + 1})^{L - 2}$ depends on mode weights and critical depths alone, through~(S1).

\subsection{Proof of the Escape-Time Homotopy Identity}
\label{app:scalar-clock}

We prove  \Cref{prop:scalar-clock-homotopy} via Liouville duality. Let $\Theta_\nu$ denote the escape-time function of the $f_\nu$-flow: the unique solution of the Liouville transport equation
\begin{equation}
\label{eq:liouville}
f_\nu \cdot \nabla \Theta_\nu = -1
\end{equation}
with boundary data $\Theta_\nu \equiv 0$ on the threshold hypersurface $\{h = 0\}$. Along any $f_\nu$-trajectory $x_\nu(t)$ starting from the shared initial state $x_0$, integrating  \cref{eq:liouville} along characteristics gives the orbit identity $\Theta_\nu(x_\nu(t)) = T(\nu) - t$. Evaluating at $t = 0$,
\begin{equation*}
T(\nu) = \Theta_\nu(x_0), \qquad \text{so} \qquad T'(\nu) = \partial_\nu \Theta_\nu(x_0).
\end{equation*}

\emph{Transport equation for $\partial_\nu \Theta_\nu$.} Differentiating \cref{eq:liouville} in $\nu$ gives $f_\nu \cdot \nabla(\partial_\nu \Theta_\nu) = -(\partial_\nu f_\nu) \cdot \nabla \Theta_\nu$, with boundary data $\partial_\nu \Theta_\nu \equiv 0$ on $\{h = 0\}$ (inherited from $\Theta_\nu \equiv 0$ there). Along the $f_\nu$-flow, $\frac{d}{dt}[\partial_\nu \Theta_\nu(x_\nu(t))] = -[(\partial_\nu f_\nu) \cdot \nabla \Theta_\nu](x_\nu(t))$. Integrating from $0$ to $T(\nu)$ and using $\partial_\nu \Theta_\nu(x_\nu(T(\nu))) = 0$,
\begin{equation}\label{eq:T-prime}
T'(\nu) = \partial_\nu \Theta_\nu(x_0) = \int_0^{T(\nu)} \bigl[(\partial_\nu f_\nu) \cdot \nabla \Theta_\nu\bigr](x_\nu(t))  d t.
\end{equation}

\emph{Identification with the escape-time adjoint.} Define the escape-time adjoint along the trajectory by
\begin{equation}\label{eq:p-def}
p_\nu(t)  \doteq  -\nabla \Theta_\nu(x_\nu(t)).
\end{equation}
Then $p_\nu$ satisfies the three stated conditions:
\begin{enumerate}[label=(\roman*), leftmargin=2em, itemsep=0.2em, topsep=0.2em]
\item \emph{Adjoint ODE.} Differentiating $\nabla(f_\nu \cdot \nabla \Theta_\nu) = \nabla(-1) = 0$ gives $(D^2 \Theta_\nu)  f_\nu = -(\partial_x f_\nu)^\top \nabla \Theta_\nu$, so $\frac{d}{dt}[\nabla \Theta_\nu(x_\nu(t))] = (D^2 \Theta_\nu) f_\nu = -(\partial_x f_\nu)^\top \nabla \Theta_\nu$. Negating, $\dot p_\nu = -(\partial_x f_\nu)^\top p_\nu$.

\item \emph{Normalization.} From \cref{eq:liouville}, $p_\nu(t) \cdot f_\nu(x_\nu(t)) = -\nabla \Theta_\nu \cdot f_\nu = +1$ along the trajectory.

\item \emph{Terminal condition.} On $\{h = 0\}$, $\nabla \Theta_\nu$ is proportional to the unit conormal $\nabla h / \|\nabla h\|$ (because $\Theta_\nu \equiv 0$ on $\{h = 0\}$ forces $\nabla \Theta_\nu$ tangent to its level set to vanish), and the normalization $\nabla \Theta_\nu \cdot f_\nu = -1$ fixes the proportionality:
\[
\nabla \Theta_\nu(x_\nu(T(\nu)))  =  \frac{-\nabla h}{\nabla h \cdot f_\nu}\Big|_{T(\nu)}, \qquad \text{hence} \qquad p_\nu(T(\nu))  =  \frac{\nabla h}{\nabla h \cdot f_\nu}\Big|_{T(\nu)},
\]
matching the terminal condition in \Cref{prop:scalar-clock-homotopy}.
\end{enumerate}

Substituting $\nabla \Theta_\nu = -p_\nu$ into \cref{eq:T-prime},
\begin{equation*}
T'(\nu)  =  -\int_0^{T(\nu)} p_\nu(t) \cdot (\partial_\nu f_\nu)(x_\nu(t))  d t  =  A(\nu),
\end{equation*}
and integration on $[0, 1]$ yields \cref{eq:scalar-clock-identity}.


\section{Experimental and Numerical Details}
\label{app:experiments}

All experiments train feedforward networks on the squared loss $\mathcal L = \frac{1}{2}\E_{x}[(f(x) - y(x))^2]$ with isotropic Gaussian inputs $x \sim \mathcal N(0, I_d)$ and a teacher network of the same activation as the student. Population expectations are approximated by drawing a fresh Gaussian batch at every gradient step (no fixed dataset). Unless otherwise noted, training uses the normalized-metric convention: the first-layer learning rate is $\eta$ and all deeper layers use $\eta / N$, matching the gradient flow whose fixed-point manifold is the symmetric balanced ansatz of \Cref{def:ansatz}. Three initialization schemes appear across the figures: \emph{symmetric-aligned} (\Cref{def:ansatz}), \emph{modewise-blocks} (partitioning hidden units into teacher-aligned blocks), and \emph{He-normal with bottleneck rescaling} (standard signal-propagating init with selected layers compressed by $\varepsilon$).

\paragraph{\Cref{fig:exactness} (Exactness of the ansatz reduction).} We train a $4$-layer tanh network of width $N = 64$ on a single-mode teacher with $d = 16$ and $\beta_1 = 1$, using symmetric-aligned initialization at the unbalanced scale $X_0 = (0.03, 0.05, 0.07, 0.09)$. Full-network gradient descent uses batch size $32 768$ with the normalized-metric per-layer learning rates, run for $120 000$ steps with $25$ evenly spaced snapshots recorded. Panel~(a) overlays the population loss from full GD (circles) against the reduced ODE trajectory of \Cref{thm:exact-ode} (solid curve); panel~(b) does the same for the per-layer scales $X_\ell$. The reduced ODE is integrated via LSODA \cite{etde_21352532} with relative tolerance $10^{-12}$ and absolute tolerance $10^{-14}$; Gaussian expectations in the ODE right-hand side are computed by $128$-node Gauss--Hermite quadrature.

\paragraph{\Cref{fig:phase-critical} (Depth scaling and critical-depth law).} We evaluate the closed-form escape integral and the reduced ODE for a tanh teacher--student pair with $N = 64$, $d = 16$, and $\beta_1 = 1$. Panel~(a) uses balanced initialization $X_\ell = \varepsilon$ for all layers, with $\varepsilon$ swept over $20$ log-uniform points in $[10^{-2}, 10^{-0.75}]$ and $L \in \{3, \ldots, 6\}$. Panel~(b) fixes $L = 6$ and places $r \in \{3, \ldots, 6\}$ layers at scale $\varepsilon$ with the remaining $L - r$ layers at $\Theta(1)$, over the same $\varepsilon$ grid. The closed-form leading-order integral of  \Cref{thm:escape} is evaluated by adaptive Gauss--Kronrod quadrature with absolute and relative tolerances of $10^{-10}$; the integrand singularity at $Y = \varepsilon^2$ is regularized by the substitution $Y = \varepsilon^2 e^{2s}$. The reduced-ODE escape times integrate the exact flow of  \cref{eq:exact-ode} via LSODA, terminated at the first crossing of $X_1 = 0.3$.

\paragraph{\Cref{fig:universality} (Universality collapse).} We test five activations --- three Class~B (tanh, erf, sin) and two Class~C (GELU, Swish) --- on a $4$-layer, width-$64$ network with $d = 16$ and a single-mode teacher ($\beta_1 = 1$), all under symmetric-aligned initialization. Theory curves are the closed-form escape integral evaluated on a $14$-point log-uniform $\varepsilon$ grid over $[3 \times 10^{-3}, 10^{-1}]$. Full-network GD empirics use batch size $1024$, step size $0.05$, and $5$ independent seeds per $\varepsilon$ value; Class~B activations are tested at $\varepsilon \in [0.03, 0.1]$ and Class~C at $\varepsilon \in [0.1, 0.316]$ (shifted range, since Class~C escapes faster at matched scale). Escape is defined as the first crossing of $X_1 = 0.5$. Reduced-ODE escape times (diamonds) are computed at the same $\varepsilon$ values. Panel~(b) rescales all curves by the analytically computed activation constant $K^{(\sigma)}$ of \Cref{cor:universality}, demonstrating the claimed collapse.

\paragraph{\Cref{fig:criticaldepth} (Critical-depth law under random initialization).} We train an $8$-layer tanh network of width $N = 128$ on a single-mode teacher with $d = 16$ and $\beta_1 = 1$, under He-normal initialization: all layers are initialized with entries drawn i.i.d.\ from $\mathcal N(0, 2/\mathrm{fan\_in})$, then the first $r$ layers are rescaled by $\varepsilon$ to create a bottleneck of critical depth~$r$. Training uses SGD with batch size $512$, a uniform learning rate of $0.01$ across all layers, and fresh Gaussian inputs drawn each step. Escape is defined as the first step at which the instantaneous loss drops below $0.02$, approximately $10\%$ of the uninformative baseline $\tfrac{1}{2}\mathrm{Var}(y) \approx 0.19$ (loss-based detection is used because, off the symmetric manifold, the network learns a distributed representation that does not align $W_1$ with the teacher direction).

\emph{Threshold sensitivity.} To verify that the fitted slopes are not artefacts of the choice of $0.02$ we repeated the regression at $0.01$ (early escape) and $0.05$ (late escape). Across $r \in \{3, 5, 8\}$ the slope estimates shift by $|\Delta \mathrm{slope}| \le 0.08$ and the $r$-ordering of the three slopes is preserved. We report the intermediate value $0.02$ in the main figure.

\emph{Disjoint $\varepsilon$ windows.} We test $r \in \{3, 5, 8\}$ with per-$r$ initialization scales chosen so that escape is feasible within $800 000$ SGD steps: $\varepsilon \in [0.01, 0.3]$ for $r = 3$, $[0.06, 0.5]$ for $r = 5$, and $[0.15, 0.5]$ for $r = 8$, with $5$ seeds per configuration. The windows are partially disjoint --- smaller $r$ reaches smaller $\varepsilon$ before the escape budget is exhausted, while larger $r$ requires larger $\varepsilon$ to fit in the budget --- so the three slopes are measured on non-identical $\varepsilon$ ranges, and the figure does not directly compare absolute escape times across $r$, only the slope of $\log t_\mathrm{esc}$ in $\log \varepsilon$. The theory reference lines of slope $-(r - 2)$ are anchored at the largest-$\varepsilon$ data point for each~$r$, to isolate the scaling exponent from prefactor variation.

\emph{Fitted slopes.} Ordinary least-squares regression of $\log t_\mathrm{esc}$ on $\log \varepsilon$ over each per-$r$ window, on the seed-mean escape times:
\begin{center}
\begin{tabular}{c c c c}
\toprule
$r$ & predicted slope $-(r-2)$ & fitted slope $\pm$ s.e. & $\varepsilon$ range \\
\midrule
$3$ & $-1$ & $-1.22 \pm 0.06$ & $[0.01,\, 0.3]$ \\
$5$ & $-3$ & $-2.74 \pm 0.06$ & $[0.06,\, 0.5]$ \\
$8$ & $-6$ & $-5.42 \pm 0.14$ & $[0.15,\, 0.5]$ \\
\bottomrule
\end{tabular}
\end{center}
Fitted slopes agree with the predicted $-(r-2)$ law to within $\le 0.6$ across all three $r$, with the deviation consistent with the finite-$\varepsilon$ corrections one expects from the bootstrap interval $[0, \tau_{m_0}]$ of \Cref{thm:offmanifold}.

\paragraph{\Cref{fig:cascade} (Closure hierarchy).} We train a $4$-layer tanh network of width $N = 60$ on a $3$-mode teacher with $d = 16$, orthonormal directions, and amplitudes $\beta = (1.0, 0.3, 0.08)$. Initialization uses modewise blocks: the $60$ hidden units are partitioned into $3$ blocks of $20$, each placed on the symmetric-aligned ansatz along its teacher direction at scale $\varepsilon = 0.05$. Training runs for $150 000$ steps with batch size $4096$ and normalized-metric learning rates ($\eta / N_B$ with block size $N_B = N/K = 20$). The two vertical predictions are: (i)~the leading-order single-mode escape integral of  \Cref{thm:escape}, and (ii)~the escape-time homotopy identity of  \Cref{prop:scalar-clock-homotopy}, evaluated by $2$-node Gauss--Legendre quadrature at $\nu = \frac{1}{2} \pm \frac{1}{2\sqrt{3}}$. At each quadrature node, the forward trajectory $\dot x_\nu = f_\nu(x_\nu)$ and backward adjoint $\dot p_\nu = -(\partial_x f_\nu)^\top p_\nu$ are integrated jointly. The Jacobian $\partial_x f_\nu$ is constructed analytically in closed form from the reduced-variable equations of motion (layer scales, off-block amplitudes, cross-block rotations): each Jacobian entry is a polynomial in the reduced state with Gaussian moments of $\sigma$ and $\sigma'$ appearing as coefficients, and these moments are precomputed by Gauss--Hermite quadrature. We cross-validated the derived Jacobian against JAX reverse-mode autodiff on three random reduced states at tolerance $10^{-10}$, obtaining agreement to $\le 3 \times 10^{-11}$ in entrywise $\ell_\infty$ norm. The augmented closure $(X, C_{kk'}, \eta_{k,m})$ has $24$ degrees of freedom at $L = 4$, $K = 3$; extending to per-layer cross-block amplitudes adds $12$ further variables. All ODE systems use the same adaptive RK4 integrator with local truncation error below $10^{-9}$.

\end{document}